\PassOptionsToPackage{cleveref}{jmlrutils}
\PassOptionsToPackage{capitalise,nameinlink,noabbrev}{cleveref}
\documentclass[final]{alt2027} 
\usepackage{times}
\usepackage[T1]{fontenc}
\usepackage[utf8]{inputenc}
\usepackage{mathtools}
\usepackage{microtype}
\usepackage{booktabs,array}
\usepackage{enumitem}
\usepackage{needspace}
\hypersetup{
  pdftitle={The Cost of Replicability in Stochastic Bandits},
  pdfauthor={Anonymous authors},
  pdfsubject={Anonymous ALT 2027 submission},
  pdfkeywords={replicability, stochastic bandits, regret, successive elimination},
  hidelinks
}

\crefname{theorem}{Theorem}{Theorems}
\crefname{lemma}{Lemma}{Lemmas}
\crefname{corollary}{Corollary}{Corollaries}
\crefname{proposition}{Proposition}{Propositions}
\crefname{definition}{Definition}{Definitions}
\crefname{section}{Section}{Sections}
\crefname{appendix}{Appendix}{Appendices}
\crefname{algocf}{Algorithm}{Algorithms}
\crefname{table}{Table}{Tables}
\Crefname{algocf}{Algorithm}{Algorithms}

\usepackage[dvipsnames,table]{xcolor}
\usepackage{algorithm}
\usepackage{algorithmic}
\usepackage{hyperref}
\usepackage{newtxtext}
\makeatletter
\def\@email{\hfill\small\mdseries\upshape}
\renewcommand*{\Email}[1]{{\@email\href{mailto:#1}{\textcolor{black}{#1}}}}
\makeatother

\makeatletter
\long\def\@makecaption#1#2{\vskip 10pt\setbox\@tempboxa\hbox{#1: #2}\ifdim\wd\@tempboxa>\hsize{\leftskip\z@\rightskip\z@\parfillskip\z@ plus 1fil\noindent#1: #2\par}\else\hbox to\hsize{\hfil\box\@tempboxa\hfil}\fi}

\renewcommand\ps@jmlrtps{%
  \let\@mkboth\@gobbletwo
  \def\@oddhead{}%
  \let\@evenhead\@oddhead
  \def\@oddfoot{\hfill\small\rmfamily\thepage\hfill}%
  \let\@evenfoot\@oddfoot}
\makeatother
\editor{}

\usepackage{booktabs}

\hypersetup{
    colorlinks=true,       
    linkcolor=Red,         
    citecolor=blue,        
    filecolor=magenta,     
    urlcolor=Blue          
}

\newcommand{\beq}{\begin{equation}}
\newcommand{\eeq}{\end{equation}}

\renewcommand\P{\mathbb{P}}
\newcommand\I{\mathbb{I}}

\newcommand{\x}{\mathbf{x}}
\newcommand{\y}{\mathbf{y}}
\newcommand{\z}{\mathbf{z}}

\renewcommand{\I}{\mathbb{I}}

\newcommand{\cC}{{\cal C}}
\newcommand{\cH}{{\cal H}}

\newcommand{\cP}{{\cal P}}

\newcommand{\cS}{{\cal S}}

\newcommand{\cF}{{\cal F}}

\newcommand{\cA}{{\cal A}}
\newcommand{\cB}{{\cal B}}
\newcommand{\cD}{{\cal D}}

\newcommand{\cE}{{\cal E}}

\DeclareMathOperator{\tr}{Tr}

\newtheorem{innerdef}{Definition}[section]

\newtheorem{theo}{Theorem}[section]
\newtheorem{prop}{Proposition}[section]
\newtheorem{lemm}{Lemma}[section]

\newtheorem{asmp}{Assumption}
\newtheorem{defn}[innerdef]{Definition}

\newcommand {\commentout}[1] {}

\def\ints{{{\rm Z} \kern -.35em {\rm Z} }}  
\def\smallints{{{\rm Z} \kern -.3em {\rm Z} }}  
\def\pints{{{\rm I} \kern -.15em {\rm N} }}      
\newcommand{\reals}{\mathbb R}

\def\cplx{{{\rm I} \kern -.45em {\rm C} }}       
\def\l2{\rm {\mathcal L}^{2}(\reals)}            

\newcommand{\be}{\begin{eqnarray}}
\newcommand{\ee}{\end{eqnarray}}
\newcommand{\bea}{\begin{eqnarray}}
\newcommand{\eea}{\end{eqnarray}}
\newcommand{\beaa}{\begin{eqnarray*}}
\newcommand{\eeaa}{\end{eqnarray*}}
\newcommand{\bnad}{\begin{nad}}
\newcommand{\enad}{\end{nad}}

\newcommand{\cO}{\mathcal{O}}

\newcommand{\IGNORE}[1]{}

\newcommand{\Reg}{\mathtt{Reg}}

{\begin{array}{l@{\hspace{8mm}}l@{\hspace{8mm}}l}}%
{\end{array}}
\newlength{\lplb}
\DeclareMathSymbol{\mhyphen}{\mathord}{AMSa}{"39}

\newcommand{\MaxInPwBC}{\ensuremath{\color{Green}\mathtt{MaxInP \mhyphen BC}}}

\newcommand{\BTIGWBC}{\ensuremath{\color{Green}\mathtt{BTIGW \mhyphen BC}}}

\newcommand{\Ncons}{N_{\mathrm{cons}}}

\providecommand{\dif}{\mathrm{d}}
\providecommand{\Ss}{\mathbf{S}}
\providecommand{\dd}{\mathbf{u}}
\providecommand{\I}{\mathbb{I}}

\usepackage{amsmath,amsfonts,bm}

\def\1{\bm{1}}

\DeclareMathAlphabet{\mathsfit}{\encodingdefault}{\sfdefault}{m}{sl}
\SetMathAlphabet{\mathsfit}{bold}{\encodingdefault}{\sfdefault}{bx}{n}

\newcommand{\E}{\mathbb{E}}

\newcommand{\R}{\mathbb{R}}

\DeclareMathOperator*{\argmax}{arg\,max}
\DeclareMathOperator*{\argmin}{arg\,min}

\title[Conservative Dueling Bandits: Preference Learning under Baseline Constraints]{ Conservative Dueling Bandits: \\
Preference Learning under Baseline Constraints}

\author[Deb \, Ghavamzadeh \, Banerjee]{%
  \Name{Rohan Deb} \Email{rd22@illinois.edu}\\
  \addr University of Illinois, Urbana Champaign
  \AND
  \Name{Mohammad Ghavamzadeh} \Email{mohammad.ghavamzadeh51@gmail.com}\\
  \addr Qualcomm AI Research
  \AND
  \Name{Arindam Banerjee} \Email{arindamb@illinois.edu}\\
  \addr University of Illinois, Urbana Champaign%
}

\begin{document}
\maketitle

\begin{abstract}
Contextual dueling bandits learn from pairwise comparisons, the same feedback used to fine-tune language models from human preferences. Standard algorithms explore aggressively and can show users two poor options. We study conservative contextual dueling bandits, which require that, at every round, the cumulative score of the better displayed item stay above a $(1-\alpha)$ fraction of a trusted baseline's. An item's score is its probability of beating the best available item, so the constraint depends only on utility differences and needs no absolute rewards. This also changes how safety can be certified: we show that transferring the safety test of the conservative bandit algorithm CLUCB unchanged can incur linear regret, even when the baseline's utilities are known exactly. For linear utilities, we develop $\MaxInPwBC$, which certifies scores through utility gaps and, when it must fall back, pairs the baseline with an informative partner so that fallback rounds still drive learning. It is safe with high probability at all times, with regret $\tilde{\mathcal O}(d\sqrt{T}/\kappa)$ plus an additive cost of conservatism that does not grow with the horizon. For general utility classes, we develop $\BTIGWBC$, which turns the guarantee of an online logistic-regression oracle into cumulative safety certificates. It is safe at all times, makes finitely many fallbacks, and has sublinear regret whenever the oracle's logistic regret is sublinear. In experiments, both satisfy the constraint, which their unconstrained counterparts often violate, at a moderate cost in regret.
\end{abstract}

\section{Introduction}
\label{sec:intro}
Many learning systems that interact with people cannot rely on numerical rewards. When users are asked to rate a search result, a recommended item, or a piece of generated text on an absolute scale, the scores they give are noisy, poorly calibrated across individuals, and expensive to collect at scale. Asking the same users which of two options they prefer is easier for them and yields a more consistent signal \citep{jones2011comparisons}. This has motivated a large body of work on learning from pairwise preferences \citep{BradleyTerry,furnkranz2010preference,yue2012karmed,wirth2017survey,bengs2021preference}. Its applications include online evaluation and optimization of search rankers \citep{joachims2002optimizing,radlinski2008does,yue2009interactively,chapelle2012large,hofmann2016online}, recommender systems \citep{jones2011comparisons,kalloori2016pairwise}, human-in-the-loop design \citep{brochu2007active,gonzalez2017preferential,tucker2020preference}, and learning from human comparisons \citep{christiano2017deep}, which has more recently become a standard way to fine-tune language models, either by reinforcement learning against a learned reward model \citep{ziegler2019fine,stiennon2020learning,ouyang2022training,bai2022training} or by optimizing the policy directly on the preference data \citep{rafailov2023direct}.

The dueling bandit problem gives a clean abstraction of this interaction and has been studied for finitely many actions \citep{yue2012karmed}. In each round, the learner selects a pair of actions, presents them to a user, and observes only a noisy binary outcome indicating which one was preferred. Because no single observation reveals how good an action is, the learner must infer quality from patterns of wins and losses. A common way to add structure is to posit a probabilistic choice model. Under the Bradley--Terry--Luce (BTL) model \citep{BradleyTerry,hunter2004mm,luce2005individual}, each action has a latent utility, and the probability that one action beats another is a logistic function of their utility difference (cf. Assumption~\ref{asmp:BTL-model}). The Plackett--Luce model extends this to comparisons among more than two items \citep{soufiani2014plackett}. Contextual dueling bandits \citep{saha2021maxinp} let the utility also depend on observed side information, such as a query, a user profile, or a prompt. They have led to algorithms with sublinear regret guarantees for linear and more general utility classes, the latter typically through reductions to online regression oracles \citep{sekhari2023aurora}, an approach that extends to neural function classes in contextual bandits with cardinal rewards \citep{deb2024contextual}. Constraints are far less understood under preference feedback. For dueling bandits with knapsack constraints, \citet{pmlr-v238-deb24a} show that relative feedback makes the problem harder than its bandit counterpart: without further assumptions on the budget, it is not learnable from a regret-minimization perspective.

However, all of this work measures performance by regret against the best comparison in hindsight. The resulting algorithms therefore explore as aggressively as they can. This sits uneasily with how preference-based learners are deployed in practice, because every comparison is shown to a real person. A user who is repeatedly offered two unappealing recommendations, or two unhelpful answers, has little reason to keep trusting the system, however informative those comparisons are for the learner. Moreover, a new learning system rarely starts from scratch. It usually replaces or augments an existing policy whose behavior is considered acceptable. In such settings, it is natural to require that algorithms avoid aggressive exploration, and ensure that the deployed policy performs at least comparably to an existing, trusted \emph{baseline strategy}. Concretely, we ask that at every step, the better of the two displayed items be, in aggregate, at least a $(1-\alpha)$ fraction as good as what the baseline would have offered.

For bandits with numerical rewards, this kind of safe exploration is captured by conservative bandits \citep{conservativeMAB,conservative_linear_bandits,Garcelon2020IA}. Conservative algorithms are now available for linear rewards under both optimism and Thompson sampling \citep{conservative_linear_bandits,deb2025thompson}, and for general reward classes accessed through online regression oracles \citep{deb2025conservative}. These methods keep a budget of permissible loss relative to the baseline: they lower-bound the cumulative reward of the actions played so far, credit the baseline's reward on rounds where it is played, and explore only when this bound stays above the target. That bookkeeping requires knowing how good the learner's actions are in absolute terms, and preference feedback does not provide it. A comparison reveals only the difference between two utilities, so a confidence set built from comparisons can leave the overall level of utility uncertain no matter how much data arrives. A lower bound on cumulative utility computed from such a set may then stay loose indefinitely. Indeed, we show that transferring CLUCB's safety test unchanged can incur linear regret, even when the baseline's utilities are known exactly. As with knapsack constraints \citep{pmlr-v238-deb24a}, relative feedback changes what a constrained learner can certify. This shapes both the constraint and how it is enforced. We measure each item by its probability of beating the best available item, which depends only on utility differences, and we certify safety through utility gaps rather than estimated payoffs. A second difficulty arises when the budget runs out. A conservative bandit then plays the baseline, but the dueling analogue, showing the baseline against itself, yields a fair coin flip and no information. A learner that falls back this way stops learning exactly when it is uncertain enough to need the fallback. Because our constraint scores a pair by its better item, any pair containing the baseline is safe whatever its partner. We therefore pair the baseline with an informative partner, so that fallback rounds still sharpen the certificates that govern future exploration.

In this work, we study \emph{Preference Learning under Baseline Constraints}. Our technical contributions are as summarized below.

\begin{enumerate}
    \item \textbf{Framework:} We formulate conservative contextual dueling bandits under Bradley--Terry feedback, requiring the cumulative preference score of the better displayed item to remain above a $(1-\alpha)$ fraction of the baseline's at every prefix. This score-based constraint is invariant to utility shifts and requires no absolute reward observations.

    \item \textbf{Linear utilities:} We show that a direct CLUCB transfer can incur linear regret and develop \MaxInPwBC{}, which certifies scores through utility gaps and uses informative baseline fallbacks. We prove anytime high-probability safety and regret $\tilde{\cO}\big(d\sqrt{T}/\kappa+\Delta_h d^2/(\kappa^2\alpha^2\underline{s}^{\,2})\big)$, where $\underline{s}=\sigma(-\Delta_h)$. The additive conservatism term is independent of the horizon, and known score bounds suffice in place of exact baseline scores.

    \item \textbf{General utility functions:} We develop \BTIGWBC{}, which converts online logistic-regression guarantees into cumulative score certificates and updates the oracle on both candidate and fallback comparisons. For realizable classes with oracle regret $\mathcal R_{\log}(T)=o(T)$, we prove anytime high-probability safety and regret $\tilde{\cO}\big(\sqrt{KT[\mathcal R_{\log}(T)/\kappa+1/\kappa^2]}+\Delta_hK[\mathcal R_{\log}(T)/\kappa+1/\kappa^2]/(\alpha^2\underline{s}^{\,2})\big)$.
    \item \textbf{Experiments:} On synthetic linear and general-utility instances, \MaxInPwBC{} and \BTIGWBC{} empirically satisfy the baseline constraint at every round. Their unconstrained counterparts violate it in early rounds in nearly all settings. Conservatism costs some extra regret over the unconstrained learners, concentrated where the constraint is tight, but far less than the linear regret of always playing the baseline.
\end{enumerate}

\section{Problem Formulation}
\label{sec:prelim}
We consider a stochastic \emph{Contextual Dueling Bandit} (CDB) problem. At each round $t \in [T]$, the learner is presented with a context set $\cS_t = \{\x_{t,1}, \ldots, \x_{t,K}\} \subset \cD$ of $K$ items (or arms), where each $\x_{t,i}$ is drawn i.i.d.\ from an unknown distribution $\cP_{\cD}$ over a decision space $\cD \subset \R^d$. The agent selects an \emph{ordered} pair $(\x_t, \y_t) \in \cS_t \times \cS_t$ (repetitions are allowed), and receives noisy binary feedback $o_t \in \{0,1\}$ indicating which item is preferred. In the context of LLMs, $(\x_t,\y_t)$ correspond to the two responses shown to the user, and $o_t$ to the user's recorded preference. Let $\cH_t^{-}$ denote all information available immediately before observing $o_t$, including all completed rounds, the current contexts and baseline information, and the selected pair. As in \cite{saha2021maxinp, bengs2022stochastic,dueling_feel_good}, we assume preferences are generated by a latent utility function $f : \cD \to \R$, through the Bradley--Terry--Luce (BTL) model.

\begin{asmp}[\textbf{BTL Model}]
\label{asmp:BTL-model}
    There exists a utility function $f:\cD\to\R$ such that the conditional preference probability satisfies
    \begin{align}
        \P(o_t=1\mid\cH_t^{-})
        &=
        \frac{\exp(f(\x_t))}
        {\exp(f(\x_t))+\exp(f(\y_t))}
        =
        \sigma\big(f(\x_t)-f(\y_t)\big),
        \label{eq:BTL-model}
    \end{align}
    where $\sigma(z)=1/(1+e^{-z})$ is the sigmoid function.
\end{asmp}
Two structural properties of \eqref{eq:BTL-model} will be useful. The comparison law depends only on utility differences and is invariant to adding a constant to every utility. A self-duel has success probability $1/2$, so its outcome provides no information about $f$.

\begin{asmp}[\textbf{Bounded utility range}]
\label{asmp:bounded}
    There is a known constant $B>0$ such that the absolute difference $|f(\x)-f(\y)| \le B$ for all $\x,\y \in \cD$. In the linear case $f(\x)=\langle\theta^\star,\x\rangle$ with $\|\theta^\star\|_2\le S$ and $\|\x\|_2\le L$, one may take $B=2LS$. We define the minimum slope of the logistic link on $[-B,B]$, $$\kappa := \inf_{|z|\le B}\sigma'(z) = \sigma(B)\big(1-\sigma(B)\big) > 0.$$

\end{asmp}

Let $\x_t^\star \in \argmax_{\x \in \cS_t} f(\x)$ denote a best item in round $t$, ties broken by a fixed rule. Next we define the regret of an algorithm following \cite{saha2021maxinp,verma2025neural}. It captures the cumulative gap between the best possible utility and the average utility of the learner's choices over $T$ rounds.

\textbf{Regret.} Suppose an algorithm chooses the pair $(\x_t,\y_t)$ at time $t\in[T]$. The instantaneous regret incurred at round $t$ is the gap between the utility of the best item and the average utility of the compared pair, denoted $R_t := f(\x_t^\star) - \tfrac{1}{2} \big( f(\x_t) + f(\y_t) \big)$, and the cumulative $T$-round regret is
\begin{align}
    \Reg(T) := \sum_{t=1}^T R_t = \sum_{t=1}^T \left( f(\x_t^\star) - \tfrac{1}{2} \big( f(\x_t) + f(\y_t) \big) \right).
    \label{eq:regret}
\end{align}
Next we define a performance constraint that the algorithm must satisfy so that the overall user experience stays above a threshold set by an existing trusted baseline policy.

\begin{defn}[\textbf{Baseline Performance Constraint}]
\label{def:performance}
    The \emph{preference score} of an item $\x \in \cS_t$ is its probability of beating the best item of the round: $s_t(\x) := \P\{\x \succ \x_t^\star\} = \sigma\big(f(\x)-f(\x_t^\star)\big) \in (0,\tfrac12].$
    Suppose the agent has access to baseline items $\x_{b_t} \in \cS_t$, $t\in[T]$, with known scores $s_t(\x_{b_t})$. Given a risk parameter $\alpha \in (0,1)$, a sequence of displayed pairs $(\x_i,\y_i), i \in [t]$ is \emph{$\alpha$-conservative} if
    \begin{align}
        \sum_{i=1}^t \max \left\{ s_i(\x_i), s_i(\y_i) \right\} \geq (1 - \alpha) \sum_{i=1}^t s_i(\x_{b_i}), \quad \text{for all } t \in [T].
        \label{eq:performance}
    \end{align}
\end{defn}
\begin{asmp}[\textbf{Baseline Gap and Bounds}]
\label{asmp:gapBounds}
    Let $\Delta_{t,b_t} := f(\x_t^\star) - f(\x_{b_t})$ denote the baseline gap. There are known constants $0\le \Delta_l \le \Delta_h \le B$ with $\Delta_{l} \leq \Delta_{t,b_t} \leq \Delta_h$ for all $t\in[T]$; equivalently
    \begin{align*}
        \underline{s} := \sigma(-\Delta_h) \;\le\; s_t(\x_{b_t}) \;\le\; \sigma(-\Delta_l) =: \bar{s} \;\le\; \tfrac12 .
    \end{align*}
\end{asmp}

\begin{remark}[\textbf{Why the maximum and why scores}]
    The constraint in \eqref{eq:performance} compares cumulative preference probabilities along the learner's trajectory, rather than realized comparison labels. Using the maximum ensures that any pair containing the baseline has score at least that of the baseline, allowing its partner to be selected for information acquisition.
\end{remark}

\section{Linear Utilities: Conservative Maximum-Informative-Pair}
\label{sec:linear}
\begin{asmp}[\textbf{Linear Utility}]
\label{asmp:linear}
    There is an unknown $\theta^\star\in\R^d$ with $\|\theta^\star\|_2\le S$ such that $f(\x)=\langle\theta^\star,\x\rangle$, and $\|\x\|_2\le L$, for every $\x\in\cD$. The learner optimises over $\Theta=\{\theta:\|\theta\|_2\le S\}$, so Assumption~\ref{asmp:bounded} holds with $B=2LS$ and $\kappa=\sigma(B)(1-\sigma(B))$.
\end{asmp}

In the unconstrained setting, \textsc{MaxInP} \citep{saha2021maxinp} maintains a set of items that may still be optimal and displays the most uncertain pair among them. Such a pair is informative, but nothing prevents both of its items from being poor. We therefore design \MaxInPwBC{} around a safety budget. The algorithm fits $\theta^\star$ by regularized maximum likelihood on the observed comparisons and uses the resulting confidence ellipsoid to identify promising items. It then proposes the most informative pair and displays it only if the baseline constraint remains certifiably satisfied. Otherwise, it falls back to a pair that contains the baseline item and is still informative for learning.

\textbf{Estimator.}  Each round the learner is given $\cS_t$, the baseline item $\x_{b_t}\in\cS_t$ and its score $s_t(\x_{b_t})$. The learner displays $(\x_t,\y_t)\in\cS_t\times\cS_t$, and observes $o_t$. Write $\z_t:=\x_t-\y_t$ for the displayed comparison vector, so $\|\z_t\|_2\le 2L$. We define
\begin{align}
    V_t &= \lambda I_d + \sum_{i<t}\z_i\z_i^\top,
    \\
    \hat\theta_t &\in \argmin_{\theta\in\Theta}\Big\{\sum_{i<t}\big[\log(1+e^{\z_i^\top\theta}) - o_i\z_i^\top\theta\big] + \tfrac\lambda2\|\theta\|_2^2\Big\},
    \label{eq:design}
\end{align}
with $\lambda>0$; the objective is strongly convex, so $\hat\theta_t$ is unique. A self-duel has $\z_t=0$ and contributes the constant $\log 2$ to the objective, leaving the design matrix and the minimizer unchanged.
For $\delta\in(0,1)$, set

\begin{align}
    \beta_t = \frac1\kappa\Big[\tfrac12\sqrt{d\log\big(1+\tfrac{4L^2(t-1)}{\lambda d}\big) + 2\log\tfrac1\delta} + \sqrt\lambda S\Big].
    \label{eq:beta}
\end{align}

Both the item selection and the safety certificate rely on $\theta^\star$ lying in the ellipsoid around $\hat\theta_t$ at every round. The following lemma guarantees this with high probability, uniformly over $t$.
\begin{lemm}[\textbf{Anytime confidence ellipsoid}]
\label{lem:ellipsoid}
    Under Assumptions~\ref{asmp:BTL-model} and \ref{asmp:linear}, $\P(\cE)\ge1-\delta$ for the event $\;\cE:=\{\|\hat\theta_t-\theta^\star\|_{V_t}\le\beta_t, \;\; \forall t\ge1\}$.
\end{lemm}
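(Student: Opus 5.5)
We follow the standard argument for regularized logistic MLE (as in Faury et al.\ and the GLM-bandit analysis of Li et al.), adapted to the constrained estimator over $\Theta$ and the design vectors $\z_i$. Define the gradient map $g_t(\theta):=\sum_{i<t}\sigma(\z_i^\top\theta)\z_i+\lambda\theta$. The noise is $\eta_i:=o_i-\sigma(\z_i^\top\theta^\star)$, which by Assumption~\ref{asmp:BTL-model} is conditionally zero-mean given $\cH_i^-$ and lies in $[-1,1]$. Bernoulli variables are $\tfrac12$-sub-Gaussian, which explains the factor $\tfrac12$ in \eqref{eq:beta}. Let $S_t:=\sum_{i<t}\eta_i\z_i$, where $\z_i$ is $\cH_i^-$-measurable.

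\textbf{Step 1 (martingale concentration).} Apply the self-normalized bound of Abbasi-Yadkori et al.\ with sub-Gaussian constant $\tfrac12$. With probability at least $1-\delta$, simultaneously for all $t\ge1$,
\[
\|S_t\|_{V_t^{-1}}\le \tfrac12\sqrt{2\log\big(\det(V_t)^{1/2}\det(\lambda I)^{-1/2}/\delta\big)}.
\]
Combining this with the determinant--trace inequality and $\|\z_i\|_2\le 2L$ gives $\log\det V_t-d\log\lambda\le d\log(1+4L^2(t-1)/(\lambda d))$. Hence $\|S_t\|_{V_t^{-1}}\le \tfrac12\sqrt{d\log(1+\tfrac{4L^2(t-1)}{\lambda d})+2\log\tfrac1\delta}$. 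Self-duels contribute $\z_i=0$ and change nothing. This is the single event $\cE$ on which everything else is deterministic, and it is where the anytime uniformity comes from.

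\textbf{Step 2 (link via mean value theorem).} For $\theta_1,\theta_2\in\Theta$ we have $g_t(\theta_1)-g_t(\theta_2)=G_t(\theta_1-\theta_2)$ with
\[
G_t=\sum_{i<t}\alpha_i\z_i\z_i^\top+\lambda I,\qquad \alpha_i=\int_0^1\sigma'\big(\z_i^\top(v\theta_1+(1-v)\theta_2)\big)\,dv .
\]
Since both points lie in $\Theta$, the argument satisfies $|\z_i^\top\theta|\le 2L\cdot S=B$, so $\alpha_i\ge\kappa$. Because $\kappa\le 1/4\le1$, we get $G_t\succeq \kappa V_t$, and therefore $\|\theta_1-\theta_2\|_{V_t}\le \kappa^{-1}\|g_t(\theta_1)-g_t(\theta_2)\|_{V_t^{-1}}$.

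\textbf{Step 3 (first-order optimality).} This is the main obstacle, because $\hat\theta_t$ is a constrained minimizer and $g_t(\hat\theta_t)$ need not equal $\sum_{i<t}o_i\z_i$. The KKT condition for minimizing the convex loss over the ball $\Theta$ gives $\langle g_t(\hat\theta_t)-\sum_{i<t} o_i\z_i,\ \theta-\hat\theta_t\rangle\ge0$ for all $\theta\in\Theta$, and in particular for $\theta=\theta^\star$. Together with the identity $g_t(\theta^\star)-\sum_{i<t} o_i\z_i=\lambda\theta^\star-S_t$, this yields
\[
(\hat\theta_t-\theta^\star)^\top G_t(\hat\theta_t-\theta^\star)\le \langle \lambda\theta^\star-S_t,\ \theta^\star-\hat\theta_t\rangle .
\]
Bound the left side below by $\kappa\|\hat\theta_t-\theta^\star\|_{V_t}^2$ using Step 2. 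Bound the right side above, via Cauchy--Schwarz in the $V_t^{-1}/V_t$ pairing, by $\big(\|S_t\|_{V_t^{-1}}+\lambda\|\theta^\star\|_{V_t^{-1}}\big)\|\hat\theta_t-\theta^\star\|_{V_t}$. Divide through, and use $\lambda\|\theta^\star\|_{V_t^{-1}}\le\sqrt\lambda\|\theta^\star\|_2\le\sqrt\lambda S$ since $V_t\succeq\lambda I$. This gives $\|\hat\theta_t-\theta^\star\|_{V_t}\le\kappa^{-1}\big(\|S_t\|_{V_t^{-1}}+\sqrt\lambda S\big)$.

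\textbf{Step 4 (conclusion).} Plugging in Step 1 gives exactly $\beta_t$ for every $t\ge1$ on the Step-1 event, so $\P(\cE)\ge1-\delta$. The $t=1$ case is trivial: $V_1=\lambda I$ and $\hat\theta_1=0$ give $\sqrt\lambda\|\theta^\star\|\le\sqrt\lambda S\le\beta_1$.
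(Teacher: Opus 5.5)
Your proposal is correct and follows essentially the same route as the paper: first-order optimality of the constrained minimizer tested at $\theta^\star$, the curvature bound $G_t\succeq\kappa V_t$ along the segment inside the convex ball $\Theta$, then Cauchy--Schwarz with $\lambda\|\theta^\star\|_{V_t^{-1}}\le\sqrt\lambda S$, and finally the anytime self-normalized bound with sub-Gaussian constant $\tfrac12$ combined with the trace--determinant inequality. The only difference is that you cite Abbasi-Yadkori et al.\ for the self-normalized bound, whereas the paper rederives it via a Gaussian-mixture supermartingale and Ville's inequality.
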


\emph{Proof sketch.} The residuals $o_t-\sigma(\z_t^\top\theta^\star)$ are conditionally $\tfrac12$-sub-Gaussian, so a mixture supermartingale with Ville's inequality gives an anytime self-normalised bound. First-order optimality of $\hat\theta_t$ and the curvature bound $\nabla^2\mathcal{L}_t\succeq\kappa V_t$ turn this into the stated parameter concentration (Appendix~\ref{app:ellipsoid}).

\textbf{Promising items and the regret certificate.} On $\cE$, we can discard any item that is certifiably worse than another, and bound the regret of the items that remain. Let
\begin{align}
    \cC_t = \big\{\x\in\cS_t : \langle\hat\theta_t,\x-\y\rangle + \beta_t\|\x-\y\|_{V_t^{-1}} \ge 0, \;\; \forall\y\in\cS_t\big\},
    \label{eq:promising}
\end{align}
and let the \emph{candidate pair} be the maximum-width pair of promising items,
\begin{align}
    (\tilde\x_t,\tilde\y_t) &\in\argmax_{\x,\y\in\cC_t}\|\x-\y\|_{V_t^{-1}},
    \\ w_t &= \|\tilde\x_t-\tilde\y_t\|_{V_t^{-1}},
    \\ c_t &= \min\{B,\,2\beta_t w_t\}.
    \label{eq:candidate}
\end{align}

Since $\x_t^\star$ is never eliminated on $\cE$, every promising item is close to it in the $V_t^{-1}$ geometry. The following lemma makes this precise: the width $w_t$ bounds the regret of any promising item, and in particular that of the candidate pair.
\begin{lemm}[\textbf{Regret certificate}]
\label{lem:certificate}
    $\cC_t\neq\emptyset$ for all $t$ and on $\cE$, $\x_t^\star\in\cC_t$ and $f(\x_t^\star)-f(\y)\le c_t$ for every $\y\in\cC_t$. In particular, the candidate pair has regret
    \[
        R_t^{\mathrm{cand}}
        := f(\x_t^\star)
        - \tfrac12\big(f(\tilde\x_t)+f(\tilde\y_t)\big)
        \le c_t.
    \]
\end{lemm}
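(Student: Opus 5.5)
The proof has three parts: non-emptiness of $\cC_t$ deterministically, membership of $\x_t^\star$ on $\cE$, and the gap bound on $\cE$. The regret statement then follows by averaging.

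\emph{Non-emptiness.} I would take $\hat\x_t\in\argmax_{\x\in\cS_t}\langle\hat\theta_t,\x\rangle$, which exists since $\cS_t$ is finite. For every $\y\in\cS_t$ we have $\langle\hat\theta_t,\hat\x_t-\y\rangle\ge0$. Adding the nonnegative bonus $\beta_t\|\hat\x_t-\y\|_{V_t^{-1}}$ keeps the expression nonnegative, so $\hat\x_t\in\cC_t$. This step uses no probabilistic event.

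\emph{Membership of $\x_t^\star$ on $\cE$.} For any $\y\in\cS_t$, write $\langle\hat\theta_t,\x_t^\star-\y\rangle=\langle\theta^\star,\x_t^\star-\y\rangle+\langle\hat\theta_t-\theta^\star,\x_t^\star-\y\rangle$. The first term is $f(\x_t^\star)-f(\y)\ge0$ by optimality of $\x_t^\star$. For the second, Cauchy--Schwarz in the dual norms $V_t$ and $V_t^{-1}$ together with \cref{lem:ellipsoid} gives $|\langle\hat\theta_t-\theta^\star,\x_t^\star-\y\rangle|\le\beta_t\|\x_t^\star-\y\|_{V_t^{-1}}$. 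Hence $\langle\hat\theta_t,\x_t^\star-\y\rangle+\beta_t\|\x_t^\star-\y\|_{V_t^{-1}}\ge0$ for every $\y$, so $\x_t^\star\in\cC_t$.

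\emph{Gap bound.} Fix $\y\in\cC_t$ and apply the defining inequality of $\cC_t$ with $\y$ as the promising item and $\x_t^\star$ as the comparator. This gives $\langle\hat\theta_t,\x_t^\star-\y\rangle\le\beta_t\|\x_t^\star-\y\|_{V_t^{-1}}$. Using the same decomposition and Cauchy--Schwarz step in the other direction,
\[
f(\x_t^\star)-f(\y)\le\langle\hat\theta_t,\x_t^\star-\y\rangle+\beta_t\|\x_t^\star-\y\|_{V_t^{-1}}\le 2\beta_t\|\x_t^\star-\y\|_{V_t^{-1}}.
\]
Since both $\x_t^\star$ and $\y$ lie in $\cC_t$, the pair $(\x_t^\star,\y)$ is feasible for the maximum-width problem in \eqref{eq:candidate}, so $\|\x_t^\star-\y\|_{V_t^{-1}}\le w_t$. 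This yields $f(\x_t^\star)-f(\y)\le2\beta_tw_t$. Separately, Assumption~\ref{asmp:bounded} gives $f(\x_t^\star)-f(\y)\le B$. Together these give the bound $c_t=\min\{B,2\beta_tw_t\}$.

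\emph{Regret statement.} Both $\tilde\x_t$ and $\tilde\y_t$ lie in $\cC_t$, so each has gap at most $c_t$. Averaging the two gaps gives $R_t^{\mathrm{cand}}\le c_t$.

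The only real subtlety is the step where $\x_t^\star$ must be in $\cC_t$ for the width $w_t$ to control $\|\x_t^\star-\y\|_{V_t^{-1}}$. That is exactly why membership is established first. Everything else is a direct Cauchy--Schwarz argument on $\cE$.
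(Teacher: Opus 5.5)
Your proposal is correct and follows essentially the same route as the paper: non-emptiness via the maximizer of $\langle\hat\theta_t,\cdot\rangle$, membership of $\x_t^\star$ via Cauchy--Schwarz on $\cE$, and the gap bound from the defining inequality of $\cC_t$ with $\x_t^\star$ as comparator. The paper then likewise uses the width bound $\|\x_t^\star-\y\|_{V_t^{-1}}\le w_t$, the cap at $B$, and averaging over $\tilde\x_t,\tilde\y_t$.
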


We next convert this utility-gap certificate into a lower bound on the score performance in \eqref{eq:performance}.

\begin{lemm}[\textbf{Score certificate}]
\label{lem:score-certificate}
    If a pair $(\x,\y)$ satisfies $f(\x_t^\star)-\tfrac12(f(\x)+f(\y))\le c$ for some $c\ge0$, then $\max\{s_t(\x),s_t(\y)\}\ge\sigma(-c)\ge\tfrac12-\tfrac c4$.
\end{lemm}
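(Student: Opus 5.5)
The plan is to reduce the maximum of the two scores to a statement about the better utility, and then use two elementary properties of the sigmoid. Since $\sigma$ is increasing and $s_t(\cdot)=\sigma\big(f(\cdot)-f(\x_t^\star)\big)$, we have
\[
\max\{s_t(\x),s_t(\y)\}=\sigma\big(\max\{f(\x),f(\y)\}-f(\x_t^\star)\big).
\]
The first step is to bound the larger of two numbers below by their average: $\max\{f(\x),f(\y)\}\ge\tfrac12(f(\x)+f(\y))$. With the hypothesis this gives
\[
\max\{f(\x),f(\y)\}-f(\x_t^\star)\ge-\big(f(\x_t^\star)-\tfrac12(f(\x)+f(\y))\big)\ge -c.
\]
Monotonicity of $\sigma$ then yields $\max\{s_t(\x),s_t(\y)\}\ge\sigma(-c)$, which is the first inequality.

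For the second inequality, $\sigma(-c)\ge\tfrac12-\tfrac c4$ for all $c\ge0$, I will use concavity of $\sigma$ on $[-\infty,0]$. Since $\sigma''(z)=\sigma'(z)(1-2\sigma(z))\ge0$ for $z\le0$, $\sigma$ is convex there, so it lies above its tangent at $0$. That tangent is $\tfrac12+\tfrac z4$, because $\sigma(0)=\tfrac12$ and $\sigma'(0)=\tfrac14$. Setting $z=-c\le 0$ gives $\sigma(-c)\ge\tfrac12-\tfrac c4$.

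Alternatively, I can set $g(c)=\sigma(-c)-\tfrac12+\tfrac c4$. Then $g(0)=0$ and $g'(c)=\tfrac14-\sigma'(-c)\ge0$, since $\sigma'\le\tfrac14$ everywhere.

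No step is a real obstacle. The only point needing care is the sign of the curvature: $\sigma$ is convex, not concave, on the negative half-line, and that is exactly what makes the tangent line a lower bound. The derivative argument through $g$ avoids this issue entirely, so I would present that version.
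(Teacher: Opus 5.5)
Your proof is correct and follows the paper's argument. The paper writes the maximum score as $\sigma$ of minus the smaller utility gap, bounds that gap by the average gap, which is at most $c$, and gets the second inequality from $\tfrac12-\sigma(-c)=\int_{-c}^{0}\sigma'(u)\,\mathrm{d}u\le c/4$; this is your derivative argument for $g$ in integrated form.
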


Indeed, with $g_\x:=f(\x_t^\star)-f(\x)\ge0$ and $g_\y$ likewise, $\max\{s_t(\x),s_t(\y)\}=\sigma(-\min\{g_\x,g_\y\})$ and $\min\{g_\x,g_\y\}\le\tfrac12(g_\x+g_\y)\le c$; the second inequality is $\sigma'\le\tfrac14$. The anchor is $s_t(\x_t^\star)=\tfrac12$, available without estimating any utility.

\textbf{Informative fallback.} When the safety test rejects the candidate, we retain the baseline item and select the promising partner with the largest comparison width:
\begin{align}
    &\y_t^{\mathrm{fb}}
    \in
    \argmax_{\y\in\cC_t}
    \|\x_{b_t}-\y\|_{V_t^{-1}},
    \\
    &\text{displaying }(\x_{b_t},\y_t^{\mathrm{fb}}).
    \label{eq:fallback}
\end{align}
Including the baseline guarantees that $\max\{s_t(\x_{b_t}),s_t(\y_t^{\mathrm{fb}})\}\ge s_t(\x_{b_t})$. The maximizing choice of partner also connects the displayed comparison to the candidate width. Since $\tilde\x_t,\tilde\y_t\in\cC_t$, the triangle inequality gives
\begin{align}
    w_t
    &\le
    \|\tilde\x_t-\x_{b_t}\|_{V_t^{-1}}
    +
    \|\x_{b_t}-\tilde\y_t\|_{V_t^{-1}}
    \le
    2\|\x_{b_t}-\y_t^{\mathrm{fb}}\|_{V_t^{-1}}.
    \label{eq:width-transfer}
\end{align}
Thus, the displayed comparison retains at least half the candidate width, whether or not $\x_{b_t}\in\cC_t$. Note that because the fallback pair contains the baseline, its one-round surplus under \eqref{eq:performance} is at least $\alpha s_t(\x_{b_t})$, regardless of its partner. The partner can therefore be chosen for information, and by \eqref{eq:width-transfer} it retains at least half the candidate width. Its regret is also controlled: since $\y_t^{\mathrm{fb}}\in\cC_t$, on $\cE$,
    \[
        R_t
        =
        \tfrac12\Big(
            \Delta_{t,b_t}
            +f(\x_t^\star)-f(\y_t^{\mathrm{fb}})
        \Big)
        \le
        \tfrac12(\Delta_h+c_t).
    \]
    The baseline contributes at most $\Delta_h/2$, and the partner's contribution is controlled by the same certificate $c_t$ used for candidate decisions.


\textbf{Safety budget.} The resulting algorithm, \MaxInPwBC{}, is given in Algorithm~\ref{alg:cmaxinp}. It maintains a computable quantity $\Lambda_t$, initialized at $\Lambda_0=0$, that lower-bounds the cumulative safety surplus on $\cE$. By Lemma~\ref{lem:score-certificate}, the candidate increment is $\tfrac12-\tfrac{c_t}{4}-(1-\alpha)s_t(\x_{b_t})$. A fallback is credited with $\alpha s_t(\x_{b_t})$, which lower-bounds its actual surplus because the displayed pair contains the baseline. Each update is determined by the history, current contexts, baseline score, and chosen pair before the comparison outcome is observed. The candidate is accepted exactly when its proposed update leaves $\Lambda_t$ nonnegative; otherwise, the fallback preserves nonnegativity. On $\cE$, these per-round lower bounds therefore imply the cumulative safety constraint.
\begin{algorithm}[t]
\caption{\MaxInPwBC{} (Conservative Maximum-Informative-Pair)}
\label{alg:cmaxinp}
\begin{algorithmic}[1]
\REQUIRE risk level $\alpha$, confidence $\delta$, regularisation $\lambda$, bounds $S,L$
\STATE Initialise $\Lambda_0\gets0$, $V_1\gets\lambda I_d$
\FOR{$t=1,2,\dots$}
  \STATE Observe $\cS_t$, baseline $\x_{b_t}$ and its known score $s_t(\x_{b_t})$; compute $\hat\theta_t,\beta_t$ by \eqref{eq:design}--\eqref{eq:beta}
  \STATE Form $\cC_t$ by \eqref{eq:promising}; compute $(\tilde\x_t,\tilde\y_t)$, $w_t$, $c_t$ by \eqref{eq:candidate}
  \IF{$\Lambda_{t-1} + \tfrac12 - \tfrac{c_t}{4} - (1-\alpha)s_t(\x_{b_t}) \ge 0$}
     \STATE Display $(\x_t,\y_t)\gets(\tilde\x_t,\tilde\y_t)$ and set $\Lambda_t\gets\Lambda_{t-1}+\tfrac12-\tfrac{c_t}4-(1-\alpha)s_t(\x_{b_t})$ \COMMENT{candidate}
  \ELSE
     \STATE Display $(\x_t,\y_t)\gets(\x_{b_t},\y_t^{\mathrm{fb}})$ from \eqref{eq:fallback} and set $\Lambda_t\gets\Lambda_{t-1}+\alpha s_t(\x_{b_t})$ \COMMENT{fallback}
  \ENDIF
  \STATE Observe $o_t$; set $\z_t\gets\x_t-\y_t$ and $V_{t+1}\gets V_t+\z_t\z_t^\top$
\ENDFOR
\vspace{2ex}
\end{algorithmic}
\end{algorithm}

Let $\cA_T,\cB_T$ be the candidate and fallback rounds up to $T$, with $M_T=|\cA_T|$ and $N_T=|\cB_T|$. Since $\Lambda_{t-1}\ge0$ by construction and $s_t(\x_{b_t})\le\tfrac12$, the candidate increment is at least $\tfrac\alpha2-\tfrac{c_t}{4}$. Consequently, every round with $c_t\le2\alpha$ is accepted, and any rejection requires $c_t>2\alpha$. This criterion applies separately at each round: the certificate can change with the contexts, estimates, and confidence radius, so acceptance at one round does not imply acceptance at all subsequent rounds.

\textbf{Main result.} With $\underline{s}:=\sigma(-\Delta_h)$ and $\bar\beta_t:=\max\{B/2,\beta_t\}$, define certificate envelope and the fallback bound
\begin{align}
    G(t)
    &:=
    4\bar\beta_t
    \sqrt{
        2dt\log\Big(1+\frac{4L^2t}{\lambda d}\Big)
    },
    \\
    \Ncons
    &:=
    \sup\big\{
        t\in\mathbb N_{\ge1}:
        4\alpha\underline{s}\,t<G(t)
    \big\},
    \label{eq:envelope}
\end{align}
with $G(0)=0$ and $\sup\emptyset:=0$. The function $G$ is nondecreasing. For fixed problem parameters and confidence level, $G(t)=O(\sqrt t\log(et))=o(t)$, so $\Ncons<\infty$. 

\begin{theo}[\textbf{Anytime safety and regret}]
\label{thm:linear}
    Let Assumptions~\ref{asmp:BTL-model}, \ref{asmp:bounded}, \ref{asmp:gapBounds}, and \ref{asmp:linear} hold, and run Algorithm~\ref{alg:cmaxinp} with $\alpha,\delta\in(0,1)$ and $\lambda>0$. With probability at least $1-\delta$, simultaneously for every $T\ge1$, the displayed pairs are $\alpha$-conservative in the sense of \eqref{eq:performance}, the fallback count satisfies $N_T\le\min\{T,\Ncons\}$, and $\Reg(T) \le G(T)+\tfrac12\Delta_h\Ncons$.
    When $L,S,\lambda$ are treated as fixed constants, suppressing logarithmic factors in the remaining parameters gives
    \begin{align}
        \Reg(T)
        =
        \tilde\cO\left(
            \frac{d\sqrt T}{\kappa}
            +
            \frac{\Delta_h d^2}
            {\kappa^2\alpha^2\underline{s}^{\,2}}
        \right).
        \label{eq:regret_linear}
    \end{align}
\end{theo}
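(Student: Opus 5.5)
We work throughout on the event $\cE$ of Lemma~\ref{lem:ellipsoid}, which has probability at least $1-\delta$ and covers all $t$ at once. All claims will be deterministic consequences of $\cE$, so they hold simultaneously for every $T$.

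\textbf{Safety.} We prove by induction that $\Lambda_t\le\sum_{i\le t}\big[\max\{s_i(\x_i),s_i(\y_i)\}-(1-\alpha)s_i(\x_{b_i})\big]$ and $\Lambda_t\ge0$.

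On a candidate round, Lemma~\ref{lem:certificate} gives $R_t^{\mathrm{cand}}\le c_t$. Lemma~\ref{lem:score-certificate} then gives a better-item score of at least $\tfrac12-\tfrac{c_t}4$, so the true increment dominates the credited one. Nonnegativity holds by the acceptance test.

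On a fallback round, the pair contains $\x_{b_t}$, so the true increment is at least $\alpha s_t(\x_{b_t})\ge0$. This is exactly the credit.

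Together these give \eqref{eq:performance} at every prefix.

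\textbf{Regret decomposition.} Split $\Reg(T)$ over $\cA_T$ and $\cB_T$.

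On candidate rounds, $R_t\le c_t$.

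On fallback rounds, $R_t\le\tfrac12(\Delta_h+c_t)$. By \eqref{eq:width-transfer},
\[
c_t\le\min\{B,\,4\beta_t\|\z_t\|_{V_t^{-1}}\}.
\]

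On candidate rounds the displayed $\z_t=\tilde\x_t-\tilde\y_t$, so $c_t\le 2\beta_t\|\z_t\|_{V_t^{-1}}$ directly.

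So in every round
\[
c_t\le\min\{B,4\beta_t\|\z_t\|_{V_t^{-1}}\}\le 4\bar\beta_t\min\{1,\|\z_t\|_{V_t^{-1}}\},
\]
using $B\le 4\bar\beta_t$ since $\bar\beta_t\ge B/2$. Apply Cauchy--Schwarz and the elliptical potential lemma with $\|\z_t\|\le2L$:
\[
\sum_{t\le T}\min\{1,\|\z_t\|^2_{V_t^{-1}}\}\le 2d\log\!\big(1+\tfrac{4L^2T}{\lambda d}\big).
\]
Since $\bar\beta_t$ is nondecreasing, this gives $\sum_{t\le T}c_t\le G(T)$.

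Hence $\Reg(T)\le G(T)+\tfrac12\Delta_h N_T$, where the $\tfrac12$ factor on the fallback $c_t$ terms only helps.

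\textbf{Fallback count, the main obstacle.} We need $N_T\le\Ncons$. Let $\tau$ be the last fallback round up to $T$, with $n=N_\tau=N_T$.

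Since every fallback round has $\Lambda_{t-1}\ge0$, the quantity $\Lambda_{\tau-1}$ telescopes. It is at least the sum of the fallback credits plus the candidate credits:
\[
\Lambda_{\tau-1}\ge \alpha\underline{s}(n-1)+\sum_{t\in\cA_{\tau-1}}\big(\tfrac\alpha2-\tfrac{c_t}4\big),
\]
using $\tfrac12-(1-\alpha)s\ge\tfrac\alpha2\ge\alpha\underline{s}$.

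Rejection at $\tau$ means $\Lambda_{\tau-1}+\tfrac\alpha2-\tfrac{c_\tau}4<0$. Combining the two, and adding back the fallback $c_t$'s (which are nonnegative):
\[
\alpha\underline{s}\,n\le\alpha\underline{s}(n-1)+\tfrac\alpha2<\tfrac14\sum_{t\le\tau}c_t\le\tfrac14G(\tau).
\]
This is the step where the per-round bound $c_t\le4\bar\beta_t\min\{1,\|\z_t\|\}$ on all rounds, including fallbacks, is essential.

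The candidate rounds contribute $\tfrac\alpha2$ each, so also
\[
\alpha\underline{s}\,\tau\le\alpha\underline{s}\,n+\tfrac\alpha2\,|\cA_{\tau-1}|.
\]
Therefore $4\alpha\underline{s}\,\tau<G(\tau)$, so $\tau\le\Ncons$ and hence $n\le\tau\le\Ncons$.

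\textbf{Rate.} We have $G(T)=\tilde\cO(d\sqrt T/\kappa)$ because $\beta_t=\tilde\cO(\sqrt d/\kappa)$. Solving $4\alpha\underline{s}\,t<C(d/\kappa)\sqrt t\log t$ gives $\Ncons=\tilde\cO\big(d^2/(\kappa^2\alpha^2\underline{s}^2)\big)$. Substituting both into $\Reg(T)\le G(T)+\tfrac12\Delta_h\Ncons$ yields \eqref{eq:regret_linear}.
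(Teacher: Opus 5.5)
Your proposal is correct and follows the paper's proof essentially step for step. You work on $\cE$ with the budget invariant for safety, bound $c_t\le4\bar\beta_t\min\{1,\|\z_t\|_{V_t^{-1}}\}$ on both round types via \eqref{eq:width-transfer}, apply the elliptical-potential envelope, and expand the failed acceptance test at a fallback round. The one slip is presentational: in the fallback-count step, your first displayed chain drops the $\tfrac\alpha2|\cA_{\tau-1}|$ term, which you then need to conclude $4\alpha\underline{s}\,\tau<G(\tau)$; keep it in the combined inequality $\alpha\underline{s}(n-1)+\tfrac\alpha2|\cA_{\tau-1}|+\tfrac\alpha2<\tfrac14\sum_{t\le\tau}c_t$, as the paper does.
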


\emph{Proof sketch.} Work on $\cE$, which has probability at least $1-\delta$ by Lemma~\ref{lem:ellipsoid}; all guarantees hold simultaneously on this event.

\begin{enumerate}

\item[(a)] \textbf{Envelope.} On a candidate round, $c_t=\min\{B,2\beta_t\|\z_t\|_{V_t^{-1}}\}$. On a fallback round, \eqref{eq:width-transfer} gives $c_t\le\min\{B,4\beta_t\|\z_t\|_{V_t^{-1}}\}$. Thus, in both cases,
\(
    c_t
    \le
    4\bar\beta_t
    \min\{1,\|\z_t\|_{V_t^{-1}}\}.
\)
Cauchy--Schwarz and the elliptical-potential bound
\(
    \sum_{i=1}^{t}
    \min\{1,\|\z_i\|_{V_i^{-1}}^2\}
    \le
    2d\log\Big(1+\frac{4L^2t}{\lambda d}\Big)
\)
yield $\sum_{i=1}^{t}c_i\le G(t)$. This includes the certificate computed on a rejected round, since the displayed fallback comparison controls that certificate through \eqref{eq:width-transfer}.

\item[(b)] \textbf{Safety.} On candidate rounds, Lemma~\ref{lem:score-certificate} shows that the true one-round surplus is at least the budget increment. On fallback rounds, including the baseline guarantees a true surplus of at least $\alpha s_t(\x_{b_t})$, again the budget increment. Induction therefore shows that $\Lambda_t$ lower-bounds the true cumulative surplus. Candidate updates preserve $\Lambda_t\ge0$ by the acceptance test, and fallback updates add a positive quantity, proving safety.

\item[(c)] \textbf{Fallbacks.} Expanding the failed acceptance test at a fallback round $\tau$, and using the fact that $\tfrac12-(1-\alpha)s_i(\x_{b_i})\ge\alpha/2$ and $\alpha s_i(\x_{b_i})\ge\alpha\underline{s}$, gives
\[
    \frac{\alpha}{2}(M_{\tau-1}+1)
    +
    \alpha\underline{s}N_{\tau-1}
    <
    \frac14\left(
        \sum_{i\in\cA_{\tau-1}}c_i+c_\tau
    \right)
    \le
    \frac14G(\tau).
\]
Because $\underline{s}\le1/2$ and $M_{\tau-1}+N_{\tau-1}+1=\tau$, the left side is at least $\alpha\underline{s}\tau$. Hence every fallback time satisfies $4\alpha\underline{s}\tau<G(\tau)$ and is at most $\Ncons$, proving $N_T\le\min\{T,\Ncons\}$.

\item[(d)] \textbf{Regret.} Lemma~\ref{lem:certificate} gives $R_t\le c_t$ on candidate rounds. On fallback rounds, $\y_t^{\mathrm{fb}}\in\cC_t$ gives $R_t\le(\Delta_h+c_t)/2$. Therefore,
\[
    \Reg(T)
    \le
    \sum_{t=1}^{T}c_t+\frac{\Delta_h}{2}N_T
    \le
    G(T)+\frac{\Delta_h}{2}\Ncons.
\]
For fixed $L,S,\lambda$, the simplified rate follows from $\bar\beta_t=\tilde\cO(\sqrt d/\kappa)$ and the defining inequality for $\Ncons$. Details are given in Appendix~\ref{app:linear-proofs}. \hfill$\square$

\end{enumerate}

\begin{remark}[\textbf{An additive cost of conservatism}]
    The leading term in the regret bound in \eqref{eq:regret_linear} has the same dependence on $d$ and $T$ as unconstrained \textsc{MaxInP} \citep{saha2021maxinp}, while the additional term $\Delta_h\Ncons/2$ is independent of $T$ for fixed parameters. 
\end{remark}
\begin{remark}[\textbf{Why not transfer CLUCB directly}]
\label{rem:clucb}
    For bandits with numerical rewards, CLUCB \citep{conservative_linear_bandits} certifies safety by lower-bounding cumulative candidate rewards with a confidence set, and it credits known baseline rewards separately. Under preference feedback, the design matrix in \eqref{eq:design} accumulates comparison vectors $\z_t=\x_t-\y_t$, whereas average utilities depend on $(\x_t+\y_t)/2$. A comparison-based confidence set can therefore retain uncertainty in a common utility component that comparisons never reveal. Proposition~\ref{prop:obstruction} (Appendix~\ref{app:obstruction}) gives an i.i.d.\ two-dimensional instance on which this transfer incurs $\Reg(T)=\Omega(T)$ with high probability, even with exact baseline utilities and arbitrary candidate selection. The obstruction concerns the unchanged certification rule: known baseline utilities would also permit a baseline-centered cardinal certificate. \MaxInPwBC{} instead certifies scores through utility gaps (Lemmas~\ref{lem:certificate} and~\ref{lem:score-certificate}), which depend only on pairwise differences. Through \eqref{eq:width-transfer}, it also ties the uncertainty in this certificate to the comparisons displayed on both candidate and fallback rounds.
\end{remark}

\begin{remark}[\textbf{Relaxing baseline-score knowledge}]
\label{rem:baseline-score-bounds}
    For ease of presentation, we assume that baseline scores are known exactly. Appendix~\ref{app:baseline-score-bounds} relaxes this assumption by modifying the safety budget to use lower and upper score bounds together with the known range $[\underline{s},1/2]$. 
\end{remark}

\section{General Utility Functions}
\label{sec:general}

Linear utilities can be restrictive when items are described by rich features. We now let $f$ lie in a general class $\mathcal F$, accessed only through an online logistic-regression oracle. In the unconstrained setting, \citet{sekhari2023aurora} sample comparisons from an inverse-gap-weighted (IGW) distribution built from the oracle's predictions. Without a confidence ellipsoid, we cannot certify each candidate individually as in Lemma~\ref{lem:certificate}. We therefore develop \BTIGWBC{} (Algorithm~\ref{alg:gf-igw}) and certify safety cumulatively: the oracle guarantee yields a computable envelope on total regret, and a candidate is displayed only if the accumulated score credits cover it. Otherwise, the algorithm falls back to a baseline-containing pair whose partner is still drawn by IGW.

\begin{asmp}[\textbf{Realizability and logistic oracle}]
\label{asmp:gf-oracle}
    The utility $f$ belongs to a class $\mathcal F$ whose functions satisfy $|g(\x)-g(\y)|\le B$ on $\cD$. Before its $m$-th comparison $(\x_m,\y_m)$ is drawn, the oracle outputs $\hat f_m$ with the same range bound; $\hat f_m$ may use the current item set and need not lie in $\mathcal F$. With $\ell_m(g)=\log(1+e^{g(\x_m)-g(\y_m)})-o_m[g(\x_m)-g(\y_m)]$, every fresh instance satisfies, on every adaptive sequence and for all $n\ge1$,
    \begin{align}
        \sum_{m=1}^n\ell_m(\hat f_m)-\inf_{g\in\mathcal F}\sum_{m=1}^n\ell_m(g)\le\mathcal R_{\log}(n),
        \label{eq:gf-oracle}
    \end{align}
    where $\mathcal R_{\log}$ is known, nonnegative, nondecreasing, and $o(n)$.
\end{asmp}

\textbf{IGW with a fixed partner.} Given $\hat f_t$, let $\hat a_t\in\argmax_a\hat f_t(\x_{t,a})$ and
\begin{align}
    p_t(a)&=\frac{1}{K+\gamma_t[\hat f_t(\x_{t,\hat a_t})-\hat f_t(\x_{t,a})]}\quad(a\ne\hat a_t),
    \\
    p_t(\hat a_t)&=1-\sum_{a\ne\hat a_t}p_t(a).
    \label{eq:gf-igw}
\end{align}
Write $\Delta_t(\x)=f(\x_t^\star)-f(\x)$ and $\varepsilon_t(\x,\y)=(\hat f_t-f)(\x)-(\hat f_t-f)(\y)$ for the margin error.
\begin{lemm}[\textbf{IGW with a fixed partner}]
\label{lem:gf-anchor}
    For every $\gamma_t>0$ and $\mathbf u\in\cS_t$, the following holds
    \[
        \mathbb E_{a\sim p_t}\Delta_t(\x_{t,a})
        \le \frac{2K}{\gamma_t}+\frac{\gamma_t}{4}\mathbb E_{a\sim p_t}\varepsilon_t(\x_{t,a},\mathbf u)^2.
    \]
\end{lemm}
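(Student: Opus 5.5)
The plan is the standard inverse-gap-weighting argument of Foster and Rakhlin, adapted so that errors are measured relative to the fixed partner $\mathbf u$. Write $\hat g_t(a):=\hat f_t(\x_{t,\hat a_t})-\hat f_t(\x_{t,a})\ge0$ for the estimated gap. Let $a^\star$ be the index of $\x_t^\star$, and abbreviate $\varepsilon_a:=\varepsilon_t(\x_{t,a},\mathbf u)$.

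\textbf{Step 1: decompose the gap.} The first step is the exact identity
\[
\Delta_t(\x_{t,a})=\hat g_t(a)-\hat g_t(a^\star)+\varepsilon_a-\varepsilon_{a^\star}.
\]
It holds because $(f-\hat f_t)(\x_t^\star)-(f-\hat f_t)(\x_{t,a})=\varepsilon_a-\varepsilon_{a^\star}$: the $\mathbf u$-terms cancel, so $\mathbf u$ is arbitrary. Taking expectations under $p_t$ gives
\[
\mathbb E_{p_t}\Delta_t=\mathbb E_{p_t}\hat g_t+\mathbb E_{p_t}\varepsilon_a-\hat g_t(a^\star)-\varepsilon_{a^\star}.
\]

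\textbf{Step 2: the exploitation term.} From \eqref{eq:gf-igw}, $p_t(a)\hat g_t(a)\le 1/\gamma_t$ for each $a\ne\hat a_t$, and $\hat g_t(\hat a_t)=0$. Hence $\mathbb E_{p_t}\hat g_t\le (K-1)/\gamma_t$.

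\textbf{Step 3: the comparator term.} Here the negative $-\hat g_t(a^\star)$ has to absorb the cost of the low probability $p_t(a^\star)$. By AM--GM with weight proportional to $p_t(a^\star)$, I would bound $-\varepsilon_{a^\star}\le \tfrac{\gamma_t p_t(a^\star)}{c}\varepsilon_{a^\star}^2+\tfrac{c}{4\gamma_t p_t(a^\star)}$. I would then use $1/p_t(a^\star)\le K+\gamma_t\hat g_t(a^\star)$. This holds for $a^\star\ne\hat a_t$ by definition, and for $a^\star=\hat a_t$ because $p_t(\hat a_t)\ge 1/K$. The constant $c$ is chosen so that the resulting $\tfrac{c}{4}\hat g_t(a^\star)$ is cancelled by $-\hat g_t(a^\star)$, which forces $c\le 4$. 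Finally, $p_t(a^\star)\varepsilon_{a^\star}^2\le\mathbb E_{p_t}\varepsilon_a^2$ turns the square into the target quantity.

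\textbf{Step 4: the exploration term.} The remaining first-order term $\mathbb E_{p_t}\varepsilon_a$ is handled by AM--GM again: $\mathbb E_{p_t}\varepsilon_a\le \tfrac{\gamma_t}{c'}\mathbb E_{p_t}\varepsilon_a^2+\tfrac{c'}{4\gamma_t}$.

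\textbf{Main obstacle: the constants.} Steps 3 and 4 together must spend at most $\gamma_t/4$ of the squared-error budget, giving $1/c+1/c'\le 1/4$. At the same time, Step 3 needs $c\le4$ for the cancellation, and this is incompatible with $1/c+1/c'\le1/4$. Naively splitting the budget therefore leaves either a leftover $+\hat g_t(a^\star)$ or a constant larger than $2K$.

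The fix I would try first is to avoid splitting the budget. The two first-order terms combine as $\mathbb E_{p_t}[\varepsilon_a]-\varepsilon_{a^\star}=\mathbb E_{p_t}[\varepsilon_a-\varepsilon_{a^\star}]$, and this combination is invariant to the choice of $\mathbf u$. I would therefore recentre, replacing $\mathbf u$ by $\x_{t,a^\star}$ in the intermediate step, so that $\varepsilon_{a^\star}$ vanishes. Only then would I bound $\mathbb E_{p_t}[(\varepsilon_a-\varepsilon_{a^\star})]$ against $\mathbb E_{p_t}\varepsilon_a^2$ for the original $\mathbf u$. For this I would use the variance inequality $\mathbb E_{p_t}(\varepsilon_a-m)^2\le\mathbb E_{p_t}\varepsilon_a^2$ for the mean $m$, together with a single AM--GM at weight $\gamma_t/4$, and then apply the $1/p_t(a^\star)$ bound once.

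If that recentring does not close cleanly, the fallback is to accept a slightly worse absolute constant in the $K/\gamma_t$ term. Only the order $K/\gamma_t+\gamma_t\mathbb E\varepsilon^2$ matters downstream. I would then check whether $2K$ is attainable by tracking the $(K-1)/\gamma_t$ from Step 2 and the $K/\gamma_t$ from Step 3 exactly.
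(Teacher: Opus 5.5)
Your argument is correct once the recentring fix is taken as the proof, and it closes with exactly the paper's constant. Steps~3--4 as first written should be discarded, for the reason you give. The step you leave implicit is a lower bound on the variance by a single atom. With $m:=\mathbb E_{a\sim p_t}\varepsilon_a$ and $p_\star:=p_t(a^\star)$,
\[
\mathbb E_{a\sim p_t}\varepsilon_a^2\;\ge\;\mathbb E_{a\sim p_t}(\varepsilon_a-m)^2\;\ge\;p_\star(\varepsilon_{a^\star}-m)^2 .
\]
Writing $x:=m-\varepsilon_{a^\star}$, you then get
\[
x-\tfrac{\gamma_t}{4}\mathbb E_{a\sim p_t}\varepsilon_a^2\;\le\; x-\tfrac{\gamma_t p_\star}{4}x^2\;\le\;\tfrac{1}{\gamma_t p_\star}\;\le\;\tfrac{K}{\gamma_t}+\hat g_t(a^\star).
\]
Adding Step~2 and the $-\hat g_t(a^\star)$ term gives $(2K-1)/\gamma_t\le 2K/\gamma_t$.

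The paper proves Lemma~\ref{lem:gf-single} and applies it to $v_a=f(\x_{t,a})-f(\mathbf u)$ and $\hat v_a=\hat f_t(\x_{t,a})-\hat f_t(\mathbf u)$. It uses the same decomposition and the same two estimates, $\mathbb E_{p_t}\hat g_t\le(K-1)/\gamma_t$ and $1/p_\star\le K+\gamma_t\hat g_t(a^\star)$. The difference is that it absorbs the first-order errors action by action rather than through the variance. For $a\ne a^\star$ it uses $p_a(\varepsilon_a-\tfrac{\gamma_t}{4}\varepsilon_a^2)\le p_a/\gamma_t$. It then completes the square in the merged term $-(1-p_\star)\varepsilon_{a^\star}-\tfrac{\gamma_t}{4}p_\star\varepsilon_{a^\star}^2$. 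This yields the exact maximum $(1-p_\star)/(\gamma_t p_\star)$ of the concave quadratic, marginally sharper than your $1/(\gamma_t p_\star)$, but the paper immediately relaxes to $1/(\gamma_t p_\star)$ as well.

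Your route makes it transparent why $\mathbf u$ is arbitrary: only the $p_t$-variance of the margin error enters, and that variance does not depend on $\mathbf u$. The paper obtains the same fact by proving a statement for arbitrary vectors $v,\hat v$ and observing that subtracting the values at $\mathbf u$ leaves $p_t$ unchanged.

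One caution about your fallback plan. The constants $2K/\gamma_t$ and $\gamma_t/4$ are not cosmetic here. Through Lemma~\ref{lem:gf-epoch} they determine $C_e$, and $G_{\mathrm{IGW}}$ is built from $C_e$ and used in the algorithm's acceptance test. A worse constant would therefore invalidate the stated safety certificate unless $C_e$ were recomputed.
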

This is the standard IGW bound 
applied after subtracting the values at $\mathbf u$. The free partner lets the error be measured on the displayed pair: for a candidate $(\x_{t,I_t},\x_{t,J_t})$ with $I_t,J_t\sim p_t$ independent, take $\mathbf u$ to be the other item; for a fallback $(\x_{b_t},\x_{t,J_t})$, take $\mathbf u=\x_{b_t}$. This plays the role of \eqref{eq:width-transfer}: the fallback partner's gap is controlled exactly as a candidate's regret.

\begin{algorithm}[t]
\caption{\BTIGWBC{} (Bradley--Terry Inverse-Gap-Weighting Baseline-Conservative)}
\label{alg:gf-igw}
\begin{algorithmic}[1]
\REQUIRE $\alpha,\delta\in(0,1)$, $B$, $\mathcal R_{\log}$, and an oracle satisfying Assumption~\ref{asmp:gf-oracle}
\STATE Initialise $Q_0\gets0$
\FOR{$t=1,2,\ldots$}
  \STATE If $t=2^e$, restart the oracle and set $\iota_e,E_e,C_e$ by \eqref{eq:gf-parameters}
  \STATE Observe $\cS_t$, $\x_{b_t}$, $s_t(\x_{b_t})$; obtain $\hat f_t$; form $p_t$ with $\gamma_t$; set $h_t\gets\tfrac12-(1-\alpha)s_t(\x_{b_t})$
  \IF{$Q_{t-1}+h_t\ge G_{\mathrm{IGW}}(t)/4$}
     \STATE Draw $I_t,J_t\sim p_t$ independently; display $(\x_{t,I_t},\x_{t,J_t})$; $Q_t\gets Q_{t-1}+h_t$ \COMMENT{candidate}
  \ELSE
     \STATE Draw $J_t\sim p_t$; display $(\x_{b_t},\x_{t,J_t})$; $Q_t\gets Q_{t-1}+\alpha s_t(\x_{b_t})$ \COMMENT{fallback}
  \ENDIF
  \STATE Observe $o_t$ and update the oracle with this comparison
\ENDFOR
\vspace{1ex}
\end{algorithmic}
\end{algorithm}

\textbf{An anytime envelope.} We run the oracle in epochs $e=0,1,\ldots$ of length $n_e=2^e$, restarting it at each, with confidence $\delta_e=\delta/[(e+1)(e+2)]$. For local round $m$, set
\begin{align}
    \iota_e &=\log\frac{3n_e}{\delta_e},\quad
    E_e=\frac{8\mathcal R_{\log}(n_e)}{\kappa}+\frac{5\iota_e}{\kappa^2},\\
    \gamma_{e,m} &=\sqrt{\frac{8Km}{E_e}},\quad
    C_e=\frac{3}{\sqrt2}\sqrt{KE_e}+B\sqrt{2\iota_e}.
    \label{eq:gf-parameters}
\end{align}
At time $t$, let $e(t)=\lfloor\log_2t\rfloor$, $m(t)=t-2^{e(t)}+1$, $\gamma_t=\gamma_{e(t),m(t)}$, and $G_{\mathrm{IGW}}(t)=\sum_{j<e(t)}C_j\sqrt{2^j}+C_{e(t)}\sqrt{m(t)},
     G_{\mathrm{IGW}}(0)=0.$ Let $r_t^\circ=R_t$ on candidate rounds and $r_t^\circ=\Delta_t(\y_t)$ on fallback rounds.
\begin{lemm}[\textbf{Anytime regret envelope}]
\label{lem:gf-envelope}
    Under Assumptions~\ref{asmp:BTL-model}, \ref{asmp:bounded}, and \ref{asmp:gf-oracle}, with probability at least $1-\delta$, Algorithm~\ref{alg:gf-igw} satisfies $\sum_{i\le t}r_i^\circ\le G_{\mathrm{IGW}}(t)$ for all $t\ge1$. Moreover, with $$A(t):=\mathcal R_{\log}(t)/\kappa+1/\kappa^2, G_{\mathrm{IGW}}(t)=\tilde\cO(\sqrt{KtA(t)})=o(t).$$
\end{lemm}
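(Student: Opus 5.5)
The plan is to work epoch by epoch. Since the oracle restarts at each epoch, we prove a per-epoch statement and then glue the epochs together with a union bound over $e$ using $\delta_e=\delta/[(e+1)(e+2)]$, which gives $\sum_e\delta_e\le\delta$. The per-epoch target is the following. Fix epoch $e$ with local rounds $m=1,\dots,m'\le n_e$. With probability at least $1-\delta_e$, simultaneously for all $m'\le n_e$,
\[
\sum_{m\le m'} r^\circ_m \le C_e\sqrt{m'}.
\]
Summing over completed epochs $j<e(t)$ and adding the current partial epoch then gives exactly $G_{\mathrm{IGW}}(t)$.

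\textbf{Per-epoch bound.} On each round, first bound the conditional expectation of $r^\circ_m$ given the pre-round history. On a candidate round, $R_t=\tfrac12[\Delta_t(\x_{t,I_t})+\Delta_t(\x_{t,J_t})]$. Apply Lemma~\ref{lem:gf-anchor} to each half with $\mathbf u$ taken as the other item, using independence of $I_t$ and $J_t$. On a fallback round, apply it with $\mathbf u=\x_{b_t}$. In both cases,
\[
\mathbb E_m r^\circ_m\le \frac{2K}{\gamma_{e,m}}+\frac{\gamma_{e,m}}{4}\,\mathbb E_m \varepsilon_m(\x_m,\y_m)^2,
\]
where $(\x_m,\y_m)$ is the displayed pair. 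This is where the fixed-partner lemma matters: the squared error is measured on exactly the comparison fed to the oracle.

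Next, convert oracle regret into squared margin error. This uses the standard logistic argument under realizability: the excess conditional log-loss of $\hat f_m$ over $f$ is a KL divergence between Bernoullis with parameters $\sigma(\hat f_m\text{-margin})$ and $\sigma(f\text{-margin})$, and it is at least $\tfrac{\kappa}{2}\varepsilon_m^2$ because both margins lie in $[-B,B]$. A Freedman-type martingale inequality handles the gap between realized and conditional losses (the losses are bounded by $O(B)$, with variance controlled by the excess loss itself). Together with \eqref{eq:gf-oracle}, this should give, with probability $1-\delta_e/3$ and uniformly in $m'$ via a union bound over $n_e$ (hence the $3n_e$ in $\iota_e$),
\[
\sum_{m\le m'}\mathbb E_m\varepsilon_m^2\le \frac{8\mathcal R_{\log}(n_e)}{\kappa}+\frac{5\iota_e}{\kappa^2}=E_e.
\]
I expect this step to be the main obstacle. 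The constants must match $E_e$, and the martingale must be taken over conditional expectations of $\varepsilon^2$ rather than realized values, which requires a second concentration step to pass from realized to conditional squared errors.

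\textbf{Summing and concentrating.} Sum the per-round bound over $m\le m'$. The first term gives $\sum_m 2K/\gamma_{e,m}=\sum_m\sqrt{KE_e/(2m)}\le\sqrt{2KE_e m'}$. For the second term, use $\gamma_{e,m}\le\gamma_{e,m'}$ together with the error bound above to get $\tfrac14\sqrt{8Km'/E_e}\,E_e=\sqrt{KE_em'/2}$. The total is $\tfrac{3}{\sqrt2}\sqrt{KE_e m'}$. Finally, pass from $\sum\mathbb E_m r^\circ_m$ to $\sum r^\circ_m$ by Azuma--Hoeffding with increments in $[0,B]$ and a union bound over $m'\le n_e$. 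This contributes $B\sqrt{2m'\iota_e}$ and yields $C_e\sqrt{m'}$.

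For the rate, note $\mathcal R_{\log}(n_e)\le\mathcal R_{\log}(t)$ for epochs up to $e(t)$, and $\iota_e=O(\log(t/\delta))$. The geometric sum $\sum_{j\le e(t)}\sqrt{2^j}=O(\sqrt t)$ then gives $G_{\mathrm{IGW}}(t)=\tilde\cO(\sqrt{KtA(t)})$. This is $o(t)$ because $\mathcal R_{\log}(t)=o(t)$.
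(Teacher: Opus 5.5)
Your proposal follows the paper's proof essentially step for step. It uses the fixed-partner IGW bound on both branches, and an error budget $\sum_m\mathbb E_m\varepsilon_m^2\le E_e$ from logistic curvature plus a label-noise step and a realized-to-conditional step (Lemma~\ref{lem:gf-errors}). It then has the $\frac{3}{\sqrt2}\sqrt{KE_em'}$ summation, Azuma with a union bound over $m'$ (Lemma~\ref{lem:gf-epoch}), and a union bound over epochs with $\delta_e$.

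For the step you flagged, the paper reaches $E_e$ exactly without Freedman (a naive Freedman/Bernstein bound with range $2B$ would not obviously give the constant $5$). It writes $\ell_m(\hat f_m)-\ell_m(f)=D_m-\xi_m$, with the Bregman term $D_m\ge\frac{\kappa}{2}\varepsilon_m^2$ pathwise and $\xi_m=\varepsilon_m\bigl(o_m-\sigma(f(\x_m)-f(\y_m))\bigr)$. It applies conditional Hoeffding with variance proxy $\varepsilon_m^2/4$ and Ville's inequality at $a=2\kappa$. It then uses the supermartingale $\exp\bigl(c\sum_m\mathbb E_m\varepsilon_m^2-\sum_m\varepsilon_m^2/(4B^2)\bigr)$ with $c=(1-e^{-1})/(4B^2)$, and absorbs $7B^2\iota_e$ via $B\kappa\le\frac14$. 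Because Ville makes both bounds anytime, the $3n_e$ in $\iota_e$ is needed only for the Azuma union bound.
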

The proof applies Lemma~\ref{lem:gf-anchor}, the oracle bound and martingale concentration within each epoch (Appendix~\ref{app:general-proofs}).

\textbf{Safety credits.} Instead of charging each candidate its own certificate, \BTIGWBC{} charges the envelope cumulatively. It keeps credits $Q_t$, with $Q_0=0$ and no reset between epochs: a candidate earns $h_t:=\tfrac12-(1-\alpha)s_t(\x_{b_t})$ and a fallback earns $\alpha s_t(\x_{b_t})$. By Lemma~\ref{lem:score-certificate} with $c=R_t$, a candidate's true surplus is at least $h_t-R_t/4$; a fallback's is at least $\alpha s_t(\x_{b_t})$ because it contains the baseline. The true cumulative surplus is therefore at least $Q_t-\tfrac14\sum_{i\in\cA_t}R_i$, where $\cA_t$ is the set of candidate rounds, and by Lemma~\ref{lem:gf-envelope} the charge is at most $G_{\mathrm{IGW}}(t)/4$, which the acceptance test in Algorithm~\ref{alg:gf-igw} requires the credits to cover.

\textbf{Main result.} Since $s_t(\x_{b_t})\in[\underline s,\tfrac12]$ with $\underline s=\sigma(-\Delta_h)$, every credit is at least $\alpha\underline s$, so a rejection at time $t$ requires $4\alpha\underline s\,t<G_{\mathrm{IGW}}(t)$. Let $N_T$ count fallbacks up to $T$, and let $N_{\mathrm{IGW}}:=\sup\{t\ge1:4\alpha\underline s\,t<G_{\mathrm{IGW}}(t)\}$, which is finite since $G_{\mathrm{IGW}}(t)=o(t)$.
\begin{theo}[\textbf{Anytime safety and regret}]
\label{thm:general}
    Under Assumptions~\ref{asmp:BTL-model}, \ref{asmp:bounded}, \ref{asmp:gapBounds}, and \ref{asmp:gf-oracle}, with probability at least $1-\delta$, simultaneously for every $T\ge1$, Algorithm~\ref{alg:gf-igw} is $\alpha$-conservative in the sense of \eqref{eq:performance}, $N_T\le\min\{T,N_{\mathrm{IGW}}\}$, and $\Reg(T)\le G_{\mathrm{IGW}}(T)+\tfrac12\Delta_hN_T$. Suppressing log factors in $T$ and $1/\delta$,
    \begin{align}
        \Reg(T)=\tilde\cO\!\left(\sqrt{KT\,A(T)}+\frac{\Delta_hK\,A(T)}{\alpha^2\underline{s}^{\,2}}\right).
        \label{eq:gf-main}
    \end{align}
\end{theo}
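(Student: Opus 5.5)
We work throughout on the event of Lemma~\ref{lem:gf-envelope}, which has probability at least $1-\delta$ and gives $\sum_{i\le t}r_i^\circ\le G_{\mathrm{IGW}}(t)$ for all $t\ge1$ simultaneously. Every claim of Theorem~\ref{thm:general} will be derived deterministically on this event, so the guarantees hold simultaneously for all $T$. The argument mirrors the proof of Theorem~\ref{thm:linear}, with the per-round certificate $c_t$ replaced by the cumulative envelope.

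\textbf{Safety.} On a candidate round, Lemma~\ref{lem:score-certificate} with $c=R_t\ge0$ gives the one-round surplus
\[
\max\{s_t(\x_t),s_t(\y_t)\}-(1-\alpha)s_t(\x_{b_t})\ge h_t-\tfrac14R_t .
\]
On a fallback round the pair contains $\x_{b_t}$, so the surplus is at least $\alpha s_t(\x_{b_t})$. Summing, the true cumulative surplus up to $t$ is at least $Q_t-\tfrac14\sum_{i\in\cA_t}R_i$. Since $r_i^\circ\ge0$ on every round (fallback gaps $\Delta_t(\y_t)$ are nonnegative), we have $\sum_{i\in\cA_t}R_i\le\sum_{i\le t}r_i^\circ\le G_{\mathrm{IGW}}(t)$.

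It therefore suffices to show $Q_t\ge G_{\mathrm{IGW}}(t)/4$ whenever $\cA_t$ is nonempty; when $\cA_t=\emptyset$ the surplus is already nonnegative. The natural inductive claim is $Q_t\ge\tfrac14\sum_{i\in\cA_t}R_i$. However, the acceptance test only guarantees $Q_t\ge G_{\mathrm{IGW}}(t)/4$ at candidate rounds. For a later fallback round $t'$, we use that $Q$ increased while $\sum_{\cA}R$ did not, so
\[
Q_{t'}\ge Q_{\tau}\ge \tfrac14 G_{\mathrm{IGW}}(\tau)\ge \tfrac14\sum_{i\in\cA_\tau}R_i=\tfrac14\sum_{i\in\cA_{t'}}R_i,
\]
where $\tau$ is the last candidate round before $t'$. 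This settles safety.

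\textbf{Fallback count.} At a fallback round $t$, the failed test gives $Q_{t-1}+h_t<G_{\mathrm{IGW}}(t)/4$. Each credit is at least $\alpha\underline s$: for a candidate, $h_i\ge\tfrac12-(1-\alpha)\tfrac12=\alpha/2\ge\alpha\underline s$; for a fallback, $\alpha s_i(\x_{b_i})\ge\alpha\underline s$. Hence $Q_{t-1}+h_t\ge\alpha\underline s\,t$, so $t\le N_{\mathrm{IGW}}$, and therefore $N_T\le\min\{T,N_{\mathrm{IGW}}\}$.

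\textbf{Regret.} On candidate rounds $R_t=r_t^\circ$. On fallback rounds,
\[
R_t=\tfrac12\bigl(\Delta_{t,b_t}+\Delta_t(\y_t)\bigr)\le\tfrac12\Delta_h+r_t^\circ .
\]
Summing and applying the envelope yields $\Reg(T)\le G_{\mathrm{IGW}}(T)+\tfrac12\Delta_hN_T$.

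\textbf{Rate.} Lemma~\ref{lem:gf-envelope} gives $G_{\mathrm{IGW}}(t)\le c\sqrt{KtA(t)}$ up to logarithmic factors, with $A$ nondecreasing. The defining inequality $4\alpha\underline s\,t<c\sqrt{KtA(t)}$ implies
\[
t<\frac{c^2KA(t)}{16\alpha^2\underline s^{2}} .
\]
Bounding $N_T\le N_{\mathrm{IGW}}$ this way gives the second term of \eqref{eq:gf-main}, with $A(N_T)\le A(T)$ since $N_T\le T$.

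\textbf{Main obstacle.} The delicate step is the safety induction, since the acceptance test compares against $G_{\mathrm{IGW}}(t)$ rather than the realized charge. The key is the monotonicity argument above: $G_{\mathrm{IGW}}$ is nondecreasing, $Q$ is nondecreasing, and fallbacks add no charge. A secondary subtlety is the implicit inequality for $N_{\mathrm{IGW}}$ when $A(t)$ grows with $t$ (for example through $\iota_e$ and $\mathcal R_{\log}$). We handle it by evaluating at $t\le T$, which is where the $\tilde\cO$ absorbs the logarithms.
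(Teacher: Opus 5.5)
Your safety, fallback-count and regret steps are the paper's argument. Your chain through the last candidate round $\tau$ is exactly the paper's certificate $Q_t-G_{\mathrm{IGW}}(\tau_t)/4$. Your looser fallback bound $R_t\le\tfrac12\Delta_h+r_t^\circ$ still suffices, because the envelope already covers $r_t^\circ$ on fallback rounds. Drop the intermediate sentence saying it suffices to show $Q_t\ge G_{\mathrm{IGW}}(t)/4$ whenever $\cA_t\neq\emptyset$. Since $\alpha s_t(\x_{b_t})\le h_t$, we get $Q_t\le Q_{t-1}+h_t<G_{\mathrm{IGW}}(t)/4$, so that inequality fails at \emph{every} fallback round. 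This is precisely why the last-candidate argument is needed.

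The one step that fails as written is the rate for $N_T$. Bounding $N_{\mathrm{IGW}}$ through its defining inequality produces $A(N_{\mathrm{IGW}})$, not $A(T)$, and $N_{\mathrm{IGW}}$ can be much larger than $T$. Switching to $A(N_T)$ is not justified either. $N_T$ is a count, not a fallback time, and the set $\{t:4\alpha\underline{s}\,t<G_{\mathrm{IGW}}(t)\}$ need not be downward closed.

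This matters. Suppose $\mathcal R_{\log}$ is small up to a large $n_0$ and jumps afterwards; it is still nondecreasing and $o(n)$. Then for $T<n_0$ one has $N_{\mathrm{IGW}}>T$, so $\min\{T,N_{\mathrm{IGW}}\}=T$, while $KA(T)/(\alpha^2\underline{s}^{\,2})$ does not grow with $n_0$.

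The fix, which is what the paper does, uses the last fallback time $\tau=\max\cB_T\le T$ (when $N_T>0$). This $\tau$ does lie in that set. Combine it with an explicit envelope $G_{\mathrm{IGW}}(t)\le c_G\sqrt{Kt\,\overline E(t)}$ for a numerical constant $c_G$. Here the logarithmic factors are folded into the nondecreasing function $\overline E(t)=8\mathcal R_{\log}(t)/\kappa+5\log\bigl(3t(1+\log_2t)(2+\log_2t)/\delta\bigr)/\kappa^2$, so the monotonicity step is legitimate. Then $N_T\le\tau<c_G^2K\overline E(\tau)/(16\alpha^2\underline{s}^{\,2})\le c_G^2K\overline E(T)/(16\alpha^2\underline{s}^{\,2})$. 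Together with $G_{\mathrm{IGW}}(T)\le c_G\sqrt{KT\,\overline E(T)}$, this gives the stated rate. Your closing remark about ``evaluating at $t\le T$'' points in this direction, but the evaluation point must be a fallback time, and $\tau$ is the right one.
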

\emph{Proof sketch.} Work on the event of Lemma~\ref{lem:gf-envelope}. \emph{Safety:} if $\tau\le t$ is the last candidate round, then since $r_i^\circ\ge0$ and credits are positive, $$\sum_{i\in\cA_t}R_i\le\sum_{i\le\tau}r_i^\circ\le G_{\mathrm{IGW}}(\tau)\le4Q_\tau\le4Q_t,$$ 
so the true surplus is nonnegative. 

\emph{Fallbacks:} every fallback time satisfies $4\alpha\underline s\,\tau<G_{\mathrm{IGW}}(\tau)$, so $\tau\le N_{\mathrm{IGW}}$; at the last fallback $\tau\le T$, $G_{\mathrm{IGW}}(\tau)=\tilde\cO(\sqrt{K\tau A(T)})$ gives $N_T=\tilde\cO(KA(T)/(\alpha^2\underline s^{\,2}))$. 

\emph{Regret:} $R_t=r_t^\circ$ on candidate rounds and $R_t\le\tfrac12(\Delta_h+r_t^\circ)$ on fallbacks, so $$\Reg(T)\le G_{\mathrm{IGW}}(T)+\tfrac12\Delta_hN_T.$$ Details are in Appendix~\ref{app:gf-safety}. \hfill$\square$

\begin{remark}[\textbf{Finite classes}]
\label{rem:gf-finite}
    The oracle in Appendix~\ref{app:gf-finite} has $\mathcal R_{\log}(n)=e^B\log|\mathcal F|$, so $A(T)\equiv A_{\mathcal F}:=e^B\log|\mathcal F|/\kappa+1/\kappa^2$ and, as in the linear case, the cost of conservatism is independent of $T$.
\end{remark}
\begin{remark}[\textbf{From C-SquareCB to preference feedback}]
\label{rem:gf-csquarecb}
    C-SquareCB \citep{deb2025conservative} certifies safety by calibrating cumulative predictions of cardinal costs. Our oracle controls only comparison margins, so we certify scores through cumulative regret (Lemmas~\ref{lem:score-certificate} and~\ref{lem:gf-envelope}), and Lemma~\ref{lem:gf-anchor} lets fallbacks keep learning. Appendix~\ref{app:gf-csquarecb} explains why the calibration argument does not transfer.
\end{remark}

\begin{figure}[t]
\centering
\includegraphics[width=\textwidth]{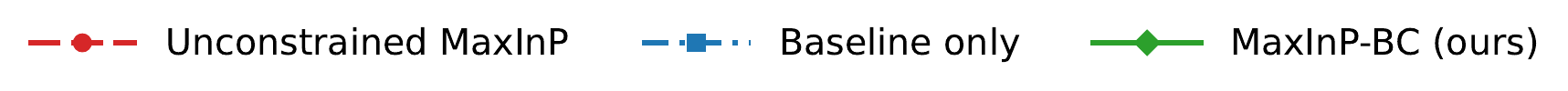}\\
\subfigure[$d=10$, $\alpha=0.05$]{%
  \includegraphics[width=0.32\textwidth]{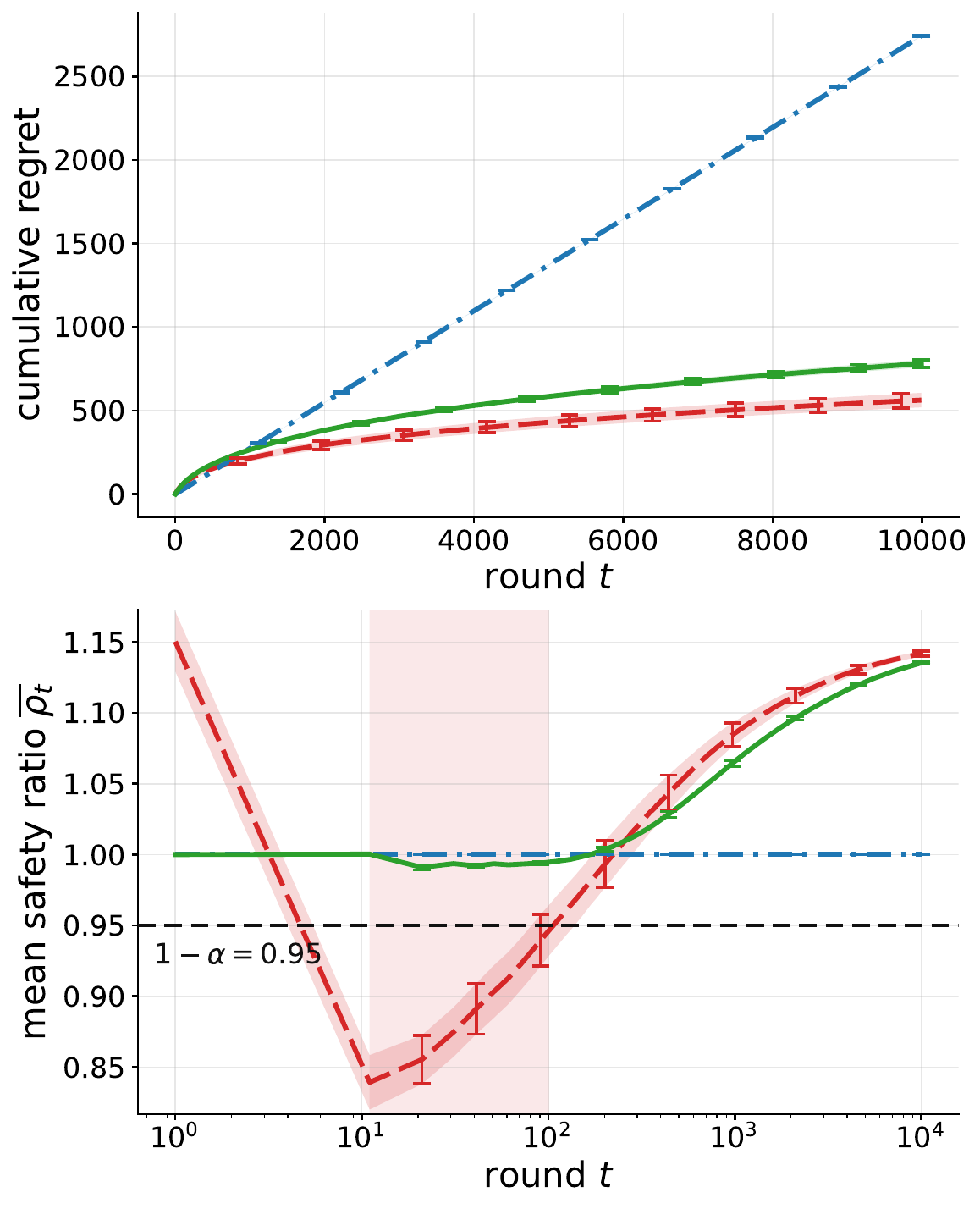}%
  \label{fig:linear-headline}}
\hfill
\subfigure[Varying $\alpha$ ($d=10$)]{%
  \includegraphics[width=0.32\textwidth]{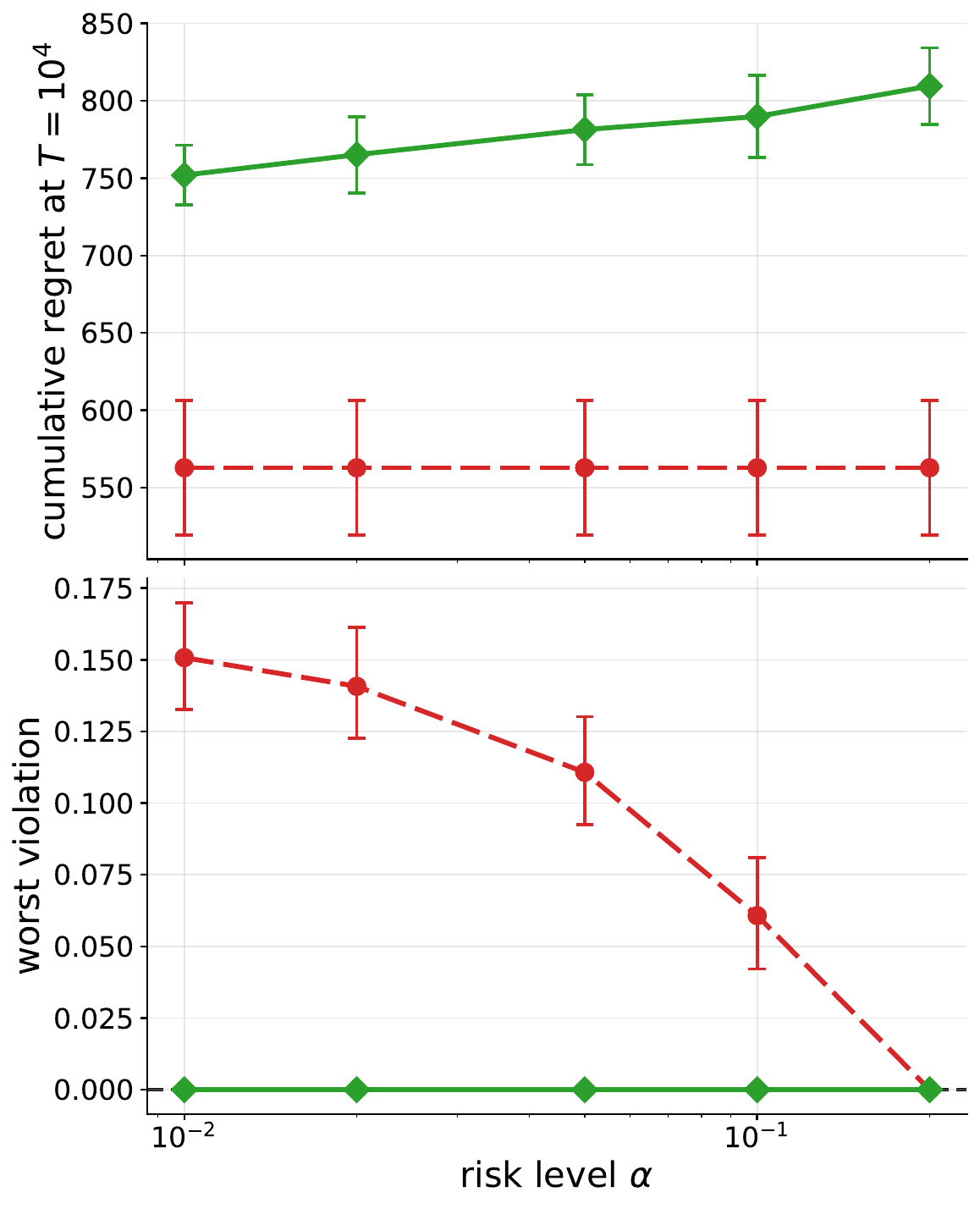}%
  \label{fig:linear-alpha}}
\hfill
\subfigure[Varying $d$ ($\alpha=0.05$)]{%
  \includegraphics[width=0.32\textwidth]{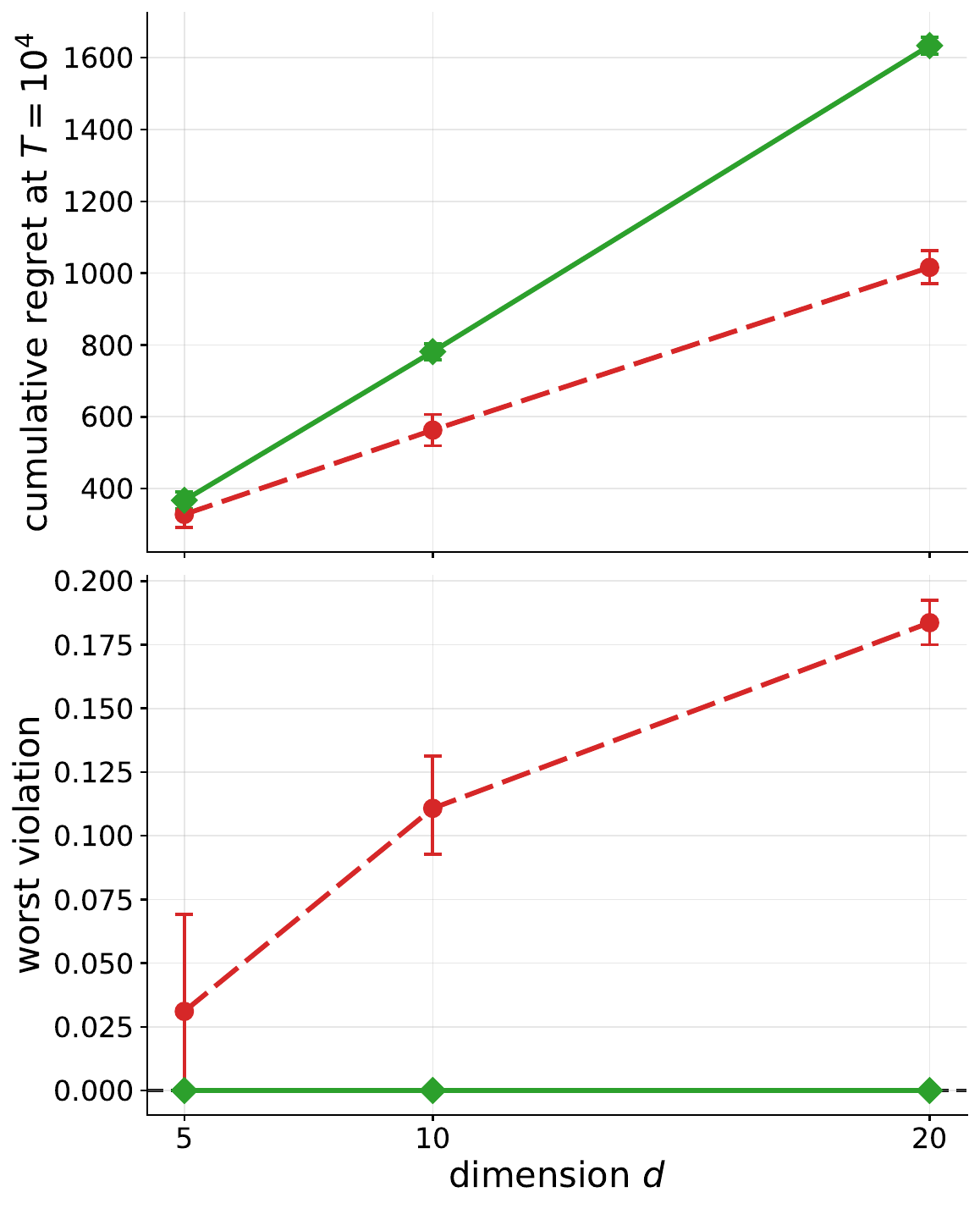}%
  \label{fig:linear-d}}
\caption{Linear utilities. Top: cumulative regret. Bottom: mean safety ratio $\bar\rho_t$ in (a) and worst violation in (b, c). Dashed lines mark the bar $1-\alpha$ in (a) and zero violation in (b, c); the vertical band in (a) marks the rounds where \textsc{MaxInP} is below the bar. Baseline-only is shown in (a) only. Intervals are $95\%$ confidence intervals over $50$ seeds. \MaxInPwBC{} has zero violation in every setting.}
\label{fig:linear}
\end{figure}

\section{Experiments}
\label{sec:exp}

We compare each conservative algorithm with its unconstrained counterpart, \textsc{MaxInP} \citep{saha2021maxinp} or \textsc{BT-IGW} \citep{sekhari2023aurora}, and with \emph{baseline-only}, which selects the baseline item.

\paragraph{Protocol.}
Each configuration runs for $T=10^4$ rounds on $50$ independent instances. We report cumulative regret and the \emph{safety ratio} $\rho_t=\sum_{i\le t}\max\{s_i(\x_i),s_i(\y_i)\}\big/\sum_{i\le t}s_i(\x_{b_i})$; the constraint \eqref{eq:performance} holds at prefix $t$ if and only if $\rho_t\ge1-\alpha$. Sweeps report the \emph{worst violation} $\max_t[(1-\alpha)-\bar\rho_t]_+$ of the cross-seed mean $\bar\rho_t$. Intervals are $95\%$ confidence intervals over seeds. The baseline is the item at fractional utility rank $q$, the fraction of items with higher utility. We tune only the multipliers on $\beta_t$, $\gamma_{e,m}$ and $G_{\mathrm{IGW}}$, by grid search on seeds disjoint from those reported: unconstrained learners minimise regret, and conservative learners minimise regret subject to the constraint. Tuned multipliers fall outside the theory, so the safety reported here is empirical.

\textbf{Linear utilities:} We take $d=10$, $K=50$, $\alpha=0.05$ and $q=0.14$, with $\beta$-multipliers $0.02$ for \textsc{MaxInP} and $0.05$ for \MaxInPwBC{}. \textsc{MaxInP} attains the lowest regret, but its mean safety ratio drops to $0.839$ and stays below the bar for roughly the first $100$ rounds, because its most-uncertain pair can contain two poor items. \MaxInPwBC{} never drops below $0.991$, at $1.39\times$ the regret of \textsc{MaxInP}, while baseline-only is safe but incurs linear regret, $3.5\times$ that of \MaxInPwBC{} (Figure~\ref{fig:linear-headline}). The sweeps confirm this across settings: \MaxInPwBC{} has zero violation for every $\alpha$ and $d$, whereas \textsc{MaxInP} violates the constraint in all but the loosest setting.

\textbf{General utilities:} Here $d=5$, $K=40$, $\alpha=0.02$ and $q=0.10$, with a realizable class of $|\mathcal F|=64$ random quadratics and an exponential-weights oracle, for which $\mathcal R_{\log}(n)=\log|\mathcal F|$. \textsc{BT-IGW} has the lowest regret ($464.5$), but its mean safety ratio starts at $0.862$ and stays below the bar for the first ${\approx}190$ rounds, whereas \BTIGWBC{} stays at or above $1.014$ at $2.69\times$ the regret, and baseline-only is a further $2.47\times$ worse (Figure~\ref{fig:general-headline}). Across the sweeps, \BTIGWBC{} has zero violation while \textsc{BT-IGW} violates the constraint in every setting. The cost of conservatism is largest when the constraint is tight (small $\alpha$, large $K$), and when it is loose ($\alpha\ge0.05$ or $K=10$), \BTIGWBC{} matches or beats \textsc{BT-IGW}.

\begin{figure}[!t]
\centering
\includegraphics[width=\textwidth]{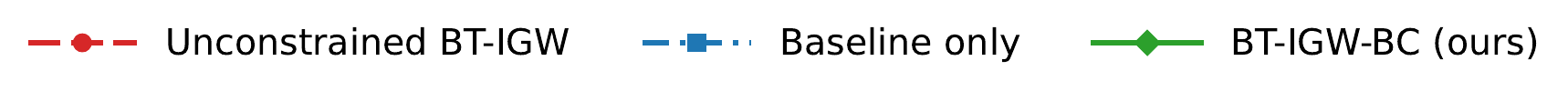}\\[-4pt]
\subfigure[$K=40$, $\alpha=0.02$]{%
  \includegraphics[width=0.30\textwidth]{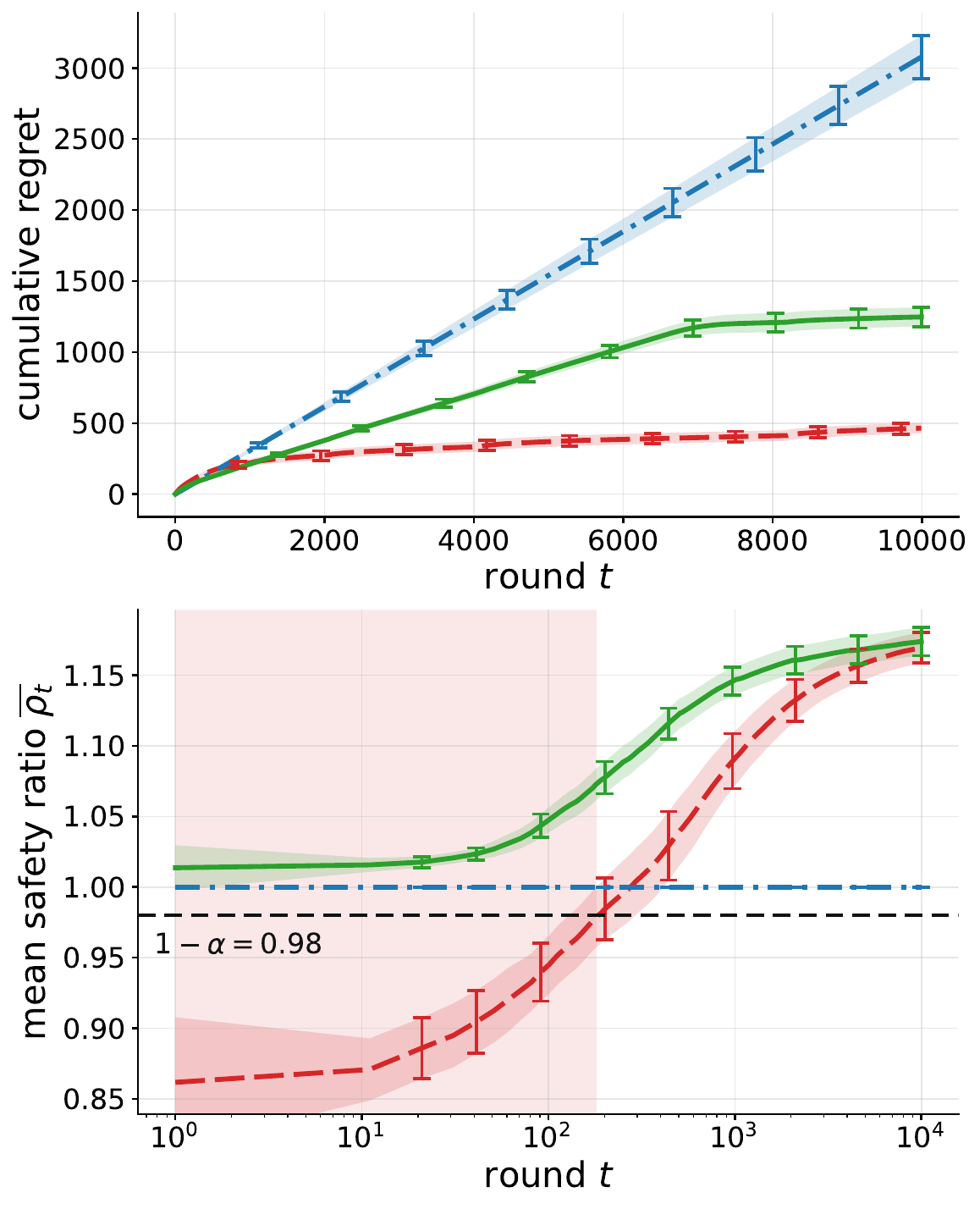}%
  \label{fig:general-headline}}
\hfill
\subfigure[Varying $\alpha$ ($K=40$)]{%
  \includegraphics[width=0.30\textwidth]{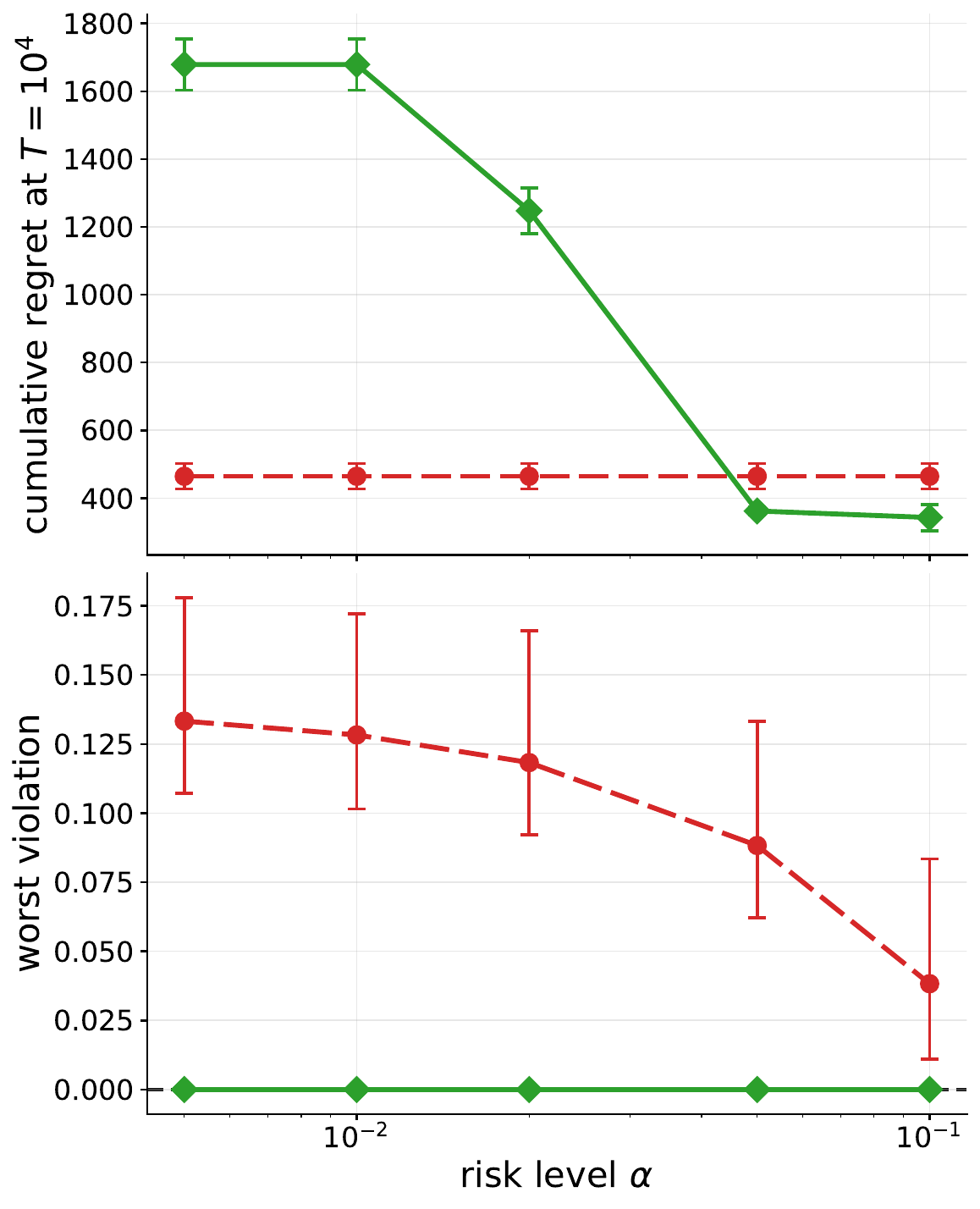}%
  \label{fig:general-alpha}}
\hfill
\subfigure[Varying $K$ ($\alpha=0.02$)]{%
  \includegraphics[width=0.30\textwidth]{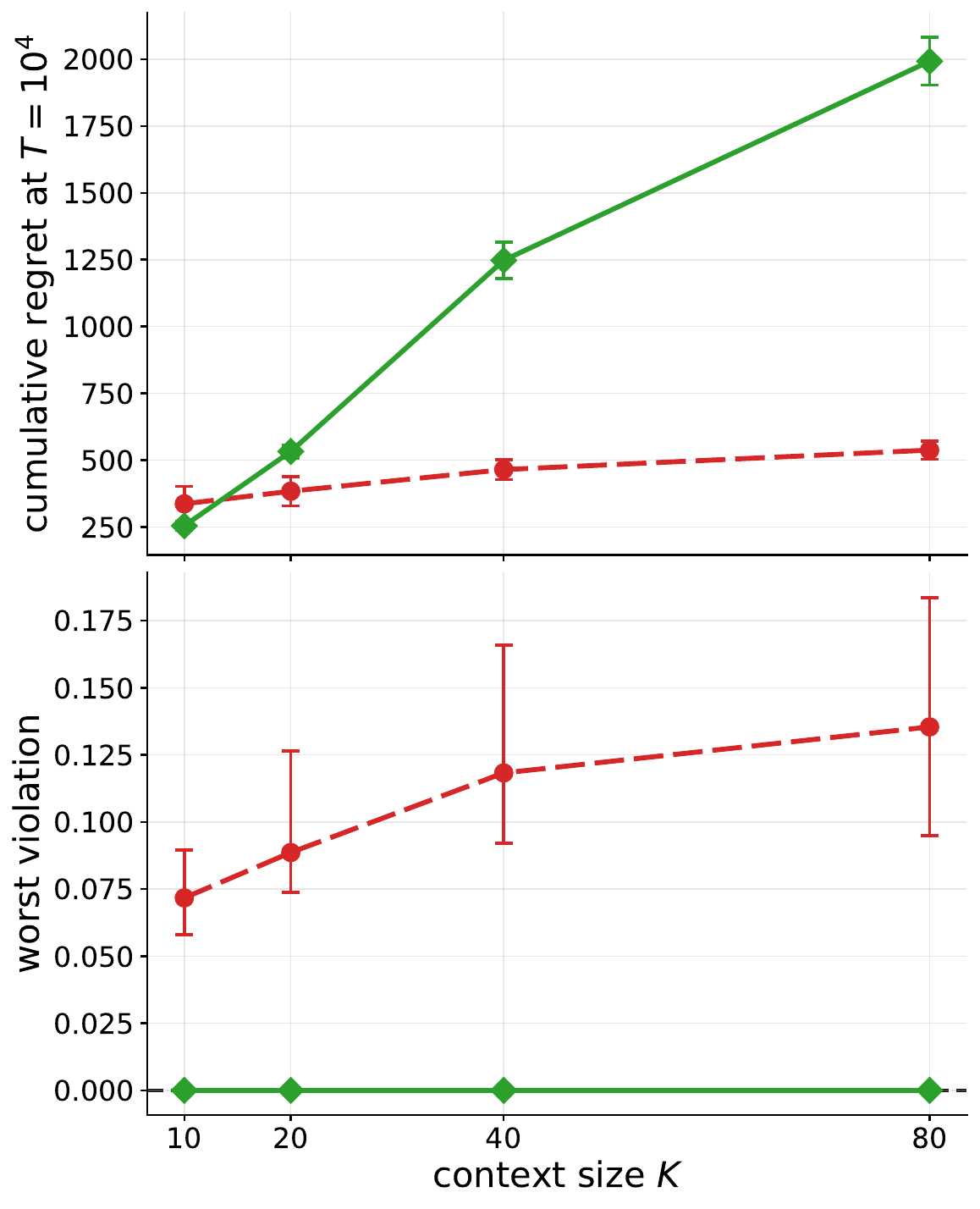}%
  \label{fig:general-K}}
\caption{General utilities ($d=5$, $|\mathcal F|=64$, $q=0.10$), laid out as in Figure~\ref{fig:linear} with \textsc{BT-IGW} in place of \textsc{MaxInP}. \BTIGWBC{} stays above the bar in (a), and its extra regret over \textsc{BT-IGW} arises mainly where \textsc{BT-IGW} violates the constraint.}
\label{fig:general}
\end{figure}

\section{Conclusion}
\label{sec:conclusion}
We introduced conservative contextual dueling bandits, which require the cumulative score of the better displayed item to stay above a $(1-\alpha)$ fraction of a trusted baseline's at all times. Because the constraint is stated through preference scores rather than rewards, it is invariant to shifts of the utility function and can be enforced from binary comparisons alone. We showed that this setting is not a routine extension of conservative bandits with numerical rewards, since transferring CLUCB directly can incur linear regret. \MaxInPwBC{} and \BTIGWBC{} avoid this failure by certifying the constraint through utility gaps instead of observed payoffs and by continuing to learn from the comparisons made when they fall back to the baseline. Both are safe at all times with high probability, for linear and general utility classes respectively. In both cases, conservatism adds only an additive term to the regret. Over long horizons, the price of safety is therefore negligible compared with the cost of learning itself.

Several directions remain open. One is conservative preference-based reinforcement learning \citep{garcelon2020conservativeRL}, where a policy learns from trajectory comparisons while its cumulative score stays close to that of a deployed baseline policy. Certificates there must account for the state distribution induced by the learner's own policy, not an exogenous context distribution. Another is replicable constrained dueling bandits \citep{impagliazzo2022reproducibility,deb2026replicablebanditsucbbased,deb2026replicableduelingbandits}, where the data-dependent acceptance test must give the same decisions across runs without compromising anytime safety. Closer to the present setting, fallback comparisons involve the baseline and are therefore informative about its score, so baseline scores could be estimated online rather than bounded in advance, as conservative bandits do for unknown baseline rewards \citep{conservativeMAB,conservative_linear_bandits}. A stricter, stagewise variant would require every displayed pair, rather than the cumulative sum, to meet the baseline bar, analogous to stagewise safety constraints for bandits with numerical rewards \citep{khezeli2020safe,moradipari2020stagewise}. Under non-transitive preferences beyond the Bradley--Terry--Luce model, scores would need to be defined against a von Neumann winner \citep{dudik2015contextual} rather than a best item. Finally, lower bounds on the cost of conservatism and feedback over more than two items remain open.


\bibliography{references}

@inproceedings{Garcelon2020IA,
  title={Improved Algorithms for Conservative Exploration in Bandits},
  author={Evrard Garcelon and Mohammad Ghavamzadeh and Alessandro Lazaric and Matteo Pirotta},
  booktitle={AAAI},
  year={2020}
}

@inproceedings{bengs2022stochastic,
  title={Stochastic Contextual Dueling Bandits under Linear Stochastic Transitivity Models},
  author={Bengs, Viktor and Saha, Aadirupa and H{\"u}llermeier, Eyke},
  booktitle={International Conference on Machine Learning},
  pages={1764--1786},
  year={2022},
  organization={PMLR}
}

@inproceedings{yue2009interactively,
  title={Interactively optimizing information retrieval systems as a dueling bandits problem},
  author={Yue, Yisong and Joachims, Thorsten},
  booktitle={Proceedings of the 26th Annual International Conference on Machine Learning},
  pages={1201--1208},
  year={2009}
}

@inproceedings{conservative_linear_bandits,
 author = {Kazerouni, Abbas and Ghavamzadeh, Mohammad and Abbasi Yadkori, Yasin and Van Roy, Benjamin},
 booktitle = {Advances in Neural Information Processing Systems},
 editor = {I. Guyon and U. Von Luxburg and S. Bengio and H. Wallach and R. Fergus and S. Vishwanathan and R. Garnett},
 pages = {},
 publisher = {Curran Associates, Inc.},
 title = {Conservative Contextual Linear Bandits},
 url = {https://proceedings.neurips.cc/paper_files/paper/2017/file/bdc4626aa1d1df8e14d80d345b2a442d-Paper.pdf},
 volume = {30},
 year = {2017}
}

@InProceedings{conservativeMAB,
  title = 	 {Conservative Bandits},
  author = 	 {Wu, Yifan and Shariff, Roshan and Lattimore, Tor and Szepesvari, Csaba},
  booktitle = 	 {Proceedings of The 33rd International Conference on Machine Learning},
  pages = 	 {1254--1262},
  year = 	 {2016},
  editor = 	 {Balcan, Maria Florina and Weinberger, Kilian Q.},
  volume = 	 {48},
  series = 	 {Proceedings of Machine Learning Research},
  address = 	 {New York, New York, USA},
  month = 	 {20--22 Jun},
  publisher =    {PMLR},
  url = 	 {https://proceedings.mlr.press/v48/wu16.html}
}

@inproceedings{christiano2017deep,
  title={Deep reinforcement learning from human preferences},
  author={Christiano, Paul and Leike, Jan and Brown, Tom and Martic, Miljan and Legg, Shane and Amodei, Dario},
  booktitle={NeurIPS},
  year={2017}
}

@misc{ziegler2019fine,
      title={Fine-Tuning Language Models from Human Preferences}, 
      author={Daniel M. Ziegler and Nisan Stiennon and Jeffrey Wu and Tom B. Brown and Alec Radford and Dario Amodei and Paul Christiano and Geoffrey Irving},
      year={2020},
      eprint={1909.08593},
      archivePrefix={arXiv},
      primaryClass={cs.CL},
      url={https://arxiv.org/abs/1909.08593}, 
}

@inproceedings{ouyang2022training,
 author = {Ouyang, Long and Wu, Jeffrey and Jiang, Xu and Almeida, Diogo and Wainwright, Carroll and Mishkin, Pamela and Zhang, Chong and Agarwal, Sandhini and Slama, Katarina and Ray, Alex and Schulman, John and Hilton, Jacob and Kelton, Fraser and Miller, Luke and Simens, Maddie and Askell, Amanda and Welinder, Peter and Christiano, Paul F and Leike, Jan and Lowe, Ryan},
 booktitle = {Advances in Neural Information Processing Systems},
 doi = {10.52202/068431-2011},
 editor = {S. Koyejo and S. Mohamed and A. Agarwal and D. Belgrave and K. Cho and A. Oh},
 pages = {27730--27744},
 publisher = {Curran Associates, Inc.},
 title = {Training language models to follow instructions with human feedback},
 url = {https://proceedings.neurips.cc/paper_files/paper/2022/file/b1efde53be364a73914f58805a001731-Paper-Conference.pdf},
 volume = {35},
 year = {2022}
}

@misc{soufiani2014plackett,
      title={Learning Mixtures of Plackett-Luce Models}, 
      author={Zhibing Zhao and Peter Piech and Lirong Xia},
      year={2020},
      eprint={1603.07323},
      archivePrefix={arXiv},
      primaryClass={cs.LG},
      url={https://arxiv.org/abs/1603.07323}, 
}

@article{hunter2004mm,
  title={MM algorithms for generalized Bradley–Terry models},
  author={Hunter, David R},
  journal={Annals of Statistics},
  year={2004},
  volume={32},
  number={1},
  pages={384--406}
}

@book{luce2005individual,
  title={Individual Choice Behavior: A Theoretical Analysis},
  author={Luce, R. Duncan},
  publisher={Courier Corporation},
  year={2005}
}

@article{yue2012karmed,
author = {Yue, Yisong and Broder, Josef and Kleinberg, Robert and Joachims, Thorsten},
title = {The K-armed dueling bandits problem},
year = {2012},
issue_date = {September, 2012},
publisher = {Academic Press, Inc.},
address = {USA},
volume = {78},
number = {5},
issn = {0022-0000},
url = {https://doi.org/10.1016/j.jcss.2011.12.028},
doi = {10.1016/j.jcss.2011.12.028},
journal = {J. Comput. Syst. Sci.},
month = sep,
pages = {1538–1556},
numpages = {19}
}

@inproceedings{saha2021maxinp,
 author = {Saha, Aadirupa},
 booktitle = {Advances in Neural Information Processing Systems},
 editor = {M. Ranzato and A. Beygelzimer and Y. Dauphin and P.S. Liang and J. Wortman Vaughan},
 pages = {30050--30062},
 publisher = {Curran Associates, Inc.},
 title = {Optimal Algorithms for Stochastic Contextual Preference Bandits },
 url = {https://proceedings.neurips.cc/paper_files/paper/2021/file/fc3cf452d3da8402bebb765225ce8c0e-Paper.pdf},
 volume = {34},
 year = {2021}
}

@article{BradleyTerry,
 ISSN = {00063444, 14643510},
 URL = {http://www.jstor.org/stable/2334029},
 author = {Ralph Allan Bradley and Milton E. Terry},
 journal = {Biometrika},
 number = {3/4},
 pages = {324--345},
 publisher = {[Oxford University Press, Biometrika Trust]},
 title = {Rank Analysis of Incomplete Block Designs: I. The Method of Paired Comparisons},
 urldate = {2025-04-25},
 volume = {39},
 year = {1952}
}

@InProceedings{dueling_feel_good,
  title = 	 {Feel-Good Thompson Sampling for Contextual Dueling Bandits},
  author =       {Li, Xuheng and Zhao, Heyang and Gu, Quanquan},
  booktitle = 	 {Proceedings of the 41st International Conference on Machine Learning},
  pages = 	 {29406--29426},
  year = 	 {2024},
  editor = 	 {Salakhutdinov, Ruslan and Kolter, Zico and Heller, Katherine and Weller, Adrian and Oliver, Nuria and Scarlett, Jonathan and Berkenkamp, Felix},
  volume = 	 {235},
  series = 	 {Proceedings of Machine Learning Research},
  month = 	 {21--27 Jul},
  publisher =    {PMLR},
  url = 	 {https://proceedings.mlr.press/v235/li24co.html}
}

@inproceedings{
verma2025neural,
title={Neural Dueling Bandits: Preference-Based Optimization with Human Feedback},
author={Arun Verma and Zhongxiang Dai and Xiaoqiang Lin and Patrick Jaillet and Bryan Kian Hsiang Low},
booktitle={The Thirteenth International Conference on Learning Representations},
year={2025},
url={https://openreview.net/forum?id=VELhv9BBfn}
}

@inproceedings{deb2025conservative,
  title     = {Conservative Contextual Bandits: Beyond Linear Representations},
  author    = {Deb, Rohan and Ghavamzadeh, Mohammad and Banerjee, Arindam},
  booktitle = {The Thirteenth International Conference on Learning Representations},
  year      = {2025},
  url       = {https://openreview.net/forum?id=SThJXvucjQ}
}

@inproceedings{sekhari2023aurora,
  title     = {Contextual Bandits and Imitation Learning with Preference-Based Active Queries},
  author    = {Sekhari, Ayush and Sridharan, Karthik and Sun, Wen and Wu, Runzhe},
  booktitle = {Advances in Neural Information Processing Systems},
  volume    = {36},
  year      = {2023},
  url       = {https://proceedings.neurips.cc/paper_files/paper/2023/file/2567c95fd41459a98a73ba893775d22a-Paper-Conference.pdf}
}

@book{furnkranz2010preference,
  editor    = {F{\"u}rnkranz, Johannes and H{\"u}llermeier, Eyke},
  title     = {Preference Learning},
  publisher = {Springer},
  year      = {2010}
}

@article{wirth2017survey,
  author  = {Wirth, Christian and Akrour, Riad and Neumann, Gerhard and F{\"u}rnkranz, Johannes},
  title   = {A Survey of Preference-Based Reinforcement Learning Methods},
  journal = {Journal of Machine Learning Research},
  volume  = {18},
  number  = {136},
  pages   = {1--46},
  year    = {2017}
}

@article{bengs2021preference,
  author  = {Bengs, Viktor and Busa-Fekete, R{\'o}bert and El Mesaoudi-Paul, Adil and H{\"u}llermeier, Eyke},
  title   = {Preference-based Online Learning with Dueling Bandits: A Survey},
  journal = {Journal of Machine Learning Research},
  volume  = {22},
  number  = {7},
  pages   = {1--108},
  year    = {2021}
}

@inproceedings{joachims2002optimizing,
  author    = {Joachims, Thorsten},
  title     = {Optimizing Search Engines Using Clickthrough Data},
  booktitle = {Proceedings of the 8th ACM SIGKDD International Conference on Knowledge Discovery and Data Mining},
  pages     = {133--142},
  year      = {2002}
}

@inproceedings{radlinski2008does,
  author    = {Radlinski, Filip and Kurup, Madhu and Joachims, Thorsten},
  title     = {How Does Clickthrough Data Reflect Retrieval Quality?},
  booktitle = {Proceedings of the 17th ACM Conference on Information and Knowledge Management (CIKM)},
  pages     = {43--52},
  year      = {2008}
}

@article{chapelle2012large,
  author  = {Chapelle, Olivier and Joachims, Thorsten and Radlinski, Filip and Yue, Yisong},
  title   = {Large-Scale Validation and Analysis of Interleaved Search Evaluation},
  journal = {ACM Transactions on Information Systems},
  volume  = {30},
  number  = {1},
  pages   = {6:1--6:41},
  year    = {2012}
}

@article{hofmann2016online,
  author  = {Hofmann, Katja and Li, Lihong and Radlinski, Filip},
  title   = {Online Evaluation for Information Retrieval},
  journal = {Foundations and Trends in Information Retrieval},
  volume  = {10},
  number  = {1},
  pages   = {1--117},
  year    = {2016}
}

@INPROCEEDINGS{jones2011comparisons,
  author={Jones, Nicolas and Brun, Armelle and Boyer, Anne},
  booktitle={2011 IEEE/WIC/ACM International Conferences on Web Intelligence and Intelligent Agent Technology}, 
  title={Comparisons Instead of Ratings: Towards More Stable Preferences}, 
  year={2011},
  volume={1},
  number={},
  pages={451-456},
  doi={10.1109/WI-IAT.2011.13}}

@inproceedings{kalloori2016pairwise,
  author    = {Kalloori, Saikishore and Ricci, Francesco and Tkalcic, Marko},
  title     = {Pairwise Preferences Based Matrix Factorization and Nearest Neighbor Recommendation Techniques},
  booktitle = {Proceedings of the 10th ACM Conference on Recommender Systems (RecSys)},
  year      = {2016}
}

@inproceedings{brochu2007active,
  author    = {Brochu, Eric and de Freitas, Nando and Ghosh, Abhijeet},
  title     = {Active Preference Learning with Discrete Choice Data},
  booktitle = {Advances in Neural Information Processing Systems 20 (NIPS)},
  year      = {2007}
}

@inproceedings{gonzalez2017preferential,
  author    = {Gonz{\'a}lez, Javier and Dai, Zhenwen and Damianou, Andreas and Lawrence, Neil D.},
  title     = {Preferential {B}ayesian Optimization},
  booktitle = {Proceedings of the 34th International Conference on Machine Learning (ICML)},
  series    = {PMLR},
  volume    = {70},
  pages     = {1282--1291},
  year      = {2017}
}

@inproceedings{tucker2020preference,
  author    = {Tucker, Maegan and Novoseller, Ellen and Kann, Claudia and Sui, Yanan and Yue, Yisong and Burdick, Joel W. and Ames, Aaron D.},
  title     = {Preference-Based Learning for Exoskeleton Gait Optimization},
  booktitle = {IEEE International Conference on Robotics and Automation (ICRA)},
  pages     = {2351--2357},
  year      = {2020}
}

@inproceedings{stiennon2020learning,
  author    = {Stiennon, Nisan and Ouyang, Long and Wu, Jeffrey and Ziegler, Daniel and Lowe, Ryan and Voss, Chelsea and Radford, Alec and Amodei, Dario and Christiano, Paul F.},
  title     = {Learning to Summarize with Human Feedback},
  booktitle = {Advances in Neural Information Processing Systems 33 (NeurIPS)},
  pages     = {3008--3021},
  year      = {2020}
}

@misc{bai2022training,
  author        = {Bai, Yuntao and Jones, Andy and Ndousse, Kamal and Askell, Amanda and Chen, Anna and DasSarma, Nova and Drain, Dawn and Fort, Stanislav and Ganguli, Deep and Henighan, Tom and others},
  title         = {Training a Helpful and Harmless Assistant with Reinforcement Learning from Human Feedback},
  year          = {2022},
  eprint        = {2204.05862},
  archivePrefix = {arXiv}
}

@inproceedings{rafailov2023direct,
 author = {Rafailov, Rafael and Sharma, Archit and Mitchell, Eric and Manning, Christopher D and Ermon, Stefano and Finn, Chelsea},
 booktitle = {Advances in Neural Information Processing Systems},
 doi = {10.52202/075280-2338},
 editor = {A. Oh and T. Naumann and A. Globerson and K. Saenko and M. Hardt and S. Levine},
 pages = {53728--53741},
 publisher = {Curran Associates, Inc.},
 title = {Direct Preference Optimization: Your Language Model is Secretly a Reward Model},
 url = {https://proceedings.neurips.cc/paper_files/paper/2023/file/a85b405ed65c6477a4fe8302b5e06ce7-Paper-Conference.pdf},
 volume = {36},
 year = {2023}
}

@InProceedings{pmlr-v238-deb24a,
  title = 	 {Think Before You Duel: Understanding Complexities of Preference Learning under Constrained Resources},
  author =       {Deb, Rohan and Saha, Aadirupa and Banerjee, Arindam},
  booktitle = 	 {Proceedings of The 27th International Conference on Artificial Intelligence and Statistics},
  pages = 	 {4546--4554},
  year = 	 {2024},
  editor = 	 {Dasgupta, Sanjoy and Mandt, Stephan and Li, Yingzhen},
  volume = 	 {238},
  series = 	 {Proceedings of Machine Learning Research},
  month = 	 {02--04 May},
  publisher =    {PMLR},
  url = 	 {https://proceedings.mlr.press/v238/deb24a.html}
}

@inproceedings{
deb2024contextual,
title={Contextual Bandits with Online Neural Regression},
author={Rohan Deb and Yikun Ban and Shiliang Zuo and Jingrui He and Arindam Banerjee},
booktitle={The Twelfth International Conference on Learning Representations},
year={2024},
url={https://openreview.net/forum?id=5ep85sakT3}
}

@inproceedings{
deb2025thompson,
title={Thompson Sampling for Constrained Bandits},
author={Rohan Deb and Mohammad Ghavamzadeh and Arindam Banerjee},
booktitle={Reinforcement Learning Conference},
year={2025},
url={https://openreview.net/forum?id=5HL4QaNQcf}
}

@misc{deb2026replicableduelingbandits,
      title={Replicable Dueling Bandits}, 
      author={Rohan Deb and Aadirupa Saha and Udaya Ghai and Karan Singh and Arindam Banerjee},
      year={2026},
      eprint={2610.20908},
      archivePrefix={arXiv},
      primaryClass={cs.LG},
      url={https://arxiv.org/abs/2610.20908}, 
}

@misc{deb2026replicablebanditsucbbased,
      title={Replicable Bandits with UCB based Exploration}, 
      author={Rohan Deb and Udaya Ghai and Karan Singh and Arindam Banerjee},
      year={2026},
      eprint={2604.20024},
      archivePrefix={arXiv},
      primaryClass={cs.LG},
      url={https://arxiv.org/abs/2604.20024}, 
}

@InProceedings{garcelon2020conservativeRL,
  title     = {Conservative Exploration in Reinforcement Learning},
  author    = {Garcelon, Evrard and Ghavamzadeh, Mohammad and Lazaric, Alessandro and Pirotta, Matteo},
  booktitle = {Proceedings of the Twenty Third International Conference on Artificial Intelligence and Statistics},
  pages     = {1431--1441},
  year      = {2020},
  editor    = {Chiappa, Silvia and Calandra, Roberto},
  volume    = {108},
  series    = {Proceedings of Machine Learning Research},
  publisher = {PMLR},
  url       = {https://proceedings.mlr.press/v108/garcelon20a.html}
}

@inproceedings{impagliazzo2022reproducibility,
  title     = {Reproducibility in Learning},
  author    = {Impagliazzo, Russell and Lei, Rex and Pitassi, Toniann and Sorrell, Jessica},
  booktitle = {Proceedings of the 54th Annual ACM SIGACT Symposium on Theory of Computing (STOC)},
  pages     = {818--831},
  year      = {2022},
  publisher = {Association for Computing Machinery},
  doi       = {10.1145/3519935.3519973}
}

@inproceedings{khezeli2020safe,
  title     = {Safe Linear Stochastic Bandits},
  author    = {Khezeli, Kia and Bitar, Eilyan},
  booktitle = {Proceedings of the AAAI Conference on Artificial Intelligence},
  volume    = {34},
  number    = {06},
  pages     = {10202--10209},
  year      = {2020},
  doi       = {10.1609/aaai.v34i06.6581}
}

@inproceedings{moradipari2020stagewise,
  title     = {Stage-wise Conservative Linear Bandits},
  author    = {Moradipari, Ahmadreza and Thrampoulidis, Christos and Alizadeh, Mahnoosh},
  booktitle = {Advances in Neural Information Processing Systems},
  volume    = {33},
  pages     = {11191--11201},
  year      = {2020}
}

@InProceedings{dudik2015contextual,
  title     = {Contextual Dueling Bandits},
  author    = {Dud{\'\i}k, Miroslav and Hofmann, Katja and Schapire, Robert E. and Slivkins, Aleksandrs and Zoghi, Masrour},
  booktitle = {Proceedings of The 28th Conference on Learning Theory},
  pages     = {563--587},
  year      = {2015},
  editor    = {Gr{\"u}nwald, Peter and Hazan, Elad and Kale, Satyen},
  volume    = {40},
  series    = {Proceedings of Machine Learning Research},
  publisher = {PMLR},
  url       = {https://proceedings.mlr.press/v40/Dudik15.html}
}

\appendix

\newpage
\section{Proofs for Section~\ref{sec:linear}}
\label{app:linear-proofs}
Throughout this appendix Assumptions~\ref{asmp:BTL-model}, \ref{asmp:bounded}, \ref{asmp:gapBounds} and \ref{asmp:linear} are in force, and $\lambda>0$, $\alpha,\delta\in(0,1)$ are the inputs of Algorithm~\ref{alg:cmaxinp}.

\subsection{Notation and filtration}
\label{app:filtration}

Recall $\z_t=\x_t-\y_t$ is the comparison vector of the pair actually displayed at round $t$, that $V_1=\lambda I_d$ and $V_{t+1}=V_t+\z_t\z_t^\top$, and write
\[
    \varepsilon_t \;:=\; o_t-\sigma(\z_t^\top\theta^\star),
    \qquad
    \Ss_t \;:=\; \sum_{i=1}^{t}\varepsilon_i\z_i,
    \qquad
    v_t \;:=\; \|\z_t\|_{V_t^{-1}} .
\]
Let $\cF_{t-1}$ be the $\sigma$-algebra generated by all contexts, displayed pairs and outcomes of rounds $1,\dots,t-1$, together with the round-$t$ context set $\cS_t$, the baseline item $\x_{b_t}$ and its score, and the round-$t$ displayed pair. Every quantity the algorithm computes at round $t$ --- $V_t$, $\hat\theta_t$, $\beta_t$, $\cC_t$, $(\tilde\x_t,\tilde\y_t)$, $w_t$, $c_t$, $\Lambda_{t-1}$, the acceptance decision, and hence $\z_t$ --- is $\cF_{t-1}$-measurable, and $(\cF_t)_{t\ge0}$ is a filtration with $\E[o_t\mid\cF_{t-1}]=\sigma(\z_t^\top\theta^\star)$ by Assumption~\ref{asmp:BTL-model}. Unlike analyses that discard uninformative baseline rounds, no random time change is needed here.

\begin{remark}[\textbf{Degenerate rounds}]
\label{rem:degenerate}
    A round with $\z_t=0$ contributes $\varepsilon_t\z_t=0$ to $\Ss_t$, leaves $V_{t+1}=V_t$, and adds the constant $\log 2$ to the objective in \eqref{eq:design}; it therefore affects neither the estimator nor any argument below. Such a round can only be a candidate round: by \eqref{eq:width-transfer}, if $\max_{\y\in\cC_t}\|\x_{b_t}-\y\|_{V_t^{-1}}=0$ then $w_t=0$, hence $c_t=0$, and the acceptance test passes because $\Lambda_{t-1}\ge0$ and $\tfrac12-(1-\alpha)s_t(\x_{b_t})\ge\tfrac\alpha2>0$ (Lemma~\ref{lem:surplus-floor}). Consequently every fallback round has $v_t\ge w_t/2>0$.
\end{remark}

\subsection{Elementary bounds on scores}

\begin{lemm}
\label{lem:surplus-floor}
    For every $t$ and every $\x\in\cS_t$, $\;\sigma(-B)\le s_t(\x)\le\tfrac12$, and $s_t(\x_t^\star)=\tfrac12$. Moreover $\underline{s}:=\sigma(-\Delta_h)$ satisfies $\underline{s}\le s_t(\x_{b_t})\le\tfrac12$, so that
    \[
        \tfrac12-(1-\alpha)s_t(\x_{b_t}) \;\ge\; \tfrac\alpha2
        \qquad\text{and}\qquad
        \alpha\,s_t(\x_{b_t}) \;\ge\; \alpha\underline{s}.
    \]
\end{lemm}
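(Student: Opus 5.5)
The plan is to work directly from the definition $s_t(\x)=\sigma\big(f(\x)-f(\x_t^\star)\big)$ and use only two facts: $\sigma$ is monotone increasing, and $\sigma(0)=\tfrac12$.

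First I would treat the two-sided bound for a generic item $\x\in\cS_t$. Since $\x_t^\star$ maximizes $f$ over $\cS_t$, the argument $f(\x)-f(\x_t^\star)$ is at most $0$. By Assumption~\ref{asmp:bounded}, $|f(\x)-f(\x_t^\star)|\le B$, so the argument lies in $[-B,0]$. Monotonicity of $\sigma$ then gives $\sigma(-B)\le s_t(\x)\le\sigma(0)=\tfrac12$. Taking $\x=\x_t^\star$ makes the argument exactly $0$, which gives $s_t(\x_t^\star)=\tfrac12$.

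Second, for the baseline I would write $s_t(\x_{b_t})=\sigma(-\Delta_{t,b_t})$. Assumption~\ref{asmp:gapBounds} gives $\Delta_{t,b_t}\le\Delta_h$, and monotonicity then yields $s_t(\x_{b_t})\ge\sigma(-\Delta_h)=\underline{s}$. The upper bound $\tfrac12$ is the first step applied to $\x=\x_{b_t}$.

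The two displayed consequences follow by arithmetic:
\begin{itemize}
\item For the first, use $s_t(\x_{b_t})\le\tfrac12$ and $1-\alpha>0$ to get $(1-\alpha)s_t(\x_{b_t})\le\tfrac{1-\alpha}{2}$, so $\tfrac12-(1-\alpha)s_t(\x_{b_t})\ge\tfrac\alpha2$.
\item For the second, multiply $s_t(\x_{b_t})\ge\underline{s}$ by $\alpha>0$.
\end{itemize}

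I do not expect a real obstacle here. The only point needing care is the sign convention: $\x_t^\star$ must be a maximizer over the same set $\cS_t$ that contains $\x$. This guarantees the sigmoid argument is nonpositive, so the upper bound is exactly $\tfrac12$ rather than something weaker.
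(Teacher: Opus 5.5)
Your proof is correct and follows essentially the same route as the paper. Both arguments place the sigmoid argument in $[-B,0]$ using Assumption~\ref{asmp:bounded} and the maximality of $\x_t^\star$, then use monotonicity of $\sigma$, $\sigma(0)=\tfrac12$, the gap bound $\Delta_{t,b_t}\le\Delta_h$, and the same final arithmetic. The only cosmetic difference is that the paper obtains the baseline's upper bound from $s_t(\x_{b_t})\le\sigma(-\Delta_l)\le\tfrac12$, whereas you apply your first step to $\x_{b_t}\in\cS_t$; both are valid.
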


\begin{proof}
    By definition $s_t(\x)=\sigma\big(f(\x)-f(\x_t^\star)\big)$ with $f(\x)-f(\x_t^\star)\in[-B,0]$ by Assumption~\ref{asmp:bounded} and the definition of $\x_t^\star$; monotonicity of $\sigma$ and $\sigma(0)=\tfrac12$ give the first claim. For the baseline, $f(\x_t^\star)-f(\x_{b_t})=\Delta_{t,b_t}\in[\Delta_l,\Delta_h]$ by Assumption~\ref{asmp:gapBounds}, so $s_t(\x_{b_t})\in[\sigma(-\Delta_h),\sigma(-\Delta_l)]\subseteq[\underline{s},\tfrac12]$. Finally $\tfrac12-(1-\alpha)s_t(\x_{b_t})\ge\tfrac12-(1-\alpha)\tfrac12=\tfrac\alpha2$.
\end{proof}

Note for later use that $\underline{s}\le\tfrac12$, since $\Delta_h\ge0$.

\subsection{Anytime logistic confidence ellipsoid}
\label{app:ellipsoid}

This subsection proves Lemma~\ref{lem:ellipsoid}. The argument is self-contained.

\begin{lemm}[Hoeffding's lemma]
\label{lem:hoeffding}
    Let $X$ be a random variable supported on an interval of length $b-a$. Then $\E\exp\big(\eta(X-\E X)\big)\le\exp\big(\eta^2(b-a)^2/8\big)$ for every $\eta\in\R$.
\end{lemm}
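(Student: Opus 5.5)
The plan is to reduce to the case $\E X=0$ by centering: set $Y=X-\E X$, which is supported on an interval of the same length $b-a$, say $[a',b']$ with $a'\le0\le b'$. Then it suffices to show $\psi(\eta):=\log\E e^{\eta Y}\le\eta^2(b'-a')^2/8$ for every $\eta\in\R$. For bounded $Y$ the expectation is finite and smooth in $\eta$, so all differentiations under the expectation below are justified.

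\textbf{Step 1 (convexity bound).} Since $y\mapsto e^{\eta y}$ is convex on $[a',b']$, we have the pointwise bound
\[
e^{\eta y}\le \tfrac{b'-y}{b'-a'}e^{\eta a'}+\tfrac{y-a'}{b'-a'}e^{\eta b'}.
\]
Taking expectations and using $\E Y=0$ gives $\E e^{\eta Y}\le (1-p)e^{\eta a'}+pe^{\eta b'}$ with $p=-a'/(b'-a')\in[0,1]$. Writing $u=\eta(b'-a')$, the right side equals $e^{L(u)}$, where $L(u)=-pu+\log(1-p+pe^{u})$.

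\textbf{Step 2 (Taylor).} We have $L(0)=0$ and $L'(0)=0$. Moreover $L''(u)=q(1-q)$ with $q=pe^u/(1-p+pe^u)\in[0,1]$, so $L''(u)\le\tfrac14$. Taylor's theorem with remainder then gives $L(u)\le u^2/8=\eta^2(b-a)^2/8$, which is the claim.

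\textbf{Alternative and main subtlety.} One could instead work directly with $\psi$: we have $\psi(0)=\psi'(0)=0$, and $\psi''(\eta)$ is the variance of $Y$ under the exponentially tilted law. This law is again supported on an interval of length $b-a$, so its variance is at most $(b-a)^2/4$ (Popoviciu's inequality). The step I expect to need the most care is exactly this variance/curvature bound, $L''\le 1/4$ (or $\mathrm{Var}\le (b-a)^2/4$), together with handling the degenerate case $a=b$, where both sides are trivially $1$. Everything else is routine.
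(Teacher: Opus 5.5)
Your proof is correct and follows the paper's argument: the chord bound from convexity, then a log-moment function with value and first derivative zero at the origin and second derivative $q(1-q)\le\tfrac14$, closed by Taylor's formula with remainder. The only difference is order of normalization: the paper rescales to $[0,1]$ first (so your $p$ is its $\mu=\E X$) and then centers, whereas you center first and absorb the scale into $u=\eta(b'-a')$.
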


\begin{proof}
    After the affine rescaling $a=0$, $b=1$ it suffices to treat $X\in[0,1]$; write $\mu=\E X$. Convexity of the exponential gives $e^{\eta x}\le(1-x)e^0+xe^\eta$ for $x\in[0,1]$, so taking expectations and centring,
    \begin{align}
        \E e^{\eta(X-\mu)} \;\le\; (1-\mu)e^{-\mu\eta}+\mu e^{(1-\mu)\eta}.
        \label{eq:hoeff-1}
    \end{align}
    Let $h(\eta)=\log\big((1-\mu)e^{-\mu\eta}+\mu e^{(1-\mu)\eta}\big)$ and $q(\eta)=\mu e^{(1-\mu)\eta}/\big((1-\mu)e^{-\mu\eta}+\mu e^{(1-\mu)\eta}\big)$. Differentiating, $h'(\eta)=q(\eta)-\mu$ and $h''(\eta)=q(\eta)(1-q(\eta))\le\tfrac14$, while $h(0)=0$ and $h'(0)=q(0)-\mu=0$. Taylor's formula with integral remainder, valid for either sign of $\eta$, gives
    \[
        h(\eta)=\eta^2\int_0^1(1-s)h''(s\eta)\,\dif s\;\le\;\frac{\eta^2}{8}.
    \]
    Exponentiating and combining with \eqref{eq:hoeff-1} proves the normalised case; for a general interval apply it to $(X-a)/(b-a)$ with parameter $\eta(b-a)$.
\end{proof}

\begin{lemm}[Conditional sub-Gaussianity of preference residuals]
\label{lem:subgaussian}
    For every $t\ge1$, $\E[\varepsilon_t\mid\cF_{t-1}]=0$ and $\E[e^{\eta\varepsilon_t}\mid\cF_{t-1}]\le e^{\eta^2/8}$ for all $\eta\in\R$. That is, $\varepsilon_t$ is conditionally $R$-sub-Gaussian with $R=\tfrac12$ in the convention $\E[e^{\eta\varepsilon}\mid\cF_{t-1}]\le e^{R^2\eta^2/2}$.
\end{lemm}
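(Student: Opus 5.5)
The argument is a direct conditioning step followed by an application of Lemma~\ref{lem:hoeffding}. First, $\z_t$ is $\cF_{t-1}$-measurable: by the construction of the filtration in Appendix~\ref{app:filtration}, the round-$t$ displayed pair is included in $\cF_{t-1}$. Assumption~\ref{asmp:BTL-model}, read with $\cH_t^-$ replaced by $\cF_{t-1}$, then gives $\P(o_t=1\mid\cF_{t-1})=\sigma(\z_t^\top\theta^\star)=:p_t$. I will make this identification explicit. The information in $\cF_{t-1}$ is exactly the information available immediately before $o_t$ is revealed, so $\cF_{t-1}$ coincides with (or is contained in) the $\sigma$-algebra generated by $\cH_t^-$. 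If it is only contained, the tower property still yields the same conditional probability, because $p_t$ is itself $\cF_{t-1}$-measurable. Since $o_t\in\{0,1\}$, we get $\E[o_t\mid\cF_{t-1}]=p_t$, and hence $\E[\varepsilon_t\mid\cF_{t-1}]=0$.

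For the moment bound, fix $\eta\in\R$. Conditionally on $\cF_{t-1}$, $o_t$ is Bernoulli with mean $p_t$ and is supported on $[0,1]$, an interval of length $1$. Lemma~\ref{lem:hoeffding}, applied to the conditional law, gives
\[
\E\big[e^{\eta(o_t-p_t)}\mid\cF_{t-1}\big]\le e^{\eta^2/8}.
\]
Because $\varepsilon_t=o_t-p_t$ with $p_t$ being $\cF_{t-1}$-measurable, this is the desired inequality. Writing $e^{\eta^2/8}=e^{R^2\eta^2/2}$ identifies $R=\tfrac12$.

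The only step needing care is the passage from the unconditional Hoeffding lemma to its conditional version. I would handle it by evaluating the conditional expectation explicitly: $\E[e^{\eta\varepsilon_t}\mid\cF_{t-1}]=(1-p_t)e^{-\eta p_t}+p_te^{\eta(1-p_t)}$. This is exactly the right-hand side of \eqref{eq:hoeff-1} with $\mu=p_t$, and the proof of Lemma~\ref{lem:hoeffding} bounds it by $e^{\eta^2/8}$ for every fixed $\mu\in[0,1]$. The bound therefore holds pointwise in $p_t$, and no regular conditional distribution argument is needed.
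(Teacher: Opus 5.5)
Your proof is correct and follows the paper's argument. Since $\z_t$ is $\cF_{t-1}$-measurable, $o_t$ is conditionally Bernoulli with mean $\sigma(\z_t^\top\theta^\star)$, and Lemma~\ref{lem:hoeffding} is then applied conditionally; you justify that conditional step, which the paper leaves implicit, by writing the conditional moment generating function explicitly as the right-hand side of \eqref{eq:hoeff-1} with $\mu=p_t$.
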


\begin{proof}
    Conditionally on $\cF_{t-1}$ the outcome $o_t$ is Bernoulli with mean $\sigma(\z_t^\top\theta^\star)$, since $\z_t$ is $\cF_{t-1}$-measurable; it is supported on an interval of length one. Apply Lemma~\ref{lem:hoeffding} conditionally.
\end{proof}

\begin{lemm}[Ville's inequality]
\label{lem:ville}
    Let $(M_n)_{n\ge0}$ be a nonnegative supermartingale with $\E M_0\le1$. Then $\P(\sup_{n\ge0}M_n\ge c)\le1/c$ for every $c>0$.
\end{lemm}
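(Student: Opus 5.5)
The plan is to prove Ville's inequality by a stopping-time argument combined with optional stopping for nonnegative supermartingales. Fix $c>0$ and a finite horizon $N$, and define the stopping time $\tau:=\inf\{n\ge0: M_n\ge c\}\wedge N$. This $\tau$ is bounded. Since $(M_n)$ is a supermartingale, the stopped process $(M_{n\wedge\tau})_{n\ge0}$ is also a supermartingale. Applying this with the deterministic time $N$ (where $\tau\le N$) gives
\[
    \E M_\tau \;\le\; \E M_0 \;\le\; 1 .
\]
I would establish the stopped-process fact directly. Write $M_{(n+1)\wedge\tau}-M_{n\wedge\tau}=\I\{\tau>n\}(M_{n+1}-M_n)$, and note that $\{\tau>n\}$ is measurable with respect to the $n$-th $\sigma$-algebra. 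Taking conditional expectations then gives a nonpositive increment.

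Next, on the event $A_N:=\{\max_{0\le n\le N}M_n\ge c\}$ we have $M_\tau\ge c$ by the definition of $\tau$. Off this event, nonnegativity gives $M_\tau\ge0$. Markov's inequality in the form $c\,\P(A_N)\le\E[M_\tau\I_{A_N}]\le\E M_\tau\le1$ then yields $\P(A_N)\le1/c$ for every $N$.

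Finally, the events $A_N$ increase to $\{\sup_{n\ge0}M_n\ge c\}$ as $N\to\infty$. For this union I need the following observation: if the supremum is at least $c$ but is not attained by any $M_n$, the union only contains $\{\sup_n M_n>c'\}$ for each $c'<c$. So I will apply the finite-horizon bound at level $c'<c$, pass to the limit by continuity from below, and then let $c'\uparrow c$, which gives $\P(\sup_n M_n\ge c)\le\lim_{c'\uparrow c}1/c'=1/c$.

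This limiting step is the only delicate point, namely the distinction between the strict and non-strict inequality at the supremum. The optional-stopping step is routine because $\tau$ is bounded, so no integrability or uniform-integrability issues arise.
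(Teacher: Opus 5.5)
Your proof is correct and follows the same route as the paper. Both stop at the first crossing of $c$ truncated at $N$, show $\E M_{\tau}\le \E M_0$ from the increments $\I\{\tau>n\}(M_{n+1}-M_n)$ (the paper writes the same telescoping sum with $\I\{\tau_N\ge n\}$), apply Markov on $\{\max_{n\le N}M_n\ge c\}$, and let $N\to\infty$.

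Your extra step at level $c'<c$ with $c'\uparrow c$ is a genuine refinement. Continuity from below alone only bounds $\P(\exists n: M_n\ge c)$, which can be smaller than $\P(\sup_n M_n\ge c)$ when the supremum equals $c$ without being attained, and the paper's one-line limit passes over this case.
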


\begin{proof}
    Fix a deterministic $N$ and let $\tau_N=\min\{n\le N: M_n\ge c\}$, with $\tau_N=N$ if the set is empty. Since $\{\tau_N\ge n\}\in\cF_{n-1}$,
    \[
        \E M_{\tau_N}
        =\E M_0+\sum_{n=1}^{N}\E\big[\I\{\tau_N\ge n\}\,\E[M_n-M_{n-1}\mid\cF_{n-1}]\big]
        \;\le\;\E M_0\;\le\;1 .
    \]
    On the event $\max_{0\le n\le N}M_n\ge c$ the stopped value is at least $c$, so $c\,\P(\max_{n\le N}M_n\ge c)\le\E M_{\tau_N}\le1$. Let $N\to\infty$ and use continuity from below.
\end{proof}

\begin{lemm}[Method-of-mixtures self-normalised bound]
\label{lem:mixtures}
    Let $(\varepsilon_i,z_i)_{i\ge1}$ be adapted with $z_i\in\R^d$ being $\cF_{i-1}$-measurable and $\E[e^{\eta\varepsilon_i}\mid\cF_{i-1}]\le e^{R^2\eta^2/2}$ for all $\eta\in\R$. Let $V_0\succ0$, $S_n=\sum_{i\le n}\varepsilon_iz_i$ and $V_n'=V_0+\sum_{i\le n}z_iz_i^\top$. Then with probability at least $1-\delta$, simultaneously for every $n\ge0$,
    \begin{align}
        \|S_n\|_{(V_n')^{-1}}^2 \;\le\; 2R^2\log\!\left(\frac{\det(V_n')^{1/2}}{\det(V_0)^{1/2}\,\delta}\right).
        \label{eq:selfnorm}
    \end{align}
\end{lemm}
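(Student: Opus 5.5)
We will follow the classical method-of-mixtures argument of Abbasi-Yadkori et al., built on the tools just established: conditional sub-Gaussianity and Ville's inequality (Lemma~\ref{lem:ville}).

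\textbf{Exponential supermartingale.} For each fixed $\eta\in\R^d$ define
\[
M_n^\eta:=\exp\Big(\eta^\top S_n-\tfrac{R^2}{2}\|\eta\|^2_{V_n'-V_0}\Big),
\qquad
\|\eta\|^2_{V_n'-V_0}=\sum_{i\le n}(\eta^\top z_i)^2 .
\]
Since $z_n$ is $\cF_{n-1}$-measurable, the sub-Gaussian hypothesis applied with scalar parameter $\eta^\top z_n$ gives
\[
\E[M_n^\eta\mid\cF_{n-1}]\le M_{n-1}^\eta .
\]
Hence $M^\eta$ is a nonnegative supermartingale with $M_0^\eta=1$.

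\textbf{Mixing over $\eta$.} Next we mix over $\eta\sim\mathcal N(0,R^{-2}V_0^{-1})$, with density $h$, and set $\bar M_n:=\int M_n^\eta h(\eta)\,\dif\eta$.
\begin{itemize}
\item By Tonelli and conditional Fubini, $\bar M_n$ is again a nonnegative supermartingale with $\bar M_0=1$.
\item The Gaussian integral is computed by completing the square: the exponent is $\eta^\top S_n-\tfrac{R^2}{2}\eta^\top V_n'\eta$.
\item This yields the closed form
\[
\bar M_n=\Big(\tfrac{\det V_0}{\det V_n'}\Big)^{1/2}\exp\Big(\tfrac{1}{2R^2}\|S_n\|^2_{(V_n')^{-1}}\Big).
\]
\end{itemize}
The computation is routine but requires care with the $R^2$ scaling. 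With the prior covariance $R^{-2}V_0^{-1}$, the combined precision is $R^2V_n'$, so the quadratic term becomes $\tfrac{1}{2}S_n^\top(R^2V_n')^{-1}S_n$. The determinant ratio is $\det(R^2V_0)^{1/2}/\det(R^2V_n')^{1/2}$, in which the $R$ factors cancel.

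\textbf{Conclusion via Ville.} Applying Lemma~\ref{lem:ville} with $c=1/\delta$ gives, with probability at least $1-\delta$, $\bar M_n<1/\delta$ for all $n\ge0$ simultaneously. Taking logarithms and rearranging yields exactly \eqref{eq:selfnorm}.

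\textbf{Main obstacle.} The only delicate point is justifying the supermartingale property of the mixture, i.e.\ exchanging the conditional expectation with the integral over $\eta$. This is handled by nonnegativity together with Tonelli's theorem for conditional expectations, using a regular version (or simply the product-measurable joint integrand). The uniform-in-time statement then comes for free from Ville's inequality, with no union bound.
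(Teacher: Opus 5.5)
Your proposal is correct and follows the paper's proof essentially step for step: you use the same fixed-direction exponential supermartingale, the same Gaussian mixing density with covariance $R^{-2}V_0^{-1}$ justified by Tonelli, the same completion of the square giving $\bigl(\det V_0/\det V_n'\bigr)^{1/2}\exp\bigl(\tfrac{1}{2R^2}\|S_n\|^2_{(V_n')^{-1}}\bigr)$, and Ville's inequality (Lemma~\ref{lem:ville}) with $c=1/\delta$. Your constants, including the cancellation of the $R$ factors in the determinant ratio, agree with the paper's.
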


\begin{proof}
    Fix $q\in\R^d$. Conditional sub-Gaussianity with $\eta=q^\top z_i$ gives $\E\big[\exp\big(q^\top z_i\varepsilon_i-\tfrac{R^2}{2}(q^\top z_i)^2\big)\mid\cF_{i-1}\big]\le1$, so
    \[
        M_n(q)=\exp\Big(q^\top S_n-\tfrac{R^2}{2}q^\top(V_n'-V_0)q\Big)
    \]
    is a nonnegative supermartingale with $M_0(q)=1$. Let $Q$ be an independent Gaussian vector with density $f(q)=R^d\det(V_0)^{1/2}(2\pi)^{-d/2}\exp(-\tfrac{R^2}{2}q^\top V_0q)$, which integrates to one because $\int e^{-\frac12 q^\top Aq}\dif q=(2\pi)^{d/2}\det(A)^{-1/2}$ with $A=R^2V_0$. By Tonelli's theorem and the conditional supermartingale property for each fixed $q$, the mixture $M_n=\int M_n(q)f(q)\dif q$ is a nonnegative supermartingale with $M_0=1$, and
    \[
        M_n=\frac{R^d\det(V_0)^{1/2}}{(2\pi)^{d/2}}\int_{\R^d}\exp\Big(q^\top S_n-\tfrac{R^2}{2}q^\top V_n'q\Big)\dif q .
    \]
    Completing the square with $a=R^{-2}(V_n')^{-1}S_n$ gives
    \[
        q^\top S_n-\tfrac{R^2}{2}q^\top V_n'q
        = -\tfrac{R^2}{2}(q-a)^\top V_n'(q-a)+\tfrac{1}{2R^2}\|S_n\|^2_{(V_n')^{-1}},
    \]
    and evaluating the Gaussian integral yields
    \begin{align}
        M_n=\left(\frac{\det V_0}{\det V_n'}\right)^{1/2}\exp\left(\frac{1}{2R^2}\|S_n\|^2_{(V_n')^{-1}}\right).
        \label{eq:mixture-value}
    \end{align}
    Lemma~\ref{lem:ville} with $c=1/\delta$ bounds $\sup_n M_n<1/\delta$ with probability at least $1-\delta$; taking logarithms in \eqref{eq:mixture-value} and rearranging gives \eqref{eq:selfnorm}.
\end{proof}

\begin{lemm}[Trace--determinant bound]
\label{lem:tracedet}
    If $\|z_i\|_2\le2L$ for all $i$ and $W_m=\lambda I_d+\sum_{i=1}^{m-1}z_iz_i^\top$, then
    \[
        \log\frac{\det W_m}{\det(\lambda I_d)}\;\le\;d\log\!\Big(1+\frac{4L^2(m-1)}{\lambda d}\Big).
    \]
\end{lemm}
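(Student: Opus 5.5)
The plan is the standard AM--GM argument on the eigenvalues of $W_m$. Since $W_m$ is symmetric positive definite, $\det W_m=\prod_{j=1}^d\mu_j$, where $\mu_1,\dots,\mu_d>0$ are its eigenvalues. The arithmetic--geometric mean inequality then gives
\[
    \det W_m\;\le\;\Big(\frac{\operatorname{tr} W_m}{d}\Big)^{d}.
\]
Here $\operatorname{tr}$ denotes the trace. Since the paper defines the operator $\tr$, I will write $\tr W_m$ in the final version.

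Next I compute the trace directly from the definition. Linearity of the trace and $\tr(z_iz_i^\top)=\|z_i\|_2^2$ give
\[
    \tr W_m=\lambda d+\sum_{i=1}^{m-1}\|z_i\|_2^2.
\]
Using $\|z_i\|_2\le 2L$, this is at most $\lambda d+4L^2(m-1)$. Hence
\[
    \det W_m\le\Big(\lambda+\frac{4L^2(m-1)}{d}\Big)^{d}.
\]

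Finally, I divide by $\det(\lambda I_d)=\lambda^d$ and take logarithms, which yields $d\log\big(1+4L^2(m-1)/(\lambda d)\big)$, the claimed bound. The case $m=1$ is trivial, since both sides equal zero.

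There is no real obstacle here. The only point needing care is that AM--GM applies, and it does because all eigenvalues are at least $\lambda>0$, so they are positive. The bound also uses only the norm bound on the $z_i$, not their adaptivity. It therefore holds pathwise and can be combined with the anytime bound of Lemma~\ref{lem:mixtures} on the same event. In that application, $n=m-1$ and $V_0=\lambda I_d$, which gives the $\beta_t$ in \eqref{eq:beta}.
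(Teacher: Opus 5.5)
Your proposal is correct and is essentially the paper's proof. The paper applies AM--GM to the factors $1+\rho_j$, where $\rho_j$ are the eigenvalues of $\lambda^{-1}\sum_{i<m}z_iz_i^\top$, which is your argument on the eigenvalues of $W_m$ with $\lambda$ factored out.
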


\begin{proof}
    Let $A=\lambda^{-1}\sum_{i<m}z_iz_i^\top\succeq0$ with eigenvalues $\rho_1,\dots,\rho_d\ge0$, so that $\det W_m/\det(\lambda I_d)=\prod_j(1+\rho_j)$. By the arithmetic--geometric mean inequality, $\prod_j(1+\rho_j)\le\big(1+\tr(A)/d\big)^d$, and $\tr(A)=\lambda^{-1}\sum_{i<m}\|z_i\|_2^2\le 4L^2(m-1)/\lambda$. Take logarithms.
\end{proof}

\begin{lemm}[Global curvature of the regularised likelihood]
\label{lem:curvature}
    Let $F_t(\theta)=\sum_{i<t}\big[\log(1+e^{\z_i^\top\theta})-o_i\z_i^\top\theta\big]+\tfrac\lambda2\|\theta\|_2^2$. For all $\theta,\theta'\in\Theta$,
    \[
        \big\langle\nabla F_t(\theta)-\nabla F_t(\theta'),\,\theta-\theta'\big\rangle\;\ge\;\kappa\,\|\theta-\theta'\|^2_{V_t}.
    \]
\end{lemm}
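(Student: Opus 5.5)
The plan is to write the gradient difference explicitly and apply the mean value theorem to the sigmoid along the segment. Differentiating $F_t$ gives
\[
    \nabla F_t(\theta)=\sum_{i<t}\big(\sigma(\z_i^\top\theta)-o_i\big)\z_i+\lambda\theta ,
\]
so the outcomes $o_i$ cancel in the difference and
\[
    \big\langle\nabla F_t(\theta)-\nabla F_t(\theta'),\,\theta-\theta'\big\rangle
    =\sum_{i<t}\big(\sigma(\z_i^\top\theta)-\sigma(\z_i^\top\theta')\big)\,\z_i^\top(\theta-\theta')+\lambda\|\theta-\theta'\|_2^2 .
\]

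For each $i$, the mean value theorem gives a point $\xi_i$ between $\z_i^\top\theta'$ and $\z_i^\top\theta$ with
\[
    \sigma(\z_i^\top\theta)-\sigma(\z_i^\top\theta')=\sigma'(\xi_i)\,\z_i^\top(\theta-\theta').
\]
The $i$-th summand is therefore $\sigma'(\xi_i)\big(\z_i^\top(\theta-\theta')\big)^2$, which is nonnegative.

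The key step is bounding $\sigma'(\xi_i)\ge\kappa$, which requires $|\xi_i|\le B$. Since $\z_i=\x_i-\y_i$ with $\|\x_i\|_2,\|\y_i\|_2\le L$ and $\theta,\theta'\in\Theta$ satisfy $\|\theta\|_2,\|\theta'\|_2\le S$, Cauchy--Schwarz gives $|\z_i^\top\theta|\le 2LS=B$, and likewise for $\theta'$. The point $\xi_i$ lies between these two values, so $|\xi_i|\le B$. By the definition of $\kappa$ in Assumption~\ref{asmp:bounded} (equivalently Assumption~\ref{asmp:linear}), this gives $\sigma'(\xi_i)\ge\kappa$. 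This is the only place where the restriction to $\Theta$ matters: outside the ball the slope of the link can decay arbitrarily.

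Finally, I will handle the regulariser. Since $\sigma'\le\tfrac14$, we have $\kappa\le\tfrac14<1$, so $\lambda\|\theta-\theta'\|_2^2\ge\kappa\lambda\|\theta-\theta'\|_2^2$. Summing the bounds,
\[
    \big\langle\nabla F_t(\theta)-\nabla F_t(\theta'),\,\theta-\theta'\big\rangle
    \ge\kappa(\theta-\theta')^\top\Big(\lambda I_d+\sum_{i<t}\z_i\z_i^\top\Big)(\theta-\theta')
    =\kappa\|\theta-\theta'\|_{V_t}^2 .
\]
No step is genuinely hard; the one point needing care is verifying $|\xi_i|\le B$ via the bounds $\|\z_i\|_2\le 2L$ and $\|\theta\|_2\le S$.
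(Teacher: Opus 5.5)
Your proof is correct and is the same argument as the paper's. The paper lower-bounds the Hessian, $\nabla^2F_t(\vartheta)\succeq\kappa V_t$, uniformly on $\Theta$ (using $|\z_i^\top\vartheta|\le 2LS=B$ and $\kappa<1$ for the regulariser), and integrates it along the segment from $\theta'$ to $\theta$; your per-summand mean value theorem is exactly the scalar form of that integration, with $\xi_i\in[-B,B]$ because both endpoints $\z_i^\top\theta$ and $\z_i^\top\theta'$ lie in that interval.
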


\begin{proof}
    For $\vartheta\in\Theta$ we have $\nabla^2F_t(\vartheta)=\sum_{i<t}\sigma'(\z_i^\top\vartheta)\z_i\z_i^\top+\lambda I_d$. Since $\|\z_i\|_2\le2L$ and $\|\vartheta\|_2\le S$, $|\z_i^\top\vartheta|\le2LS=B$, so $\sigma'(\z_i^\top\vartheta)\ge\kappa$ and the fact that $\sigma'$ is even and decreasing in $|s|$. As $0<\kappa\le\tfrac14<1$,
    \[
        \nabla^2F_t(\vartheta)\;\succeq\;\kappa\sum_{i<t}\z_i\z_i^\top+\lambda I_d\;\succeq\;\kappa\Big(\sum_{i<t}\z_i\z_i^\top+\lambda I_d\Big)=\kappa V_t .
    \]
    Writing $\mathbf{u}=\theta-\theta'$ and noting $\Theta$ is convex, the fundamental theorem of calculus gives $\langle\nabla F_t(\theta)-\nabla F_t(\theta'),\mathbf{u}\rangle=\int_0^1\mathbf{u}^\top\nabla^2F_t(\theta'+s\mathbf{u})\mathbf{u}\,\dif s\ge\kappa\,\mathbf{u}^\top V_t\mathbf{u}$.
\end{proof}

\noindent\textbf{Lemma~\ref{lem:ellipsoid} (restated).} \emph{With probability at least $1-\delta$, the event $\cE=\{\|\hat\theta_t-\theta^\star\|_{V_t}\le\beta_t\ \forall t\ge1\}$ holds, with $\beta_t$ as in \eqref{eq:beta}.}

\begin{proof}
    Set $\dd_t=\hat\theta_t-\theta^\star$. Since $\hat\theta_t$ minimises the differentiable convex $F_t$ over the closed convex $\Theta$ and $\theta^\star\in\Theta$, first-order optimality gives $\langle\nabla F_t(\hat\theta_t),\theta^\star-\hat\theta_t\rangle\ge0$, i.e.\ $\langle\nabla F_t(\hat\theta_t),\dd_t\rangle\le0$. Lemma~\ref{lem:curvature} with $\theta=\hat\theta_t$, $\theta'=\theta^\star$ then yields
    \begin{align}
        \kappa\|\dd_t\|_{V_t}^2\;\le\;\big\langle\nabla F_t(\hat\theta_t)-\nabla F_t(\theta^\star),\dd_t\big\rangle\;\le\;-\big\langle\nabla F_t(\theta^\star),\dd_t\big\rangle .
        \label{eq:ellip-1}
    \end{align}
    The gradient at the truth is $\nabla F_t(\theta^\star)=\sum_{i<t}\big(\sigma(\z_i^\top\theta^\star)-o_i\big)\z_i+\lambda\theta^\star=-\Ss_{t-1}+\lambda\theta^\star$, so by Cauchy--Schwarz in the $V_t$ geometry,
    \[
        -\big\langle\nabla F_t(\theta^\star),\dd_t\big\rangle
        =\langle \Ss_{t-1},\dd_t\rangle-\lambda\langle\theta^\star,\dd_t\rangle
        \;\le\;\Big(\|\Ss_{t-1}\|_{V_t^{-1}}+\lambda\|\theta^\star\|_{V_t^{-1}}\Big)\|\dd_t\|_{V_t}.
    \]
    Since $V_t\succeq\lambda I_d$ we have $V_t^{-1}\preceq\lambda^{-1}I_d$, hence $\lambda\|\theta^\star\|_{V_t^{-1}}\le\sqrt\lambda\,\|\theta^\star\|_2\le\sqrt\lambda\,S$. If $\dd_t=0$ the claim is trivial; otherwise divide \eqref{eq:ellip-1} by $\kappa\|\dd_t\|_{V_t}$ to obtain
    \begin{align}
        \|\dd_t\|_{V_t}\;\le\;\frac{1}{\kappa}\Big(\|\Ss_{t-1}\|_{V_t^{-1}}+\sqrt\lambda\,S\Big).
        \label{eq:ellip-2}
    \end{align}
    Apply Lemma~\ref{lem:mixtures} with $R=\tfrac12$, $V_0=\lambda I_d$, $z_i=\z_i$ and $n=t-1$, so that $V_{t-1}'=V_t$: with probability at least $1-\delta$, simultaneously for every $t\ge1$,
    \[
        \|\Ss_{t-1}\|_{V_t^{-1}}
        \;\le\;\frac12\sqrt{2\log\!\left(\frac{\det(V_t)^{1/2}}{\det(\lambda I_d)^{1/2}\delta}\right)}
        \;=\;\frac12\sqrt{\log\frac{\det V_t}{\det(\lambda I_d)}+2\log\frac1\delta}\,.
    \]
    Bounding the determinant ratio by Lemma~\ref{lem:tracedet} with $m=t$ and substituting into \eqref{eq:ellip-2} gives exactly $\beta_t$ of \eqref{eq:beta}.
\end{proof}

For the remainder of the appendix we work on $\cE$, which by Lemma~\ref{lem:ellipsoid} has probability at least $1-\delta$. All statements below are deterministic consequences of $\cE$; in particular no further probabilistic argument is attached to the safety budget.

\subsection{The promising set and the regret certificate}
\label{app:certificate}

Write $U_t(\x,\y)=\langle\hat\theta_t,\x-\y\rangle+\beta_t\|\x-\y\|_{V_t^{-1}}$, so that $\cC_t=\{\x\in\cS_t: U_t(\x,\y)\ge0\ \forall\y\in\cS_t\}$ as in \eqref{eq:promising}.

\begin{lemm}[Nonemptiness]
\label{lem:nonempty}
    For every round $t$ and every value of $\hat\theta_t$, $\cC_t\neq\emptyset$.
\end{lemm}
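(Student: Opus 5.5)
The plan is to exhibit an explicit member of $\cC_t$: the maximiser of the estimated utility. Fix $t$ and any $\hat\theta_t$, and let $\x^\circ\in\argmax_{\x\in\cS_t}\langle\hat\theta_t,\x\rangle$, which exists because $\cS_t$ is a finite nonempty set of $K$ items. I will show that $\x^\circ\in\cC_t$, which gives $\cC_t\neq\emptyset$.

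To check membership, take an arbitrary $\y\in\cS_t$. By the choice of $\x^\circ$ we have $\langle\hat\theta_t,\x^\circ-\y\rangle\ge0$. Next, $\beta_t>0$ by \eqref{eq:beta}, and $V_t\succeq\lambda I_d\succ0$, so $V_t^{-1}$ is positive definite and $\|\x^\circ-\y\|_{V_t^{-1}}\ge0$. Adding the two terms gives
\[
U_t(\x^\circ,\y)=\langle\hat\theta_t,\x^\circ-\y\rangle+\beta_t\|\x^\circ-\y\|_{V_t^{-1}}\ge0 .
\]
Since $\y$ was arbitrary, $\x^\circ\in\cC_t$.

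Nothing here depends on the event $\cE$. The argument uses only the finiteness of $\cS_t$, the nonnegativity of $\beta_t$ and the norm, and the fact that $V_t$ is invertible, which holds for every realisation because $\lambda>0$. So the statement holds for every round and every value of $\hat\theta_t$, as claimed.

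I do not expect a real obstacle. The only care needed is to confirm that the maximiser exists and that $V_t^{-1}$ is well defined, and both follow directly from the setup.
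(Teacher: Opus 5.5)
Your proposal is correct and matches the paper's proof: both exhibit the maximiser of $\langle\hat\theta_t,\x\rangle$ over $\cS_t$ and observe that the estimated-gap term and the confidence bonus are each nonnegative, so $U_t(\x^\circ,\y)\ge0$ for every $\y\in\cS_t$. Your additional checks (existence of the maximiser, invertibility of $V_t$, independence from $\cE$) are accurate.
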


\begin{proof}
    Let $\hat\x_t\in\argmax_{\x\in\cS_t}\langle\hat\theta_t,\x\rangle$. For every $\y\in\cS_t$ we have $\langle\hat\theta_t,\hat\x_t-\y\rangle\ge0$, and the confidence bonus is nonnegative, so $U_t(\hat\x_t,\y)\ge0$. Hence $\hat\x_t\in\cC_t$.
\end{proof}

\begin{lemm}[The optimal item is promising]
\label{lem:opt-promising}
    On $\cE$, $\x_t^\star\in\cC_t$ for every $t$.
\end{lemm}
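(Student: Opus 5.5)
Fix a round $t$ and work on $\cE$. To show $\x_t^\star\in\cC_t$, we must verify $U_t(\x_t^\star,\y)\ge0$ for every $\y\in\cS_t$. The plan is a direct comparison between the estimated margin and the true margin, using the ellipsoid bound of Lemma~\ref{lem:ellipsoid} through Cauchy--Schwarz in the $V_t$ geometry.

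Fix $\y\in\cS_t$ and write $\mathbf{u}=\x_t^\star-\y$. First, decompose
\[
    \langle\hat\theta_t,\mathbf{u}\rangle=\langle\theta^\star,\mathbf{u}\rangle+\langle\hat\theta_t-\theta^\star,\mathbf{u}\rangle .
\]
The first term equals $f(\x_t^\star)-f(\y)\ge0$ by Assumption~\ref{asmp:linear} and the definition of $\x_t^\star$ as a maximiser of $f$ over $\cS_t$. For the second term, the generalised Cauchy--Schwarz inequality gives
\[
    |\langle\hat\theta_t-\theta^\star,\mathbf{u}\rangle|\le\|\hat\theta_t-\theta^\star\|_{V_t}\|\mathbf{u}\|_{V_t^{-1}}\le\beta_t\|\mathbf{u}\|_{V_t^{-1}},
\]
where the last step uses the event $\cE$. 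Here $V_t\succeq\lambda I_d\succ0$, so both norms are well defined.

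Combining the two bounds,
\[
    U_t(\x_t^\star,\y)=\langle\hat\theta_t,\mathbf{u}\rangle+\beta_t\|\mathbf{u}\|_{V_t^{-1}}\ge0-\beta_t\|\mathbf{u}\|_{V_t^{-1}}+\beta_t\|\mathbf{u}\|_{V_t^{-1}}=0 .
\]
Since $\y$ was arbitrary, this gives $\x_t^\star\in\cC_t$, and since $t$ was arbitrary, the conclusion holds for every $t$ simultaneously on $\cE$.

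There is no real obstacle in this argument. The only points to check are that $\x_t^\star\in\cS_t$, which holds by definition, and that the event $\cE$ supplies the bound at the same round-$t$ quantities $\hat\theta_t$, $V_t$ and $\beta_t$ that define $\cC_t$. The tie-breaking rule for $\x_t^\star$ is irrelevant, because any maximiser of $f$ satisfies the same inequality.
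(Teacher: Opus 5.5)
Your proof is correct and matches the paper's argument. Both fix $\y\in\cS_t$, apply Cauchy--Schwarz in the $V_t$ geometry on $\cE$ to get $\langle\hat\theta_t-\theta^\star,\x_t^\star-\y\rangle\ge-\beta_t\|\x_t^\star-\y\|_{V_t^{-1}}$, and conclude $U_t(\x_t^\star,\y)\ge f(\x_t^\star)-f(\y)\ge0$; you only drop the nonnegative true margin one step earlier, which makes no difference.
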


\begin{proof}
    Fix $\y\in\cS_t$ and put $\mathbf{z}=\x_t^\star-\y$. On $\cE$, Cauchy--Schwarz gives $\langle\hat\theta_t-\theta^\star,\mathbf{z}\rangle\ge-\beta_t\|\mathbf{z}\|_{V_t^{-1}}$, whence
    \[
        U_t(\x_t^\star,\y)=\langle\hat\theta_t,\mathbf{z}\rangle+\beta_t\|\mathbf{z}\|_{V_t^{-1}}\;\ge\;\langle\theta^\star,\mathbf{z}\rangle=f(\x_t^\star)-f(\y)\;\ge\;0 .
    \]
\end{proof}

\begin{lemm}[Gap of a promising item]
\label{lem:gap-promising}
    On $\cE$, every $\x\in\cC_t$ satisfies $f(\x_t^\star)-f(\x)\le2\beta_t\|\x_t^\star-\x\|_{V_t^{-1}}$.
\end{lemm}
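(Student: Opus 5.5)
The plan is to combine the membership condition defining $\cC_t$, evaluated at the particular comparator $\y=\x_t^\star$, with the confidence ellipsoid on $\cE$. Fix $\x\in\cC_t$ and set $\mathbf{z}:=\x_t^\star-\x$, so that the target inequality reads $\langle\theta^\star,\mathbf{z}\rangle\le2\beta_t\|\mathbf{z}\|_{V_t^{-1}}$. The argument has two steps.

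\emph{Step 1: upper-bound the estimated gap.} Since $\x\in\cC_t$ and $\x_t^\star\in\cS_t$, the definition \eqref{eq:promising} with $\y=\x_t^\star$ gives $U_t(\x,\x_t^\star)\ge0$. Using $\|\x-\x_t^\star\|_{V_t^{-1}}=\|\mathbf{z}\|_{V_t^{-1}}$, this is
\[
\langle\hat\theta_t,\mathbf{z}\rangle\le\beta_t\|\mathbf{z}\|_{V_t^{-1}}.
\]

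\emph{Step 2: pass from $\hat\theta_t$ to $\theta^\star$.} On $\cE$, Cauchy--Schwarz in the $V_t$/$V_t^{-1}$ duality, exactly as in the proof of Lemma~\ref{lem:opt-promising}, gives
\[
\langle\theta^\star-\hat\theta_t,\mathbf{z}\rangle\le\|\theta^\star-\hat\theta_t\|_{V_t}\|\mathbf{z}\|_{V_t^{-1}}\le\beta_t\|\mathbf{z}\|_{V_t^{-1}}.
\]
Adding the two displays yields $f(\x_t^\star)-f(\x)=\langle\theta^\star,\mathbf{z}\rangle\le2\beta_t\|\mathbf{z}\|_{V_t^{-1}}$, which is the claim.

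There is no real obstacle; the only point needing care is that the membership test must be applied with comparator $\y=\x_t^\star$, which is allowed because $\x_t^\star\in\cS_t$. In particular, we do not need $\x_t^\star\in\cC_t$ (Lemma~\ref{lem:opt-promising}) here.
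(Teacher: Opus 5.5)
Your proposal is correct and matches the paper's proof step for step. Like the paper, you apply the membership test for $\cC_t$ with comparator $\y=\x_t^\star$ to get $\langle\hat\theta_t,\x_t^\star-\x\rangle\le\beta_t\|\x_t^\star-\x\|_{V_t^{-1}}$, then add the Cauchy--Schwarz bound from $\cE$; you are also right that $\x_t^\star\in\cC_t$ is not needed.
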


\begin{proof}
    Since $\x\in\cC_t$ and $\x_t^\star\in\cS_t$, the definition of $\cC_t$ with $\y=\x_t^\star$ gives $U_t(\x,\x_t^\star)\ge0$, i.e.
    \begin{align}
        \langle\hat\theta_t,\x_t^\star-\x\rangle\;\le\;\beta_t\|\x_t^\star-\x\|_{V_t^{-1}}.
        \label{eq:gp-1}
    \end{align}
    On $\cE$, Cauchy--Schwarz also gives
    \begin{align}
        \langle\theta^\star-\hat\theta_t,\x_t^\star-\x\rangle\;\le\;\beta_t\|\x_t^\star-\x\|_{V_t^{-1}}.
        \label{eq:gp-2}
    \end{align}
    Adding \eqref{eq:gp-1} and \eqref{eq:gp-2} yields $\langle\theta^\star,\x_t^\star-\x\rangle\le2\beta_t\|\x_t^\star-\x\|_{V_t^{-1}}$.
\end{proof}

\noindent\textbf{Lemma~\ref{lem:certificate} (restated).} \emph{$\cC_t\neq\emptyset$ always. On $\cE$, $\x_t^\star\in\cC_t$ and $f(\x_t^\star)-f(\y)\le c_t$ for every $\y\in\cC_t$; in particular the candidate pair satisfies $R_t=f(\x_t^\star)-\tfrac12\big(f(\tilde\x_t)+f(\tilde\y_t)\big)\le c_t$.}

\begin{proof}
    Nonemptiness is Lemma~\ref{lem:nonempty} and $\x_t^\star\in\cC_t$ is Lemma~\ref{lem:opt-promising}. Fix $\y\in\cC_t$. Both $\x_t^\star$ and $\y$ lie in $\cC_t$, and $(\tilde\x_t,\tilde\y_t)$ maximises $\|\cdot-\cdot\|_{V_t^{-1}}$ over $\cC_t\times\cC_t$ by \eqref{eq:candidate}, so $\|\x_t^\star-\y\|_{V_t^{-1}}\le w_t$. Lemma~\ref{lem:gap-promising} then gives $f(\x_t^\star)-f(\y)\le2\beta_tw_t$. Moreover $f(\x_t^\star)-f(\y)\le B$ by Assumption~\ref{asmp:bounded}. Hence $f(\x_t^\star)-f(\y)\le\min\{B,2\beta_tw_t\}=c_t$. Applying this to $\y=\tilde\x_t$ and $\y=\tilde\y_t$, both of which lie in $\cC_t$, and averaging the two inequalities gives $R_t\le c_t$.
\end{proof}

\noindent\textbf{Lemma~\ref{lem:score-certificate} (restated).} \emph{If a pair $(\x,\y)$ satisfies $f(\x_t^\star)-\tfrac12(f(\x)+f(\y))\le c$ for some $c\ge0$, then $\max\{s_t(\x),s_t(\y)\}\ge\sigma(-c)\ge\tfrac12-\tfrac c4$.}

\begin{proof}
    Write $g_\x=f(\x_t^\star)-f(\x)\ge0$ and $g_\y=f(\x_t^\star)-f(\y)\ge0$, so that $s_t(\x)=\sigma(-g_\x)$ and $s_t(\y)=\sigma(-g_\y)$. Since $\sigma$ is increasing,
    \[
        \max\{s_t(\x),s_t(\y)\}=\sigma\big(-\min\{g_\x,g_\y\}\big),
        \qquad
        \min\{g_\x,g_\y\}\le\tfrac12(g_\x+g_\y)\le c,
    \]
    which gives the first inequality. For the second, $\tfrac12-\sigma(-c)=\sigma(0)-\sigma(-c)=\int_{-c}^{0}\sigma'(u)\,\dif u\le c/4$ because $\sigma'\le\tfrac14$ everywhere.
\end{proof}

\noindent\textbf{Width transfer \eqref{eq:width-transfer}.} \emph{For every round $t$, $\;w_t\le2\|\x_{b_t}-\y_t^{\mathrm{fb}}\|_{V_t^{-1}}$.}

\begin{proof}
    $V_t\succ0$, so $\|\cdot\|_{V_t^{-1}}$ is a norm. Since $\tilde\x_t,\tilde\y_t\in\cC_t$,
    \[
        w_t=\|\tilde\x_t-\tilde\y_t\|_{V_t^{-1}}
        \le\|\tilde\x_t-\x_{b_t}\|_{V_t^{-1}}+\|\x_{b_t}-\tilde\y_t\|_{V_t^{-1}}
        \le2\max_{\y\in\cC_t}\|\x_{b_t}-\y\|_{V_t^{-1}}
        =2\|\x_{b_t}-\y_t^{\mathrm{fb}}\|_{V_t^{-1}},
    \]
    the last step by the definition \eqref{eq:fallback} of $\y_t^{\mathrm{fb}}$. Note that $\x_{b_t}\in\cC_t$ is not required.
\end{proof}

\subsection{Elliptical potential and the certificate envelope}
\label{app:envelope}

\begin{lemm}[Matrix determinant recursion]
\label{lem:detlemma}
    If $W'=W+zz^\top$ with $W\succ0$, then $\det W'=\det W\,\big(1+\|z\|^2_{W^{-1}}\big)$.
\end{lemm}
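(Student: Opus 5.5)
The approach is to factor the rank-one update through $W^{1/2}$ and reduce the claim to the determinant of an identity-plus-rank-one matrix. Since $W\succ0$, $W$ has a symmetric positive definite square root $W^{1/2}$, and we can write
\[
    W' = W + zz^\top = W^{1/2}\big(I_d + uu^\top\big)W^{1/2},
    \qquad u := W^{-1/2}z .
\]
Multiplicativity of the determinant then gives $\det W' = \det W\cdot\det(I_d+uu^\top)$. It therefore remains to show $\det(I_d+uu^\top)=1+\|u\|_2^2$ and to identify $\|u\|_2^2=z^\top W^{-1}z=\|z\|^2_{W^{-1}}$.

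For the rank-one determinant, we would look at the spectrum of $I_d+uu^\top$.
\begin{itemize}
\item If $u=0$ the identity is trivial.
\item Otherwise $u$ is an eigenvector with eigenvalue $1+\|u\|_2^2$, because $(I_d+uu^\top)u=u(1+u^\top u)$.
\item Every vector in the $(d-1)$-dimensional orthogonal complement of $u$ is fixed, so it has eigenvalue $1$.
\end{itemize}
The determinant is the product of these eigenvalues, namely $1+\|u\|_2^2$.

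As an alternative, one could avoid square roots by writing $W'=W(I_d+W^{-1}zz^\top)$ and applying Sylvester's identity $\det(I_d+ab^\top)=1+b^\top a$ with $a=W^{-1}z$ and $b=z$. However, the eigenvector argument above is fully self-contained, so I would use it.

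There is no real obstacle here. The only points needing care are that $W^{1/2}$ exists and is invertible, which is where $W\succ0$ is used, and that $\|\cdot\|_{W^{-1}}$ is well defined for the same reason.
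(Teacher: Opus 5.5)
Your proposal is correct and follows essentially the same route as the paper: factor $W^{1/2}$ out on both sides to reduce to $\det(I_d+uu^\top)$ with $u=W^{-1/2}z$, then read off the eigenvalues $1+\|u\|_2^2,1,\dots,1$. Your explicit check that $u$ is an eigenvector and that its orthogonal complement is fixed spells out the step the paper states in one line.
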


\begin{proof}
    Factoring $W^{1/2}$ from both sides, $\det(W+zz^\top)=\det(W)\det(I_d+ww^\top)$ with $w=W^{-1/2}z$. The rank-one matrix $ww^\top$ has eigenvalues $\|w\|_2^2,0,\dots,0$, so $\det(I_d+ww^\top)=1+\|w\|_2^2=1+z^\top W^{-1}z$.
\end{proof}

\begin{lemm}[Elliptical potential]
\label{lem:ellpot}
    Let $W_1=\lambda I_d$ and $W_{i+1}=W_i+z_iz_i^\top$ with $\|z_i\|_2\le2L$. Then for every $m\ge1$,
    \[
        \sum_{i=1}^{m}\min\big\{1,\|z_i\|^2_{W_i^{-1}}\big\}\;\le\;2d\log\!\Big(1+\frac{4L^2m}{\lambda d}\Big).
    \]
\end{lemm}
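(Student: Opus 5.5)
The plan is the standard determinant-telescoping argument, combining Lemma~\ref{lem:detlemma} with Lemma~\ref{lem:tracedet}. Write $u_i:=\|z_i\|^2_{W_i^{-1}}\ge0$. By Lemma~\ref{lem:detlemma} applied at each step $W_{i+1}=W_i+z_iz_i^\top$ (valid since $W_i\succeq\lambda I_d\succ0$), we have $\det W_{i+1}=\det W_i\,(1+u_i)$. Iterating from $i=1$ to $m$ gives
\[
    \sum_{i=1}^m\log(1+u_i)=\log\frac{\det W_{m+1}}{\det(\lambda I_d)}.
\]

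The first step is a scalar inequality: $\min\{1,u\}\le 2\log(1+\min\{1,u\})\le2\log(1+u)$ for every $u\ge0$. For $x\in[0,1]$, the function $\log(1+x)$ is concave and lies above its chord from $0$ to $1$. So $\log(1+x)\ge x\log 2$, and since $2\log2\approx1.386\ge1$ we get $x\le2\log(1+x)$. Taking $x=\min\{1,u\}$ and using monotonicity of $\log(1+\cdot)$ gives the claim.

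Summing this inequality over $i\le m$ yields $\sum_{i=1}^m\min\{1,u_i\}\le2\log\big(\det W_{m+1}/\det(\lambda I_d)\big)$. Lemma~\ref{lem:tracedet} with index $m+1$ (which uses $m$ vectors, each with $\|z_i\|_2\le2L$) then bounds the log-determinant ratio by $d\log\big(1+4L^2m/(\lambda d)\big)$. Multiplying by $2$ gives the stated bound.

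No step is a genuine obstacle. The points needing care are the index alignment, namely that $W_{m+1}$ contains exactly $z_1,\dots,z_m$ and so matches Lemma~\ref{lem:tracedet} with $m+1$ in place of $m$, and the elementary inequality $x\le2\log(1+x)$ on $[0,1]$, which the concavity/chord observation settles.
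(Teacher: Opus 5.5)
Your proposal is correct and matches the paper's proof: telescoping the log-determinant via Lemma~\ref{lem:detlemma}, the scalar bound $\min\{1,u\}\le2\log(1+u)$, and Lemma~\ref{lem:tracedet} applied with $m+1$ in place of $m$. The only cosmetic difference is how the scalar bound is justified. The paper uses $\log(1+x)\ge x-x^2/2\ge x/2$ on $[0,1]$ and $2\log 2>1$ for $x\ge1$, whereas your concavity/chord argument gives $\log(1+x)\ge x\log 2$, which works equally well.
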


\begin{proof}
    First, $\min\{1,x\}\le2\log(1+x)$ for every $x\ge0$: if $0\le x\le1$ then $\log(1+x)\ge x-x^2/2\ge x/2$; if $x\ge1$ then $2\log(1+x)\ge2\log2>1$. Writing $\omega_i^2=\|z_i\|^2_{W_i^{-1}}$ and applying Lemma~\ref{lem:detlemma},
    \[
        \sum_{i=1}^m\min\{1,\omega_i^2\}\le2\sum_{i=1}^m\log(1+\omega_i^2)=2\log\frac{\det W_{m+1}}{\det W_1}.
    \]
    Lemma~\ref{lem:tracedet} applied with $m+1$ in place of $m$ bounds the right side by $2d\log(1+4L^2m/(\lambda d))$.
\end{proof}

\begin{lemm}[Pointwise certificate bound]
\label{lem:pointwise}
    Let $\bar\beta_t=\max\{B/2,\beta_t\}$. For every round $t$ of Algorithm~\ref{alg:cmaxinp},
    \[
        c_t\;\le\;4\bar\beta_t\min\{1,v_t\},\qquad v_t=\|\z_t\|_{V_t^{-1}} .
    \]
    The same bound holds for the certificate $c_\tau$ of a rejected proposal with $v_\tau$ replaced by $w_\tau=\|\tilde\x_\tau-\tilde\y_\tau\|_{V_\tau^{-1}}$.
\end{lemm}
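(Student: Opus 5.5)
We prove $c_t\le4\bar\beta_t\min\{1,v_t\}$ by splitting on the kind of round, and in each case comparing the two arguments of the minimum defining $c_t=\min\{B,2\beta_tw_t\}$ separately. Two elementary facts will be used repeatedly. First, $B\le2\bar\beta_t\le4\bar\beta_t$, since $\bar\beta_t\ge B/2$. Second, $\beta_t\le\bar\beta_t$.

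\emph{Candidate rounds.} Here $\z_t=\tilde\x_t-\tilde\y_t$, so $v_t=w_t$. If $v_t\ge1$, then
\[
c_t\le B\le 2\bar\beta_t\le4\bar\beta_t\min\{1,v_t\}.
\]
If $v_t<1$, then
\[
c_t\le2\beta_tw_t\le 2\bar\beta_t v_t\le4\bar\beta_t\min\{1,v_t\}.
\]

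\emph{Rejected proposals, with $w_\tau$ in place of $v_\tau$.} The argument is identical to the candidate case, with $w_\tau$ playing the role of $v_t$. Split on whether $w_\tau\ge1$ or $w_\tau<1$, and use either $c_\tau\le B\le2\bar\beta_\tau$ or $c_\tau\le2\beta_\tau w_\tau$. This gives $c_\tau\le 2\bar\beta_\tau\min\{1,w_\tau\}$, which is stronger than the claim.

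\emph{Fallback rounds.} Here $\z_t=\x_{b_t}-\y_t^{\mathrm{fb}}$. The width transfer \eqref{eq:width-transfer} gives $w_t\le2v_t$. If $v_t\ge1$, then
\[
c_t\le B\le2\bar\beta_t\le4\bar\beta_t=4\bar\beta_t\min\{1,v_t\}.
\]
If $v_t<1$, then
\[
c_t\le2\beta_tw_t\le4\beta_tv_t\le4\bar\beta_t\min\{1,v_t\}.
\]

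The only step that is not immediate is this fallback case. It is the source of the factor $4$ rather than $2$, and it relies on the width transfer, which holds deterministically and does not require $\x_{b_t}\in\cC_t$. Degenerate rounds with $\z_t=0$ are candidate rounds by Remark~\ref{rem:degenerate}, and there $w_t=0$ and $c_t=0$, so they are already covered. No event $\cE$ is needed, so the statement holds for every round.
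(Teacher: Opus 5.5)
Your proof is correct and matches the paper's argument essentially line for line. It uses the same split into candidate rounds, where $v_t=w_t$, and fallback rounds, where the width transfer $w_t\le 2v_t$ gives $c_t\le\min\{B,4\beta_t v_t\}$, and it treats the rejected-proposal claim as the candidate computation with $w_\tau$ in place of $v_\tau$; your remarks on the factor-$2$ sharpening, on degenerate rounds (already covered by the case $v_t<1$), and on not needing $\cE$ are accurate but not required.
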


\begin{proof}
    Recall $c_t=\min\{B,2\beta_tw_t\}$ and $2\bar\beta_t\ge B$, $\bar\beta_t\ge\beta_t$. We distinguish the two round types.

    \emph{Candidate round.} Here $\z_t=\tilde\x_t-\tilde\y_t$, so $v_t=w_t$. If $v_t\le1$ then $c_t\le2\beta_tv_t\le2\bar\beta_tv_t\le4\bar\beta_t\min\{1,v_t\}$. If $v_t>1$ then $c_t\le B\le2\bar\beta_t\le4\bar\beta_t=4\bar\beta_t\min\{1,v_t\}$.

    \emph{Fallback round.} Here $\z_t=\x_{b_t}-\y_t^{\mathrm{fb}}$, and \eqref{eq:width-transfer} gives $w_t\le2v_t$, so $c_t\le\min\{B,4\beta_tv_t\}$. If $v_t\le1$ then $c_t\le4\beta_tv_t\le4\bar\beta_t\min\{1,v_t\}$; if $v_t>1$ then $c_t\le B\le2\bar\beta_t\le4\bar\beta_t$.

    The final claim is the candidate-round computation with $w_\tau$ in place of $v_\tau$, which does not use that the pair is displayed.
\end{proof}

\begin{lemm}[Cumulative certificate envelope]
\label{lem:envelope}
    On any trajectory of Algorithm~\ref{alg:cmaxinp}, for every $t$,
    \begin{align}
        \sum_{i=1}^{t}c_i\;\le\;G(t).
        \label{eq:env-1}
    \end{align}
    Moreover, if the proposal at round $\tau$ is rejected, then
    \begin{align}
        \sum_{i=1}^{\tau-1}c_i+c_\tau\;\le\;G(\tau).
        \label{eq:env-2}
    \end{align}
\end{lemm}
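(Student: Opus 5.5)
The plan is to combine the pointwise bound of Lemma~\ref{lem:pointwise} with Cauchy--Schwarz and the elliptical potential of Lemma~\ref{lem:ellpot}. This uses the displayed comparison vectors $\z_i$ and the matrices $V_i$, which are exactly the sequence $W_i$, $z_i$ of that lemma with $\|\z_i\|_2\le2L$.

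For \eqref{eq:env-1}, fix $t$. For every round $i\le t$, whether candidate or fallback, Lemma~\ref{lem:pointwise} gives $c_i\le4\bar\beta_i\min\{1,v_i\}$. Since $\beta_i$ is nondecreasing in $i$ (the logarithm in \eqref{eq:beta} grows with $i$), we have $\bar\beta_i\le\bar\beta_t$. Hence
\[
\sum_{i=1}^t c_i\le 4\bar\beta_t\sum_{i=1}^t\min\{1,v_i\}\le4\bar\beta_t\sqrt{t\sum_{i=1}^t\min\{1,v_i^2\}},
\]
using $\min\{1,v\}^2=\min\{1,v^2\}$ and Cauchy--Schwarz. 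Lemma~\ref{lem:ellpot} with $m=t$ bounds the inner sum by $2d\log(1+4L^2t/(\lambda d))$, which yields exactly $G(t)$. Degenerate rounds with $\z_i=0$ are harmless: there $v_i=0$, and by Remark~\ref{rem:degenerate} they are candidate rounds with $c_i=0$.

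For \eqref{eq:env-2}, the left side $\sum_{i<\tau}c_i+c_\tau$ is just $\sum_{i\le\tau}c_i$. Here $c_\tau$ is the certificate computed at round $\tau$, and the algorithm then displays the fallback pair. So \eqref{eq:env-2} is the special case $t=\tau$ of \eqref{eq:env-1}, provided we check that the fallback case of Lemma~\ref{lem:pointwise} controls $c_\tau$ through the displayed $v_\tau$. It does, via the width transfer \eqref{eq:width-transfer}. I will state this explicitly so the reader sees that the rejected proposal's certificate is charged to the displayed fallback comparison.

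The main point needing care is this charging of $c_\tau$ on rejected rounds. The certificate belongs to the undisplayed candidate, but the potential argument only sees displayed vectors. The factor $4$ (rather than $2$) in $\bar\beta$ exactly absorbs the halving in \eqref{eq:width-transfer}. A secondary check is the monotonicity of $\beta_i$, which holds by inspection of \eqref{eq:beta}. That monotonicity is what lets $\bar\beta_t$ be pulled out of the sum.
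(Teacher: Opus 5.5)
Your argument for \eqref{eq:env-1} is the paper's: Lemma~\ref{lem:pointwise} applied on every round, monotonicity of $\bar\beta_t$, Cauchy--Schwarz, and Lemma~\ref{lem:ellpot} on the displayed vectors.

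For \eqref{eq:env-2} you take a shorter route, and it is correct. The left side is literally $\sum_{i\le\tau}c_i$, and Lemma~\ref{lem:pointwise} already covers fallback rounds through \eqref{eq:width-transfer}, so the claim is just \eqref{eq:env-1} at $t=\tau$.

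The paper instead reruns the potential argument on the modified sequence $\z_1,\dots,\z_{\tau-1},\tilde\z_\tau$ with $\tilde\z_\tau=\tilde\x_\tau-\tilde\y_\tau$. It notes that this sequence still generates the matrices $V_1,\dots,V_\tau$. It then bounds $c_\tau$ through the undisplayed candidate width $w_\tau$, using the final claim of Lemma~\ref{lem:pointwise}.

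That device charges the rejected certificate to the candidate vector itself rather than to the displayed fallback comparison, so the round-$\tau$ term does not depend on which partner the fallback shows. For the candidate-only sum $\sum_{i\in\cA_{\tau-1}}c_i+c_\tau$, which is what Lemma~\ref{lem:fallback-count} actually uses, it would remove the need for \eqref{eq:width-transfer} altogether. For the statement as written, however, the earlier fallback certificates inside $\sum_{i<\tau}c_i$ still require the width transfer. So your reduction loses nothing here, and it makes clear that \eqref{eq:env-2} adds nothing beyond \eqref{eq:env-1}.
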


\begin{proof}
    Since $t\mapsto\beta_t$ is nondecreasing, so is $t\mapsto\bar\beta_t$, and hence $\bar\beta_i\le\bar\beta_t$ for $i\le t$. By Lemma~\ref{lem:pointwise} and Cauchy--Schwarz,
    \[
        \sum_{i=1}^{t}c_i\;\le\;4\bar\beta_t\sum_{i=1}^{t}\min\{1,v_i\}
        \;\le\;4\bar\beta_t\sqrt{t\sum_{i=1}^{t}\min\{1,v_i\}^2}
        \;=\;4\bar\beta_t\sqrt{t\sum_{i=1}^{t}\min\{1,v_i^2\}},
    \]
    using $\min\{1,v\}^2=\min\{1,v^2\}$ for $v\ge0$. Applying Lemma~\ref{lem:ellpot} with $z_i=\z_i$, which satisfies $\|\z_i\|_2\le2L$ by Assumption~\ref{asmp:linear}, bounds the inner sum by $2d\log(1+4L^2t/(\lambda d))$, giving \eqref{eq:env-1} by the definition \eqref{eq:envelope} of $G$.

    For \eqref{eq:env-2}, apply the same computation to the length-$\tau$ vector sequence $\z_1,\dots,\z_{\tau-1},\tilde\z_\tau$ with $\tilde\z_\tau:=\tilde\x_\tau-\tilde\y_\tau$. Its associated matrices are $V_1,\dots,V_\tau$, which are exactly those the algorithm formed, since the first $\tau-1$ vectors are the displayed ones; the $\tau$-th width in this sequence is $\|\tilde\z_\tau\|_{V_\tau^{-1}}=w_\tau$, and $\|\tilde\z_\tau\|_2\le2L$. Lemma~\ref{lem:pointwise} bounds the first $\tau-1$ certificates and, by its final claim, also $c_\tau$; Lemma~\ref{lem:ellpot} with $m=\tau$ needs only $V_1,\dots,V_\tau$ together with $\|\tilde\z_\tau\|_2\le2L$. Hence the argument applies without $\tilde\z_\tau$ ever being displayed or added to $V$.
\end{proof}

\subsection{Safety, fallback count, and regret}
\label{app:safety}

Let $\cA_t,\cB_t$ be the candidate and fallback rounds up to $t$, with $M_t=|\cA_t|$ and $N_t=|\cB_t|$, so $M_t+N_t=t$. Iterating the update in Algorithm~\ref{alg:cmaxinp},
\begin{align}
    \Lambda_t=\sum_{i\in\cA_t}\Big[\tfrac12-\tfrac{c_i}{4}-(1-\alpha)s_i(\x_{b_i})\Big]+\sum_{i\in\cB_t}\alpha\,s_i(\x_{b_i}),
    \label{eq:budget-expanded}
\end{align}
and we write the true cumulative surplus as
\begin{align}
    \Sigma_t=\sum_{i=1}^{t}\Big[\max\{s_i(\x_i),s_i(\y_i)\}-(1-\alpha)s_i(\x_{b_i})\Big].
    \label{eq:true-surplus}
\end{align}

\begin{lemm}[Budget invariant]
\label{lem:budget}
    On $\cE$, $\;\Sigma_t\ge\Lambda_t\ge0$ for every $t\ge0$.
\end{lemm}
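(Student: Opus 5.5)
The plan is induction on $t$, proving the two inequalities $\Sigma_t\ge\Lambda_t$ and $\Lambda_t\ge0$ separately. The base case $t=0$ is immediate, since $\Sigma_0=\Lambda_0=0$ (empty sums and the initialisation in Algorithm~\ref{alg:cmaxinp}). For the inductive step, both $\Sigma_t$ and $\Lambda_t$ are sums of per-round increments, by \eqref{eq:true-surplus} and \eqref{eq:budget-expanded}. It therefore suffices to compare the true increment $\max\{s_t(\x_t),s_t(\y_t)\}-(1-\alpha)s_t(\x_{b_t})$ with the budget increment round by round. I will split this comparison according to whether round $t$ is a candidate or a fallback round.

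\emph{Candidate round.} Here $(\x_t,\y_t)=(\tilde\x_t,\tilde\y_t)$. On $\cE$, Lemma~\ref{lem:certificate} gives $f(\x_t^\star)-\tfrac12(f(\tilde\x_t)+f(\tilde\y_t))\le c_t$, with $c_t\ge0$. Lemma~\ref{lem:score-certificate} applied with $c=c_t$ then yields $\max\{s_t(\tilde\x_t),s_t(\tilde\y_t)\}\ge\tfrac12-\tfrac{c_t}{4}$. Subtracting $(1-\alpha)s_t(\x_{b_t})$ from both sides shows that the true increment dominates the budget increment $\tfrac12-\tfrac{c_t}{4}-(1-\alpha)s_t(\x_{b_t})$.

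\emph{Fallback round.} The displayed pair contains $\x_{b_t}$, so its maximum score is at least $s_t(\x_{b_t})$. The true increment is therefore at least $\alpha s_t(\x_{b_t})$, which equals the credited increment. This step is deterministic and does not need $\cE$.

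Summing the per-round comparisons gives $\Sigma_t\ge\Lambda_t$. Nonnegativity is a separate induction on the algorithm's rule. If $\Lambda_{t-1}\ge0$ and the candidate is accepted, then the acceptance test itself is exactly the statement $\Lambda_t\ge0$. If instead the round is a fallback, then $\Lambda_t=\Lambda_{t-1}+\alpha s_t(\x_{b_t})\ge\Lambda_{t-1}\ge0$, using $s_t(\x_{b_t})\ge\underline{s}>0$ from Lemma~\ref{lem:surplus-floor}. Notably, $\Lambda_t\ge0$ holds on every trajectory, and $\cE$ is needed only for the inequality $\Sigma_t\ge\Lambda_t$.

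The main point needing care is the candidate step, because Lemma~\ref{lem:certificate} must be applied to the pair actually displayed, and its bound holds only on $\cE$. I will therefore state explicitly that the whole argument is carried out on $\cE$, so that Lemma~\ref{lem:certificate} is in force simultaneously for all $t$. The conclusion then holds uniformly over all $t\ge0$ on that single event. I will also check that the degenerate rounds of Remark~\ref{rem:degenerate} need no special treatment: they are candidate rounds with $c_t=0$ and are covered by the same argument.
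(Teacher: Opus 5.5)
Your proof is correct and follows the paper's argument essentially step for step. It uses induction on $t$: Lemma~\ref{lem:certificate} and Lemma~\ref{lem:score-certificate} on $\cE$ handle the candidate-round increment, baseline containment handles the fallback increment, and nonnegativity comes from the acceptance test together with the positive fallback credit. Your separate observation that $\Lambda_t\ge0$ holds on every trajectory, with $\cE$ needed only for $\Sigma_t\ge\Lambda_t$, is a correct minor sharpening of the paper's joint induction.
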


\begin{proof}
    Induction on $t$; both quantities vanish at $t=0$. Suppose $\Sigma_{t-1}\ge\Lambda_{t-1}\ge0$.

    \emph{Candidate round.} On $\cE$, Lemma~\ref{lem:certificate} gives $R_t\le c_t$ for the displayed pair, so Lemma~\ref{lem:score-certificate} with $c=c_t$ yields $\max\{s_t(\x_t),s_t(\y_t)\}\ge\tfrac12-\tfrac{c_t}{4}$. Hence the true one-round increment of \eqref{eq:true-surplus} is at least $\tfrac12-\tfrac{c_t}{4}-(1-\alpha)s_t(\x_{b_t})$, which is exactly the budget increment, and $\Sigma_t\ge\Lambda_t$. Nonnegativity of $\Lambda_t$ is precisely the acceptance test.

    \emph{Fallback round.} The displayed pair is $(\x_{b_t},\y_t^{\mathrm{fb}})$, so $\max\{s_t(\x_t),s_t(\y_t)\}\ge s_t(\x_{b_t})$ and the true increment is at least $s_t(\x_{b_t})-(1-\alpha)s_t(\x_{b_t})=\alpha s_t(\x_{b_t})$, again the budget increment; so $\Sigma_t\ge\Lambda_t$. Since $s_t(\x_{b_t})>0$ and $\Lambda_{t-1}\ge0$, we get $\Lambda_t\ge0$.
\end{proof}

\begin{lemm}[Fallback rounds are early]
\label{lem:fallback-count}
    On $\cE$, every fallback round $\tau$ satisfies $4\alpha\underline{s}\,\tau<G(\tau)$. Consequently $N_T\le\min\{T,\Ncons\}$ for every $T\ge1$, with $\Ncons$ as in \eqref{eq:envelope}.
\end{lemm}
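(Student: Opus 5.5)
The plan is to expand the failed acceptance test at a fallback round $\tau$ and bound both sides separately. Rejection at $\tau$ means
\[
    \Lambda_{\tau-1}+\tfrac12-\tfrac{c_\tau}{4}-(1-\alpha)s_\tau(\x_{b_\tau})<0 .
\]
Substituting the expansion \eqref{eq:budget-expanded} of $\Lambda_{\tau-1}$ and moving all certificate terms to the right gives
\[
    \sum_{i\in\cA_{\tau-1}\cup\{\tau\}}\Big[\tfrac12-(1-\alpha)s_i(\x_{b_i})\Big]+\sum_{i\in\cB_{\tau-1}}\alpha s_i(\x_{b_i})
    <\tfrac14\Big(\sum_{i\in\cA_{\tau-1}}c_i+c_\tau\Big).
\]

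For the left side, Lemma~\ref{lem:surplus-floor} bounds each candidate-type term below by $\alpha/2$ and each fallback term below by $\alpha\underline{s}$. Since $\underline{s}\le\tfrac12$, every summand is at least $\alpha\underline{s}$. There are $M_{\tau-1}+1+N_{\tau-1}=\tau$ summands, so the left side is at least $\alpha\underline{s}\,\tau$.

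For the right side, the certificates $c_i$ are nonnegative, so dropping the restriction to $\cA_{\tau-1}$ only increases it. This gives at most $\tfrac14\big(\sum_{i=1}^{\tau-1}c_i+c_\tau\big)$, where $c_\tau$ is the certificate of the rejected proposal. Inequality \eqref{eq:env-2} of Lemma~\ref{lem:envelope} bounds this by $G(\tau)/4$. Combining the two sides yields $4\alpha\underline{s}\,\tau<G(\tau)$. This is the delicate step, because $c_\tau$ belongs to a pair that was never displayed. Lemma~\ref{lem:envelope} already handles this by running the elliptical-potential argument on the sequence $\z_1,\dots,\z_{\tau-1},\tilde\z_\tau$. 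Note that this argument uses only the algorithm's bookkeeping and the deterministic envelope, so it holds on any trajectory; $\cE$ is not even required, though the statement is phrased on $\cE$.

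For the count, every fallback time $\tau$ lies in the set $\{t\ge1:4\alpha\underline{s}\,t<G(t)\}$, whose supremum is $\Ncons$, so $\tau\le\Ncons$. The number of fallbacks up to $T$ is therefore at most the number of integers in $[1,\min\{T,\Ncons\}]$, which gives $N_T\le\min\{T,\Ncons\}$. If the set is empty, then $\Ncons=0$ and there are no fallbacks, which is consistent. Finiteness of $\Ncons$ was already noted from $G(t)=o(t)$.
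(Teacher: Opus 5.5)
Your proof is correct and follows the paper's argument essentially step for step. You expand the failed test via \eqref{eq:budget-expanded}, lower-bound every credit by $\alpha\underline{s}$ using Lemma~\ref{lem:surplus-floor} and $\underline{s}\le\tfrac12$, bound the certificate sum by $G(\tau)/4$ through \eqref{eq:env-2}, and count distinct fallback times. Your side remark that $\cE$ is never used is also right, since \eqref{eq:budget-expanded}, Lemma~\ref{lem:surplus-floor}, and Lemma~\ref{lem:envelope} all hold deterministically on every trajectory.
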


\begin{proof}
    Let $\tau$ be a fallback round, so the acceptance test failed:
    \[
        \Lambda_{\tau-1}+\tfrac12-\tfrac{c_\tau}{4}-(1-\alpha)s_\tau(\x_{b_\tau})<0 .
    \]
    Expanding $\Lambda_{\tau-1}$ by \eqref{eq:budget-expanded} and rearranging,
    \[
        \sum_{i\in\cA_{\tau-1}}\Big[\tfrac12-(1-\alpha)s_i(\x_{b_i})\Big]
        +\sum_{i\in\cB_{\tau-1}}\alpha s_i(\x_{b_i})
        +\Big[\tfrac12-(1-\alpha)s_\tau(\x_{b_\tau})\Big]
        \;<\;\frac14\Big(\sum_{i\in\cA_{\tau-1}}c_i+c_\tau\Big).
    \]
    By Lemma~\ref{lem:surplus-floor} the first bracket is at least $\tfrac\alpha2$ for each $i$, the second summand is at least $\alpha\underline{s}$ for each $i$, and the third bracket is at least $\tfrac\alpha2$. On the right, certificates are nonnegative, so $\sum_{i\in\cA_{\tau-1}}c_i+c_\tau\le\sum_{i=1}^{\tau-1}c_i+c_\tau\le G(\tau)$ by \eqref{eq:env-2}. Therefore
    \[
        \tfrac\alpha2 M_{\tau-1}+\alpha\underline{s}\,N_{\tau-1}+\tfrac\alpha2\;<\;\tfrac14G(\tau).
    \]
    Since $\underline{s}\le\tfrac12$ we have $\tfrac\alpha2\ge\alpha\underline{s}$, so the left side is at least $\alpha\underline{s}\,(M_{\tau-1}+N_{\tau-1}+1)=\alpha\underline{s}\,\tau$. Hence $4\alpha\underline{s}\,\tau<G(\tau)$.

    Every fallback time therefore belongs to $\{t\ge1:4\alpha\underline{s}\,t<G(t)\}$ and so is at most the supremum $\Ncons$ of that set. Fallback times are distinct positive integers in $\{1,\dots,\Ncons\}$, giving $N_T\le\Ncons$; and $N_T\le T$ trivially.
\end{proof}

\begin{lemm}[Finiteness of $\Ncons$]
\label{lem:ncons-finite}
    $G(t)=o(t)$, and consequently $\Ncons<\infty$.
\end{lemm}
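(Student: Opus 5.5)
We will bound the growth of $G$ explicitly and then read off finiteness of $\Ncons$ from its definition. Recall
\[
G(t)=4\bar\beta_t\sqrt{2dt\log\!\big(1+\tfrac{4L^2t}{\lambda d}\big)},\qquad \bar\beta_t=\max\{B/2,\beta_t\}.
\]
The plan is to show that each factor grows at most polylogarithmically apart from $\sqrt t$.

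\textbf{Bounding $\bar\beta_t$.} From \eqref{eq:beta},
\[
\beta_t=\kappa^{-1}\Big[\tfrac12\sqrt{d\log(1+4L^2(t-1)/(\lambda d))+2\log(1/\delta)}+\sqrt\lambda S\Big].
\]
Using $\log(1+4L^2(t-1)/(\lambda d))\le\log(1+4L^2t/(\lambda d))$, we get $\beta_t\le a_1\sqrt{\log(1+a_2t)}+a_3$. Here the constants $a_1,a_2,a_3$ depend only on $d,L,S,\lambda,\kappa,\delta$, with $a_2=4L^2/(\lambda d)$. Hence
\[
\bar\beta_t\le B/2+a_3+a_1\sqrt{\log(1+a_2t)}.
\]

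\textbf{Growth of $G$.} Substituting the bound on $\bar\beta_t$ gives
\[
G(t)\le 4\sqrt{2d}\,\big(B/2+a_3+a_1\sqrt{\log(1+a_2t)}\big)\sqrt{t\log(1+a_2t)}=O\big(\sqrt t\,\log(1+a_2t)\big).
\]
Dividing by $t$, it suffices to note that $\log(1+a_2t)/\sqrt t\to0$ as $t\to\infty$, which is standard: for example, use $\log(1+x)\le 2\sqrt{x}$ applied at a rescaled argument, or apply L'H\^opital. This proves $G(t)/t\to0$, that is, $G(t)=o(t)$.

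\textbf{Finiteness of $\Ncons$.} Since $4\alpha\underline{s}>0$ is fixed (recall $\alpha>0$ and $\underline{s}=\sigma(-\Delta_h)>0$), the ratio $G(t)/t\to0$ yields some $t_0$ such that $G(t)\le4\alpha\underline{s}\,t$ for all $t\ge t_0$. Therefore the set $\{t\ge1:4\alpha\underline{s}\,t<G(t)\}$ is contained in $\{1,\dots,t_0-1\}$. Its supremum is thus finite (it is $0$ if the set is empty, by convention), so $\Ncons<\infty$.

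The only step needing any care is tracking that every constant in the bound on $\bar\beta_t$ is independent of $t$. This holds because $\beta_t$ depends on $t$ only through the single logarithm.
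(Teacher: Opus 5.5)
Your proof is correct and follows the paper's argument: bound $\bar\beta_t$ by a constant plus $O(\sqrt{\log t})$, conclude $G(t)=O(\sqrt t\,\log t)=o(t)$, and use $4\alpha\underline{s}>0$ to confine the set defining $\Ncons$ to a finite initial segment. One aside: the bound $\log(1+x)\le 2\sqrt{x}$ on its own only shows that $\log(1+a_2t)/\sqrt t$ stays bounded, so to get the limit zero use your L'H\^opital route or a bound such as $\log(1+x)\le 4x^{1/4}$.
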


\begin{proof}
    From \eqref{eq:beta}, $\beta_t=O\big(\sqrt{d\log t+\log(1/\delta)}/\kappa+\sqrt\lambda S/\kappa\big)$, so $\bar\beta_t=O(\sqrt{d\log(et)})$ for fixed $d,L,S,\lambda,\delta,\kappa$, and hence
    \[
        G(t)=4\bar\beta_t\sqrt{2dt\log\!\Big(1+\frac{4L^2t}{\lambda d}\Big)}=O\big(\sqrt t\,\log(et)\big)=o(t).
    \]
    Thus there is $t_0$ with $G(t)<4\alpha\underline{s}\,t$ for all $t\ge t_0$, so the set in \eqref{eq:envelope} is contained in $\{1,\dots,t_0-1\}$ and its supremum is finite. Note $\underline{s}\ge\sigma(-B)>0$ by Assumption~\ref{asmp:bounded} and $\Delta_h\le B$, so the threshold is strictly positive on every instance.
\end{proof}

\noindent\textbf{Theorem~\ref{thm:linear} (restated).} \emph{With probability at least $1-\delta$, simultaneously for every $T\ge1$: the displayed pairs are $\alpha$-conservative in the sense of \eqref{eq:performance}; $N_T\le\min\{T,\Ncons\}$; and $\Reg(T)\le G(T)+\tfrac12\Delta_h\Ncons$.}

\begin{proof}
    By Lemma~\ref{lem:ellipsoid}, $\P(\cE)\ge1-\delta$; all three claims are established on $\cE$, hence hold simultaneously on a single event.

    \emph{Safety.} By Lemma~\ref{lem:budget}, $\Sigma_T\ge0$ for every $T$, and expanding \eqref{eq:true-surplus} this is exactly
    $\sum_{t\le T}\max\{s_t(\x_t),s_t(\y_t)\}\ge(1-\alpha)\sum_{t\le T}s_t(\x_{b_t})$,
    i.e.\ \eqref{eq:performance}. Since the invariant holds at every $t$, the guarantee is anytime.

    \emph{Fallback count.} Lemma~\ref{lem:fallback-count}, with $\Ncons<\infty$ by Lemma~\ref{lem:ncons-finite}.

    \emph{Regret.} Split by round type. On a candidate round, $R_t\le c_t$ by Lemma~\ref{lem:certificate}. On a fallback round the displayed pair is $(\x_{b_t},\y_t^{\mathrm{fb}})$ with $\y_t^{\mathrm{fb}}\in\cC_t$, so Lemma~\ref{lem:certificate} gives $f(\x_t^\star)-f(\y_t^{\mathrm{fb}})\le c_t$, while $f(\x_t^\star)-f(\x_{b_t})=\Delta_{t,b_t}\le\Delta_h$ by Assumption~\ref{asmp:gapBounds}; averaging,
    \[
        R_t=\tfrac12\Big[\big(f(\x_t^\star)-f(\x_{b_t})\big)+\big(f(\x_t^\star)-f(\y_t^{\mathrm{fb}})\big)\Big]\;\le\;\tfrac12\big(\Delta_h+c_t\big).
    \]
    Summing and using $c_t\ge0$,
    \[
        \Reg(T)\;\le\;\sum_{t\in\cA_T}c_t+\sum_{t\in\cB_T}\frac{c_t}{2}+\frac{\Delta_h}{2}N_T
        \;\le\;\sum_{t=1}^{T}c_t+\frac{\Delta_h}{2}N_T
        \;\le\;G(T)+\frac{\Delta_h}{2}\Ncons,
    \]
    by \eqref{eq:env-1} and Lemma~\ref{lem:fallback-count}.
\end{proof}

\subsection{The simplified rate}
\label{app:rate}

\begin{lemm}
\label{lem:rate}
    Let $H(t):=G(t)/\sqrt t=4\bar\beta_t\sqrt{2d\log\big(1+4L^2t/(\lambda d)\big)}$, which is nondecreasing. Then
    \begin{align}
        \Ncons\;\le\;\frac{H(\Ncons)^2}{16\,\alpha^2\underline{s}^{\,2}}
        \;=\;\frac{2d\,\bar\beta_{\Ncons}^2\,\log\big(1+4L^2\Ncons/(\lambda d)\big)}{\alpha^2\underline{s}^{\,2}},
        \label{eq:ncons-explicit}
    \end{align}
    and, suppressing logarithmic factors in $T,d,L,S,\lambda,1/\delta$,
    \[
        \Ncons=\tilde\cO\!\Big(\frac{d^2}{\kappa^2\alpha^2\underline{s}^{\,2}}\Big),
        \qquad
        G(T)=\tilde\cO\!\Big(\frac{d\sqrt T}{\kappa}\Big),
        \qquad
        \Reg(T)=\tilde\cO\!\Big(\frac{d\sqrt T}{\kappa}+\frac{\Delta_h d^2}{\kappa^2\alpha^2\underline{s}^{\,2}}\Big).
    \]
\end{lemm}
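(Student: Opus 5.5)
The plan is to exploit the defining property of $\Ncons$ directly. If $\Ncons=0$ the bound \eqref{eq:ncons-explicit} is trivial, so assume $\Ncons\ge1$. Since $\Ncons$ is the supremum of a set of positive integers that is finite by Lemma~\ref{lem:ncons-finite}, the supremum is attained, so $4\alpha\underline{s}\,\Ncons<G(\Ncons)=\sqrt{\Ncons}\,H(\Ncons)$. Dividing by $\sqrt{\Ncons}$ and squaring gives $\Ncons<H(\Ncons)^2/(16\alpha^2\underline{s}^{\,2})$. Substituting $H(t)^2=32d\,\bar\beta_t^2\log(1+4L^2t/(\lambda d))$ yields the displayed equality. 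Monotonicity of $H$ follows from monotonicity of $\beta_t$ (hence of $\bar\beta_t$) and of the logarithm.

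Next I will turn this implicit inequality into an explicit $\tilde\cO$ bound. From \eqref{eq:beta} we have $\bar\beta_t\le B/2+\beta_t$ and $\beta_t=\cO\big(\kappa^{-1}[\sqrt{d\log(1+4L^2t/(\lambda d))+\log(1/\delta)}+\sqrt\lambda S]\big)$, with $B=2LS$ constant. Hence $H(t)^2\le C\,\frac{d}{\kappa^2}\,\big(d\log(e+t)+\log(1/\delta)\big)\log(e+t)$ for a constant $C$ depending on $L,S,\lambda$. Plugging this into $N\le H(N)^2/(16\alpha^2\underline{s}^2)$ gives an inequality of the form $N\le a\log^2(e+N)$ with $a=\tilde\cO(d^2/(\kappa^2\alpha^2\underline{s}^2))$ (absorbing the $\log(1/\delta)$ into the log factors). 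I will invert it with the standard fact that $N\le a\log^2(e+N)$ implies $N=\cO(a\log^2(e+a))$. To check this, note that if $N\ge a\log^2(e+a)\cdot K_0$ for a large constant $K_0$, then $\log(e+N)=\cO(\log(e+a))$, which contradicts the inequality.

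The bound $G(T)=\sqrt T\,H(T)=\tilde\cO(\sqrt T\cdot\sqrt d\cdot\sqrt d/\kappa)=\tilde\cO(d\sqrt T/\kappa)$ is immediate from the same estimate on $H$. The regret rate then follows by inserting both bounds into $\Reg(T)\le G(T)+\tfrac12\Delta_h\Ncons$ from Theorem~\ref{thm:linear}.

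The only mildly delicate step is the self-referential inversion. Because $\Ncons$ appears inside the logarithms through $\bar\beta_{\Ncons}$, I must make sure the resulting polylog factors depend only on $d,\kappa,\alpha,\underline{s},\delta$ and constants, not on $T$. The $\log^2$-inversion lemma handles this. I will also record that the $B/2$ term in $\bar\beta$ is dominated by $\beta_t$ up to constants since $1/\kappa\ge4$, so it does not change the rate.
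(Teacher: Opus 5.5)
Your proposal is correct and follows the paper's proof: you use membership of $\Ncons$ in its defining set, divide $4\alpha\underline{s}\,\Ncons<\sqrt{\Ncons}\,H(\Ncons)$ by $\sqrt{\Ncons}$ and square, then insert $\bar\beta_t=\tilde\cO(\sqrt d/\kappa)$ and resolve the logarithms involving $\Ncons$ by back-substitution, exactly as the paper does. Your justification of the inversion step needs a small fix, because $N\ge K_0a\log^2(e+a)$ alone does not give $\log(e+N)=\cO(\log(e+a))$. Either use that $x\mapsto x/\log^2(e+x)$ is increasing and check only at $x=K_0a\log^2(e+a)$, or first derive the crude bound $N=\cO(a^2+a)$ from $\log^2(e+N)\le 3\sqrt{e+N}$ and then substitute back.
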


\begin{proof}
    If $\Ncons=0$ there is nothing to prove. Otherwise $\Ncons$ belongs to the set in \eqref{eq:envelope}, so $4\alpha\underline{s}\,\Ncons<G(\Ncons)=H(\Ncons)\sqrt{\Ncons}$, whence $\sqrt{\Ncons}<H(\Ncons)/(4\alpha\underline{s})$ and \eqref{eq:ncons-explicit} follows by squaring and substituting the definition of $H$.

    For the orders, \eqref{eq:beta} gives $\bar\beta_t=\tilde\cO(\sqrt d/\kappa)$, so $H(t)=\tilde\cO(d/\kappa)$ and $G(T)=H(T)\sqrt T=\tilde\cO(d\sqrt T/\kappa)$. Substituting $H(\Ncons)=\tilde\cO(d/\kappa)$ into \eqref{eq:ncons-explicit} gives $\Ncons=\tilde\cO\big(d^2/(\kappa^2\alpha^2\underline{s}^{\,2})\big)$, where the logarithmic factors involving $\Ncons$ are themselves $\tilde\cO\big(\log(d/(\kappa\alpha\underline{s}))\big)$ by back-substitution. Combining with Theorem~\ref{thm:linear} gives the regret bound. The bound in Theorem~\ref{thm:linear} is the exact nonasymptotic statement; the display above is an order statement only.
\end{proof}

\subsection{Equivalence of latent and preference regret}
\label{app:equivalence}

\begin{prop}
\label{prop:equivalence}
    Define the preference regret of a displayed pair as $R^{\mathrm{BT}}_t:=s_t(\x_t^\star)-\tfrac12\big(s_t(\x_t)+s_t(\y_t)\big)$. Then for every pair,
    \[
        \kappa\,R_t\;\le\;R^{\mathrm{BT}}_t\;\le\;\tfrac14 R_t .
    \]
\end{prop}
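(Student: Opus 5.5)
Write $g_\x:=f(\x_t^\star)-f(\x)\ge0$ and $g_\y:=f(\x_t^\star)-f(\y)\ge0$, so that $R_t=\tfrac12(g_\x+g_\y)$. Since $s_t(\x_t^\star)=\sigma(0)=\tfrac12$ (Lemma~\ref{lem:surplus-floor}) and $s_t(\x)=\sigma(-g_\x)$, the preference regret splits as
\[
    R^{\mathrm{BT}}_t=\tfrac12\Big[\big(\sigma(0)-\sigma(-g_\x)\big)+\big(\sigma(0)-\sigma(-g_\y)\big)\Big].
\]
The proof therefore reduces to a one-dimensional sandwich for each item, namely
\[
    \kappa\,g\;\le\;\sigma(0)-\sigma(-g)\;\le\;\tfrac14\,g\qquad\text{for }g\in[0,B].
\]
Averaging this inequality over $g_\x$ and $g_\y$ then gives both claims at once.

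To prove the sandwich, write $\sigma(0)-\sigma(-g)=\int_{-g}^{0}\sigma'(u)\,\dif u$, exactly as in the proof of Lemma~\ref{lem:score-certificate}. The upper bound follows from $\sigma'\le\tfrac14$ everywhere. For the lower bound, note that the integration range $[-g,0]$ lies inside $[-B,0]\subseteq[-B,B]$, because $g\le B$ by Assumption~\ref{asmp:bounded}. On that range $\sigma'(u)\ge\kappa$ by the definition of $\kappa$ as the infimum of $\sigma'$ over $[-B,B]$. This gives $\int_{-g}^0\sigma'\ge\kappa g$.

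The only point needing care is justifying $g_\x,g_\y\in[0,B]$. Nonnegativity holds because $\x_t^\star$ maximises $f$ over $\cS_t$. The upper bound uses the range assumption applied to the pair $(\x_t^\star,\x)$, both of which lie in $\cD$. Everything else is routine, so I expect no real obstacle.
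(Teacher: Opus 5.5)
Your proof is correct and matches the paper's argument essentially step for step: the same decomposition of $R^{\mathrm{BT}}_t$ into the two per-item terms $\tfrac12-\sigma(-g)$, and the same integral sandwich $\kappa g\le\int_{-g}^{0}\sigma'(u)\,\dif u\le g/4$ using $\kappa\le\sigma'\le\tfrac14$ on $[-B,0]$. Your justification that $g_\x,g_\y\in[0,B]$ is also the one the paper uses.
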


\begin{proof}
    With $g_\x=f(\x_t^\star)-f(\x_t)$ and $g_\y=f(\x_t^\star)-f(\y_t)$, both in $[0,B]$ by Assumption~\ref{asmp:bounded}, we have $R_t=\tfrac12(g_\x+g_\y)$ and, using $s_t(\x_t^\star)=\sigma(0)=\tfrac12$,
    \[
        R^{\mathrm{BT}}_t=\tfrac12\Big[\big(\tfrac12-\sigma(-g_\x)\big)+\big(\tfrac12-\sigma(-g_\y)\big)\Big].
    \]
    For $g\in[0,B]$, $\;\tfrac12-\sigma(-g)=\int_{-g}^{0}\sigma'(u)\dif u$, and $\sigma'$ on $[-B,0]$ satisfies $\kappa\le\sigma'\le\tfrac14$; hence $\kappa g\le\tfrac12-\sigma(-g)\le\tfrac g4$. Apply this to $g_\x$ and $g_\y$, add, and divide by two.
\end{proof}

\subsection{Failure of Direct CLUCB Certification under Preference Feedback}
\label{app:obstruction}

We examine a direct transfer of CLUCB's cumulative safety test
\citep{conservative_linear_bandits} to preference feedback.
The result concerns the use of this test with confidence sets constructed
from comparisons. Known baseline utilities permit alternative,
baseline-centered certificates, which we discuss below.

Throughout this subsection, the learner is additionally supplied with the
exact baseline utilities
\[
    \mu_t^0 := f(\x_{b_t}).
\]
These are distinct from the baseline preference scores used in the main
formulation. Providing this additional information makes the comparison
with CLUCB explicit.

\begin{defn}[\textbf{Direct cumulative CLUCB certification}]
\label{def:avg-test}
    Let $\mathcal A_t$ and $\mathcal B_t$ denote the admitted candidate
    rounds and fallback rounds through time $t$, respectively, and write
    \[
        M_t := |\mathcal A_t|,
        \qquad
        N_t := |\mathcal B_t|,
        \qquad
        \mathbf{m}_i := \tfrac12(\x_i+\y_i).
    \]
    At round $t$, let
    $(\tilde\x_t,\tilde\y_t)\in\cS_t\times\cS_t$ be an arbitrary
    proposed pair, with
    \[
        \widetilde{\mathbf{m}}_t
        := \tfrac12(\tilde\x_t+\tilde\y_t).
    \]
    Construct the confidence set
    \begin{align}
        \mathcal K_t
        :=
        \left\{
            \theta\in\Theta:
            \|\theta-\hat\theta_t\|_{V_t}\le\beta_t
        \right\},
        \label{eq:obs-confidence}
    \end{align}
    where $\hat\theta_t$, $V_t$, and $\beta_t$ are defined by
    \eqref{eq:design}--\eqref{eq:beta}, using the observed comparisons.
    The algorithm admits the candidate only if
    \begin{align}
        L_t
        &:=
        \inf_{\theta\in\mathcal K_t}
        \left\langle
            \theta,\,
            \sum_{i\in\mathcal A_{t-1}}\mathbf{m}_i
            +\widetilde{\mathbf{m}}_t
        \right\rangle,
        \label{eq:obs-cumulative-lcb}
        \\
        L_t+\sum_{i\in\mathcal B_{t-1}}\mu_i^0
        &\ge
        (1-\alpha)\sum_{i=1}^{t}\mu_i^0.
        \label{eq:obs-cumulative-test}
    \end{align}
    Otherwise, it displays a pair containing $\x_{b_t}$ and credits
    $\mu_t^0$ for that round in subsequent applications of
    \eqref{eq:obs-cumulative-test}.
    Candidate pairs and fallback partners may be selected by arbitrary
    nonanticipating rules.

    The baseline utilities are credited separately in
    \eqref{eq:obs-cumulative-test}; the confidence set
    \eqref{eq:obs-confidence} is not intersected with the equalities
    $\langle\theta,\x_{b_i}\rangle=\mu_i^0$.
\end{defn}

This definition retains CLUCB's use of the current confidence set to
certify cumulative candidate performance, while replacing scalar-reward
regression by the comparison estimator.
The baseline self-duel $(\x_{b_t},\x_{b_t})$ reproduces the usual baseline
fallback and justifies crediting $\mu_t^0$.
We also allow other baseline-containing fallbacks: on the instance
constructed below, their average utility is always at least $\mu_t^0$,
so the same credit is valid.
On a general instance, such a credit for a non-self fallback would
require a separate justification.

\begin{prop}[\textbf{Failure of the direct cumulative transfer}]
\label{prop:obstruction}
    Fix $\lambda>0$ and $\alpha,\delta\in(0,1)$, and write
    $p:=1-\alpha$.
    There is a two-dimensional instance with $K=2$, satisfying the
    itemwise i.i.d.\ context assumption,
    Assumptions~\ref{asmp:BTL-model}, \ref{asmp:bounded},
    \ref{asmp:gapBounds}, and \ref{asmp:linear}, and the normalization
    $f(\cdot)\in[0,1]$, with the following properties.

    Every possible displayed pair satisfies both the cardinal
    average-utility constraint and the score constraint
    \eqref{eq:performance}.
    Nevertheless, every algorithm satisfying
    Definition~\ref{def:avg-test} obeys, on its confidence event $\cE$,
    \begin{align}
        N_T\ge\frac{p}{2}T
        \qquad\text{for every }T\ge1.
        \label{eq:obs-fallback-lower}
    \end{align}
    Moreover, for every $\eta\in(0,1)$ and every fixed integer
    \[
        T\ge\frac{32}{p^2}\log\frac1\eta,
    \]
    \begin{align}
        \mathbb P\left\{
            \Reg(T)\ge\frac{p^3}{512}T
        \right\}
        \ge 1-\delta-\eta.
        \label{eq:obs-regret-lower}
    \end{align}
    If all fallbacks are baseline self-duels, the denominator in
    \eqref{eq:obs-regret-lower} improves from $512$ to $256$.

    On the same instance, \MaxInPwBC{} satisfies, with probability at
    least $1-\delta$, simultaneously for every $T\ge1$,
    \[
        N_T\le\Ncons,
        \qquad
        \Reg(T)\le G(T)+\frac{p}{8}\Ncons
        =\tilde\cO(\sqrt T),
    \]
    where the rate treats the instance parameters and
    $\alpha,\delta,\lambda$ as fixed.
\end{prop}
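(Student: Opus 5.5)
The plan is to build an instance in which every comparison vector $\z_t$ lies along a single direction $\mathbf e_1$, while the average-utility vectors $\mathbf m_i$ carry a fixed component along the orthogonal direction $\mathbf e_2$. Then $V_t=\lambda I_d+(\sum_i\|\z_i\|^2)\mathbf e_1\mathbf e_1^\top$ never shrinks along $\mathbf e_2$. The confidence set $\mathcal K_t$ of \eqref{eq:obs-confidence} therefore keeps an $\Omega(1)$ spread in $\theta_2$, because $\beta_t/\sqrt\lambda$ does not decrease. As a result, the lower confidence bound $L_t$ in \eqref{eq:obs-cumulative-lcb} underestimates each candidate's average utility by a constant that never vanishes.

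\textbf{Instance.} I would take $d=2$ and $K=2$. Each item is drawn i.i.d.\ as $(\xi h,\,c)$ with $\xi$ a Rademacher sign, for constants $h,c>0$. Then $\x_{t,1}-\x_{t,2}\in\{0,\pm2h\mathbf e_1\}$ and $\tfrac12(\x+\y)$ always has $\mathbf e_2$-coordinate equal to $c$. I would choose $\theta^\star=(\theta_1,\theta_2)$ with $S$ large enough that $\theta^\star-\rho\mathbf e_2\in\Theta$ for some fixed $\rho>0$. The parameters are tuned so that $f\in[0,1]$, the baseline is the worse item, and the baseline gap $2h\theta_1$ is at most a small multiple of $p$, so that $\Delta_h\le p/4$. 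The same tuning should make every pair (in particular the two-worst self-duel) satisfy both the average-utility constraint and \eqref{eq:performance}; this only needs $\Delta_h$ small relative to $\alpha$ on the score and utility scales. Assumptions~\ref{asmp:BTL-model}--\ref{asmp:linear} are then routine checks.

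\textbf{Fallback lower bound.} On $\cE$, $\theta^\star\in\mathcal K_t$. I would show that $\theta^\star-\rho\mathbf e_2\in\mathcal K_t$ as well, using $\beta_t\ge\sqrt\lambda S/\kappa\ge\sqrt\lambda\rho$ and the fact that the $\mathbf e_2$ eigenvalue of $V_t$ is exactly $\lambda$. This gives
\[
L_t\le\sum_{i\in\mathcal A_{t-1}\cup\{t\}}\big(f(\mathbf m_i)-\rho c\big)\le(M_{t-1}+1)(\mu_{\max}-\rho c),
\]
where $\mu_{\max}\le\mu^0+\Delta_h$. Substituting into \eqref{eq:obs-cumulative-test}, the test can pass only if $(M_{t-1}+1)(\mu^0+\Delta_h-\rho c)\ge p(M_{t-1}+1)\mu^0-(1-p)N_{t-1}\mu^0$. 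Choosing $\rho c$ as a suitable constant multiple of $\mu^0$ makes the per-candidate deficit at least $\tfrac p2\mu^0$. Summing this as in Lemma~\ref{lem:fallback-count} with the roles reversed, the test can admit candidates only while $M_t\le(1-\tfrac p2)t$, which yields $N_T\ge\tfrac p2 T$. Keeping $\mu^0$ exact, as opposed to merely bounded, is what makes this algebra clean.

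\textbf{Regret and the comparison with \MaxInPwBC{}.} Every fallback displays $\x_{b_t}$, which contributes regret $\tfrac12\Delta_{t,b_t}$. The gap is random: it is positive only when the two signs differ, which happens with probability $1/2$. The deterministic bound $N_T\ge\tfrac p2T$ must therefore be combined with a concentration argument for the number of fallback rounds on which the gap is nonzero. The indicators of differing signs are i.i.d.\ and independent of $\cF_{t-1}$-measurable decisions, but the fallback times are adaptive. I would handle this with Azuma--Hoeffding applied to the martingale $\sum_t\I\{t\in\mathcal B_T\}(\I\{\text{gap}_t>0\}-\tfrac12)$, which needs $T\ge32p^{-2}\log(1/\eta)$. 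Together with the constant-factor choices above, this yields $\Reg(T)\ge p^3T/512$ on $\cE\cap\{\text{concentration}\}$, an event of probability at least $1-\delta-\eta$. For self-duel fallbacks the partner also has gap $\Delta$, which doubles the per-round regret and gives the denominator $256$. Finally, the \MaxInPwBC{} claim follows directly from Theorem~\ref{thm:linear} with $\Delta_h\le p/4$, giving $\tfrac12\Delta_h\Ncons\le\tfrac p8\Ncons$, together with Lemma~\ref{lem:rate}.

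\textbf{Main obstacle.} I expect the main obstacle to be the simultaneous tuning of $h,c,\theta^\star,S$. It must keep $f\in[0,1]$ and make all pairs safe, so that the linear regret is attributable purely to the certification rule. At the same time, the persistent $\mathbf e_2$-slack $\rho c$ must dominate $p\mu^0$, with explicit constants that produce $512$.
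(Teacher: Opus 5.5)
\textbf{The regret step does not go through.} The sum $\sum_t\I\{t\in\mathcal B_T\}\bigl(\I\{\mathrm{gap}_t>0\}-\tfrac12\bigr)$ is not a martingale. The admit/fallback decision at round $t$ is taken after $\cS_t$ is revealed. In the paper's filtration it is $\cF_{t-1}$-measurable precisely because $\cF_{t-1}$ contains $\cS_t$, so the current gap indicator is $\cF_{t-1}$-measurable as well. The test \eqref{eq:obs-cumulative-test} also depends on $\widetilde{\mathbf m}_t$ and on $\mu_t^0$. In your instance $\mu_t^0=c\theta^\star_2+h\theta^\star_1$ exactly on the both-$(h,c)$ rounds, where the gap is zero, and $\mu_t^0=c\theta^\star_2-h\theta^\star_1$ otherwise. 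So $\mu^0$ is not constant either, and you must carry bounds as the paper does. The decision is therefore correlated with the current gap.

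Worse, in your instance $\P\{\mathrm{gap}_t=0\}=\tfrac12>\tfrac p2$, and this probability is at least $\tfrac12$ for two i.i.d.\ draws from any two-point law. So $N_T\ge\tfrac p2T$ by itself is compatible with every fallback landing on a zero-gap round, and no linear regret follows.

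The paper avoids both issues. It uses a continuous instance, $\x_{t,i}=(1,U_{t,i})^\top$ with $U_{t,i}\sim\mathrm{Unif}[0,1]$ and $\theta^\star=(\tfrac12,\tfrac p4)^\top$, for which $\P\{Q_t<p/8\}\le p/4$. It then counts the small-gap rounds $H_T$ over \emph{all} $T$ rounds. This is an i.i.d.\ sum controlled by plain Hoeffding, with no adaptivity. Hence at least $N_T-H_T\ge\tfrac p8T$ fallbacks have gap at least $p^2/32$, and each costs at least half its gap, giving $p^3T/512$. To keep your two-point instance, you would instead need a fallback fraction strictly above $\tfrac12$, again combined with all-rounds counting of the zero-gap rounds.

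\textbf{Your justification of $\theta^\star-\rho\mathbf e_2\in\mathcal K_t$ is also insufficient.} From $\|\hat\theta_t-\theta^\star\|_{V_t}\le\beta_t$ and $\sqrt\lambda\rho\le\beta_t$, the triangle inequality only gives $\|\hat\theta_t-(\theta^\star-\rho\mathbf e_2)\|_{V_t}\le\beta_t+\sqrt\lambda\rho$. If $\theta^\star$ sits on the boundary of the ellipsoid, the shifted point leaves it.

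The missing fact is the location of the center. Every $\z_i$ is parallel to $\mathbf e_1$, so the likelihood in \eqref{eq:design} does not depend on $\theta_2$, and the ridge penalty forces $\hat\theta_{t,2}=0$. Then $\|\theta-\hat\theta_t\|_{V_t}^2=(\lambda+W_t)(\theta_1-\hat\theta_{t,1})^2+\lambda\theta_2^2$. On $\cE$, every $\rho\in(0,2\theta^\star_2]$ works, with no reference to $\beta_t$, and membership in $\Theta$ is automatic. This is the paper's step: in its coordinates the common direction is $\mathbf e_1$, and the point used is $\theta^\circ=(0,g)^\top$.

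In your instance, the extreme choice $\rho=2\theta^\star_2$ is the reflection $(\theta^\star_1,-\theta^\star_2)$. It yields $N_T\ge\tfrac{1+p}{2}\bigl(1-\tfrac{h\theta^\star_1}{c\theta^\star_2}\bigr)T\ge(\tfrac12+\tfrac p4)T$ once $h\theta^\star_1\le\tfrac p4c\theta^\star_2$. That is exactly the fraction above $\tfrac12$ needed to repair your regret step.
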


\begin{proof}
    \emph{The instance.}
    Set
    \[
        g:=\frac p4,
        \qquad
        \theta^\star:=
        \begin{pmatrix}
            1/2\\
            g
        \end{pmatrix},
        \qquad
        \cD:=
        \left\{
            (1,u)^\top:u\in[0,1]
        \right\}.
    \]
    Independently over rounds and items, draw
    \[
        U_{t,1},U_{t,2}
        \sim\operatorname{Unif}[0,1],
        \qquad
        \x_{t,i}:=(1,U_{t,i})^\top.
    \]
    Thus each item is sampled independently from the same distribution
    on $\cD$, and the two items are distinct almost surely.

    Let the baseline select the item with the smaller second
    coordinate, breaking ties arbitrarily.
    Since $g>0$, the optimal item has the larger second coordinate.
    Consequently,
    \begin{align*}
        f(\x_{t,i})
        &=\frac12+gU_{t,i},
        \\
        \mu_t^0
        &=\frac12+g\min\{U_{t,1},U_{t,2}\},
        \\
        \Delta_{t,b_t}
        &=gQ_t,
        \qquad
        Q_t:=|U_{t,1}-U_{t,2}|.
    \end{align*}
    In particular,
    \[
        \frac12\le\mu_t^0\le\frac12+g,
        \qquad
        0\le\Delta_{t,b_t}\le g,
        \qquad
        \frac12\le f(\x)\le\frac34
        \quad\text{for every }\x\in\cD.
    \]
    The boundedness and linearity assumptions hold with
    $S=1$, $L=\sqrt2$, and $B=2LS=2\sqrt2$.
    The baseline-gap assumption holds with
    $\Delta_l=0$ and $\Delta_h=g$.
    Generate comparison outcomes according to the BTL model with
    parameter $\theta^\star$.
    The exact baseline utility is supplied as stipulated above;
    the baseline score is
    \[
        s_t(\x_{b_t})=\sigma(-gQ_t).
    \]

    Because the baseline is a least-utility item in $\cS_t$, every
    pair $(\x,\y)\in\cS_t\times\cS_t$ satisfies
    \begin{align}
        \frac{f(\x)+f(\y)}2
        &\ge\mu_t^0,
        &
        \max\{s_t(\x),s_t(\y)\}
        &\ge s_t(\x_{b_t}).
        \label{eq:obs-all-pairs-safe}
    \end{align}
    Summing these inequalities proves the asserted safety properties
    at every prefix.
    It also justifies the fallback credit for every possible
    baseline-containing pair on this instance.

    \emph{The confidence set retains a lower utility offset.}
    Every comparison vector has zero first coordinate.
    Hence, regardless of the algorithm's choices,
    \[
        V_t=
        \begin{pmatrix}
            \lambda&0\\
            0&\lambda+W_t
        \end{pmatrix},
        \qquad
        W_t:=\sum_{i<t}(\z_i)_2^2\ge0.
    \]
    Every possible pair average is $\mathbf{m}=(1,v)^\top$ with
    $v\in[0,1]$, so
    \begin{align}
        \|\mathbf{m}\|_{V_t^{-1}}^2
        =
        \frac1\lambda+\frac{v^2}{\lambda+W_t}
        \ge\frac1\lambda.
        \label{eq:obs-width}
    \end{align}

    The likelihood term in \eqref{eq:design} does not depend on the
    first coordinate of $\theta$.
    Setting that coordinate to zero preserves feasibility in the
    Euclidean ball $\Theta$ and reduces the regularization penalty
    unless the coordinate was already zero.
    Therefore,
    \[
        \hat\theta_t=(0,\hat g_t)^\top
        \qquad\text{for every }t.
    \]
    Consider the alternative parameter
    \[
        \theta^\circ:=(0,g)^\top.
    \]
    On the event $\cE$ of Lemma~\ref{lem:ellipsoid},
    \begin{align*}
        \|\theta^\circ-\hat\theta_t\|_{V_t}^2
        &=(\lambda+W_t)(g-\hat g_t)^2
        \\
        &\le
        \frac{\lambda}{4}
        +(\lambda+W_t)(g-\hat g_t)^2
        \\
        &=\|\theta^\star-\hat\theta_t\|_{V_t}^2
        \le\beta_t^2.
    \end{align*}
    Since $\|\theta^\circ\|_2=g<1$, this proves
    $\theta^\circ\in\mathcal K_t$ for every $t$ on $\cE$.
    Moreover,
    \[
        0\le\langle\theta^\circ,\x\rangle\le g
        \qquad\text{for every }\x\in\cD.
    \]
    Thus the inclusion also survives intersecting $\mathcal K_t$
    with the utility-normalization constraints
    $0\le\langle\theta,\x\rangle\le1$ for all $\x\in\cD$.

    For every possible pair average $\mathbf{m}=(1,v)^\top$,
    \[
        \langle\theta^\circ,\mathbf{m}\rangle=gv\le g.
    \]
    Consequently, on $\cE$,
    \begin{align}
        L_t
        &\le
        \left\langle
            \theta^\circ,\,
            \sum_{i\in\mathcal A_{t-1}}\mathbf{m}_i
            +\widetilde{\mathbf{m}}_t
        \right\rangle
        \le g(M_{t-1}+1).
        \label{eq:obs-mean}
    \end{align}
    This bound holds for arbitrary proposed pairs, including
    optimal self-duels.

    \emph{A constant fraction of rounds must be fallbacks.}
    At an admitted round $t$, we have
    $M_t=M_{t-1}+1$ and $N_t=N_{t-1}$.
    Combining \eqref{eq:obs-cumulative-test},
    \eqref{eq:obs-mean}, and the bounds on $\mu_i^0$ gives
    \begin{align}
        gM_t+\left(\frac12+g\right)N_t
        \ge\frac p2t.
        \label{eq:obs-count-invariant}
    \end{align}
    This inequality also persists through a fallback round:
    its left side increases by $\frac12+g$, while its right side
    increases by $p/2$, and
    \[
        \frac12+g-\frac p2
        =\frac\alpha2+g>0.
    \]
    Since both sides are zero at time zero,
    \eqref{eq:obs-count-invariant} holds for every $t$ on $\cE$.

    Substituting $M_t=t-N_t$ and $g=p/4$ yields
    \[
        \frac p4t+\frac12N_t\ge\frac p2t,
    \]
    which proves \eqref{eq:obs-fallback-lower}.
    Lemma~\ref{lem:ellipsoid} gives
    $\mathbb P(\cE)\ge1-\delta$.

    \emph{A linear number of fallbacks implies linear regret here.}
    The baseline gaps are not uniformly bounded away from zero,
    so we must account for the possibility that the algorithm
    selects small-gap rounds for fallback.

    Set $\varepsilon:=p/8$ and define
    \[
        H_T:=
        \sum_{t=1}^{T}
        \mathbf{1}\{Q_t<\varepsilon\}.
    \]
    The random variables $Q_t$ are i.i.d., with
    \[
        \mathbb P\{Q_t<\varepsilon\}
        =2\varepsilon-\varepsilon^2
        \le2\varepsilon
        =\frac p4.
    \]
    Hoeffding's inequality therefore gives, for each fixed $T$,
    \[
        \mathbb P\left\{
            H_T\le
            \frac p4T+
            \sqrt{\frac T2\log\frac1\eta}
        \right\}
        \ge1-\eta.
    \]
    If $T\ge32p^{-2}\log(1/\eta)$, the square-root term is at
    most $pT/8$.
    On this event and $\cE$, the number of fallback rounds with
    $Q_t\ge\varepsilon$ is at least
    \[
        N_T-H_T
        \ge
        \frac p2T-\frac{3p}{8}T
        =\frac p8T.
    \]

    Any pair containing the baseline incurs regret at least
    $\Delta_{t,b_t}/2=gQ_t/2$.
    Since all other rounds have nonnegative regret, we obtain
    \[
        \Reg(T)
        \ge
        \frac p8T\cdot\frac{g\varepsilon}{2}
        =
        \frac{p^3}{512}T.
    \]
    A union bound proves \eqref{eq:obs-regret-lower};
    independence between $\cE$ and the context-counting event
    is not required.
    A baseline self-duel incurs regret $gQ_t$, which doubles
    the lower bound.

    \emph{Performance of \MaxInPwBC{}.}
    The same instance satisfies every assumption of
    Theorem~\ref{thm:linear}, with
    \[
        \Delta_h=g=\frac p4,
        \qquad
        \underline{s}=\sigma(-g)>0.
    \]
    Applying that theorem to the trajectory of \MaxInPwBC{} gives,
    with probability at least $1-\delta$, simultaneously for
    every $T\ge1$,
    \[
        N_T\le\Ncons,
        \qquad
        \Reg(T)
        \le
        G(T)+\frac g2\Ncons
        =
        G(T)+\frac p8\Ncons.
    \]
    For fixed parameters, $G(T)=O(\sqrt T\log(eT))=o(T)$ and
    $\Ncons<\infty$ by \eqref{eq:envelope}.
    This proves the stated sublinear regret bound without
    requiring linear growth of $W_t$ or pointwise convergence
    of $c_t$.
\end{proof}

\begin{remark}[\textbf{Scope of the separation}]
\label{rem:obs-scope}
    Proposition~\ref{prop:obstruction} concerns the specific
    certification rule in Definition~\ref{def:avg-test}.
    The parameters $\theta^\star$ and $\theta^\circ$ induce
    identical comparison probabilities, but they do not agree
    with the same supplied absolute baseline utilities.
    The latter information is not imposed on the confidence set
    \eqref{eq:obs-confidence}.

    Known baseline utilities permit the identity
    \[
        f(\mathbf{m}_i)
        =
        \mu_i^0+
        \langle\theta^\star,\mathbf{m}_i-\x_{b_i}\rangle.
    \]
    Consequently, for baseline self-duel fallbacks, a valid
    alternative cumulative test is
    \begin{align}
        \alpha\sum_{i=1}^{t}\mu_i^0
        +
        \inf_{\theta\in\mathcal K_t}
        \left\langle
            \theta,\,
            \sum_{i\in\mathcal A_{t-1}}
                (\mathbf{m}_i-\x_{b_i})
            +\widetilde{\mathbf{m}}_t-\x_{b_t}
        \right\rangle
        \ge0.
        \label{eq:obs-centered-test}
    \end{align}
    On $\cE$, the left side lower-bounds the cumulative cardinal
    surplus after admitting the candidate.
    The same conclusion holds for other fallbacks whose average
    utilities are certified to be at least their baseline utilities.

    On the constructed instance, every centered vector in
    \eqref{eq:obs-centered-test} has zero first coordinate, so
    the common utility component disappears.
    Incorporating the known-baseline equalities into the confidence
    set is another possible modification.
    The proposition therefore establishes a limitation of the
    direct cumulative transfer, rather than an impossibility of
    conservative cardinal certification under preference feedback.
\end{remark}

\begin{remark}[\textbf{Score certification through comparisons}]
\label{rem:obs-score-certification}
    Scores can also be certified directly from pairwise differences.
    On $\cE$, for every $\y\in\cS_t$,
    \begin{align}
        s_t(\y)
        \ge
        \sigma\!\left(
            -\max_{\x\in\cS_t}
            \left\{
                \langle\hat\theta_t,\x-\y\rangle
                +\beta_t\|\x-\y\|_{V_t^{-1}}
            \right\}
        \right).
        \label{eq:obs-direct-score-bound}
    \end{align}
    Indeed, each expression inside the maximum upper-bounds
    $f(\x)-f(\y)$, and
    \[
        s_t(\y)
        =
        \sigma\!\left(
            -\max_{\x\in\cS_t}\{f(\x)-f(\y)\}
        \right).
    \]
    No estimate of an absolute utility is needed.

    Our construction uses the uniform optimality-gap certificate
    $c_t$ from Lemma~\ref{lem:certificate} and converts it into
    a score guarantee through Lemma~\ref{lem:score-certificate}.
    The candidate selection and informative fallback ensure,
    by \eqref{eq:width-transfer}, that
    \[
        c_t
        \le
        4\bar\beta_t
        \min\{1,\|\z_t\|_{V_t^{-1}}\},
        \qquad
        \bar\beta_t:=\max\{B/2,\beta_t\}.
    \]
    Thus its cumulative certification error is bounded by
    $G(T)$ along the comparisons actually observed.
    This connection between certification and information
    acquisition yields the finite fallback bound in
    Theorem~\ref{thm:linear}.
\end{remark}

\subsection{Relaxing knowledge of baseline scores}
\label{app:baseline-score-bounds}

The baseline constraint~\eqref{eq:performance} depends only on
utility differences and is therefore invariant to additive
shifts of $f$.
Indeed, for every $a\in\R$,
\[
    \sigma\big((f(\x)+a)-(f(\x_t^\star)+a)\big)
    =
    \sigma\big(f(\x)-f(\x_t^\star)\big).
\]
Consequently, exact knowledge of $s_t(\x_{b_t})$ is an
optional source of information, rather than a requirement
for resolving an additive utility offset.
This subsection proves that valid lower and upper bounds on
the baseline score suffice.

\paragraph{Available baseline-score bounds.}
Retain Assumptions~\ref{asmp:BTL-model},
\ref{asmp:bounded}, \ref{asmp:gapBounds}, and
\ref{asmp:linear}, but replace the exact-score input to
Algorithm~\ref{alg:cmaxinp} by two bounds $L_t,U_t$.
These bounds must be available before the round-$t$ pair
is selected and satisfy, on every realization,
\begin{equation}
    \underline{s}\le L_t\le U_t\le\tfrac12,
    \qquad
    \underline{s}=\sigma(-\Delta_h)>0.
    \label{eq:baseline-bound-range}
\end{equation}
The BTL assumption is understood conditionally on the full
pre-feedback information, including any side information
used to construct these bounds.

For some $\delta_b\in[0,1)$, assume simultaneous coverage:
\begin{equation}
    \P(\cE_b)\ge 1-\delta_b,
    \qquad
    \cE_b:=
    \bigcap_{t\ge1}
    \left\{
        L_t\le s_t(\x_{b_t})\le U_t
    \right\}.
    \label{eq:baseline-coverage}
\end{equation}
Deterministically valid bounds correspond to $\delta_b=0$.
No independence between $\cE_b$ and the model-confidence
event is required.

\paragraph{Modified safety budget.}
Compute $\hat\theta_t$, $\beta_t$, $\cC_t$,
$(\tilde\x_t,\tilde\y_t)$, $w_t$, and $c_t$
using~\eqref{eq:design}--\eqref{eq:candidate}, and define
the fallback partner by~\eqref{eq:fallback}.
Initialize $\Lambda_0=0$ and set
\begin{equation}
    q_t:=
    \frac12-\frac{c_t}{4}-(1-\alpha)U_t.
    \label{eq:baseline-bound-candidate-credit}
\end{equation}
The displayed pair and budget update are
\begin{align}
    (\x_t,\y_t)
    &=
    \begin{cases}
        (\tilde\x_t,\tilde\y_t),
        & \Lambda_{t-1}+q_t\ge0,\\
        (\x_{b_t},\y_t^{\mathrm{fb}}),
        & \Lambda_{t-1}+q_t<0,
    \end{cases}
    \label{eq:baseline-bound-pair}\\
    \Lambda_t
    &=
    \begin{cases}
        \Lambda_{t-1}+q_t,
        & \Lambda_{t-1}+q_t\ge0,\\
        \Lambda_{t-1}+\alpha L_t,
        & \Lambda_{t-1}+q_t<0.
    \end{cases}
    \label{eq:baseline-bound-budget}
\end{align}
After displaying the pair, observe $o_t$, set
$\z_t=\x_t-\y_t$, and update the design matrix by
$V_{t+1}=V_t+\z_t\z_t^\top$.
The estimator continues to use all displayed comparisons.

The upper bound $U_t$ appears in the subtracted baseline
term of the candidate test, whereas the lower bound $L_t$
determines the guaranteed credit from a fallback round.
The unknown quantity $s_t(\x_{b_t})$ is not used by
this decision rule.

Let $\cA_T,\cB_T$ denote the candidate and fallback rounds
of this variant, and write
$M_T=|\cA_T|$ and $N_T=|\cB_T|$.
Use $\beta_t$ from~\eqref{eq:beta}, with model-confidence
parameter $\delta$, and $G$ and $\Ncons$
from~\eqref{eq:envelope}.

\begin{theo}[\textbf{Guarantees with baseline-score bounds}]
\label{thm:baseline-score-bounds}
Suppose the assumptions above hold, with
$\alpha,\delta\in(0,1)$, $\lambda>0$, and
$\delta+\delta_b<1$.
Run the variant
\eqref{eq:baseline-bound-candidate-credit}--
\eqref{eq:baseline-bound-budget}.
With probability at least $1-\delta-\delta_b$,
simultaneously for every $T\ge1$,
\begin{align}
    \sum_{t=1}^T
    \max\{s_t(\x_t),s_t(\y_t)\}
    &\ge
    (1-\alpha)\sum_{t=1}^T s_t(\x_{b_t}),
    \label{eq:baseline-bound-safety}\\
    N_T
    &\le \min\{T,\Ncons\},
    \label{eq:baseline-bound-count}\\
    \Reg(T)
    &\le
    G(T)+\frac{\Delta_h}{2}\Ncons.
    \label{eq:baseline-bound-regret}
\end{align}
In particular, deterministically valid bounds preserve
the exact guarantees and confidence level of
Theorem~\ref{thm:linear}.
\end{theo}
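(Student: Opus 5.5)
We work on the intersection $\cE\cap\cE_b$, which by Lemma~\ref{lem:ellipsoid}, \eqref{eq:baseline-coverage}, and a union bound has probability at least $1-\delta-\delta_b$. All three claims are then deterministic consequences of this event, and the argument mirrors the proof of Theorem~\ref{thm:linear}. The quantities $\hat\theta_t,\beta_t,\cC_t,w_t,c_t$ and the fallback partner are computed exactly as before from the displayed comparisons. Hence Lemmas~\ref{lem:certificate}, \ref{lem:score-certificate}, \ref{lem:pointwise} and \ref{lem:envelope}, together with \eqref{eq:width-transfer}, apply verbatim to this variant's trajectory. In particular $\sum_{i\le t}c_i\le G(t)$, and at a rejected round $\tau$ we have $\sum_{i<\tau}c_i+c_\tau\le G(\tau)$.

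\emph{Safety.} We prove the invariant $\Sigma_t\ge\Lambda_t\ge0$ by induction, as in Lemma~\ref{lem:budget}.
\begin{itemize}
\item On a candidate round, Lemmas~\ref{lem:certificate} and \ref{lem:score-certificate} give a true increment of at least $\tfrac12-\tfrac{c_t}4-(1-\alpha)s_t(\x_{b_t})$. On $\cE_b$ we have $s_t(\x_{b_t})\le U_t$, so this is at least $q_t$. Nonnegativity of $\Lambda_t$ is exactly the acceptance test.
\item On a fallback round, the pair contains the baseline, so the true increment is at least $\alpha s_t(\x_{b_t})\ge\alpha L_t>0$, using the coverage lower bound.
\end{itemize}
Then $\Sigma_T\ge0$ is \eqref{eq:baseline-bound-safety}.

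\emph{Fallback count.} This is the step requiring care, because the acceptance test now involves $U_t,L_t$ instead of the true score. We redo Lemma~\ref{lem:fallback-count} using only the range condition \eqref{eq:baseline-bound-range}, which holds on every realization.
\begin{itemize}
\item Since $U_t\le\tfrac12$, each candidate bracket satisfies $\tfrac12-(1-\alpha)U_i\ge\tfrac\alpha2\ge\alpha\underline s$.
\item Since $L_t\ge\underline s$, each fallback credit satisfies $\alpha L_i\ge\alpha\underline s$.
\end{itemize}
Expanding the failed test at a fallback round $\tau$ therefore gives $\alpha\underline s\,\tau<\tfrac14(\sum_{i\in\cA_{\tau-1}}c_i+c_\tau)\le\tfrac14G(\tau)$. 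So every fallback time lies in the set defining $\Ncons$, and $N_T\le\min\{T,\Ncons\}$. Notably this step does not even need $\cE_b$, only $\cE$ for the envelope.

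\emph{Regret.} The split from Theorem~\ref{thm:linear} carries over unchanged. On candidate rounds $R_t\le c_t$. On fallback rounds $\y_t^{\mathrm{fb}}\in\cC_t$ and $\Delta_{t,b_t}\le\Delta_h$, so $R_t\le\tfrac12(\Delta_h+c_t)$. Summing and applying \eqref{eq:env-1} gives \eqref{eq:baseline-bound-regret}.

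The final remark follows because $\delta_b=0$ makes $\cE_b$ sure, leaving probability $1-\delta$. The main obstacle is confirming that the envelope lemma and the fallback-time inequality hold with the bound inputs in place of exact scores. I expect this to go through via the range condition \eqref{eq:baseline-bound-range}, since it reproduces the two floors $\alpha/2$ and $\alpha\underline s$ from Lemma~\ref{lem:surplus-floor}.
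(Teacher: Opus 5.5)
Your proposal is correct and follows essentially the same route as the paper's proof. Both work on $\cE\cap\cE_b$, reuse the cumulative certificate envelope, prove $\Sigma_t\ge\Lambda_t\ge0$ by induction (using $U_t$ on candidate rounds and $L_t$ on fallback rounds), and recover the floors $\alpha/2$ and $\alpha\underline{s}$ from the range condition \eqref{eq:baseline-bound-range} to place every fallback time below $\Ncons$. One small correction: Lemmas~\ref{lem:pointwise} and~\ref{lem:envelope} are deterministic, so your fallback count needs neither $\cE_b$ nor $\cE$, contrary to your remark that it needs $\cE$. Separately, applying Lemma~\ref{lem:ellipsoid} to this variant relies on the stated convention that the BTL law holds conditionally on the information used to form $L_t,U_t$.
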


\begin{proof}
We first establish the relevant model-confidence event.
The proof of Lemma~\ref{lem:ellipsoid} applies to any
adaptively selected comparison vectors that are measurable
before their feedback is observed.
The modified safety test preserves this property.
Hence
\[
    \P(\cE)\ge1-\delta,
    \qquad
    \cE:=
    \left\{
        \|\hat\theta_t-\theta^\star\|_{V_t}
        \le\beta_t
        \text{ for every }t\ge1
    \right\}.
\]
The union bound gives
\begin{equation}
    \P(\cE\cap\cE_b)
    \ge1-\delta-\delta_b.
    \label{eq:baseline-bound-joint-event}
\end{equation}
We prove all three conclusions on this single event.

\paragraph{Certificates and their cumulative envelope.}
On $\cE$, Lemma~\ref{lem:certificate} gives
\begin{equation}
    f(\x_t^\star)-f(\y)\le c_t
    \qquad
    \text{for every }\y\in\cC_t.
    \label{eq:baseline-bound-item-certificate}
\end{equation}
In particular, the candidate pair has instantaneous
regret at most $c_t$.

Write
\[
    v_t:=\|\z_t\|_{V_t^{-1}},
    \qquad
    \bar\beta_t:=\max\{B/2,\beta_t\},
\]
where $\z_t$ is the comparison vector actually displayed.
On a candidate round, $w_t=v_t$.
On a fallback round, the triangle inequality and the
definition of $\y_t^{\mathrm{fb}}$ give
\begin{align*}
    w_t
    &=
    \|\tilde\x_t-\tilde\y_t\|_{V_t^{-1}}\\
    &\le
    \|\tilde\x_t-\x_{b_t}\|_{V_t^{-1}}
    +
    \|\x_{b_t}-\tilde\y_t\|_{V_t^{-1}}\\
    &\le
    2\|\x_{b_t}-\y_t^{\mathrm{fb}}\|_{V_t^{-1}}
    =
    2v_t.
\end{align*}
Thus, on either type of round,
\begin{equation}
    c_t
    \le
    \min\{B,4\beta_t v_t\}
    \le
    4\bar\beta_t\min\{1,v_t\}.
    \label{eq:baseline-bound-pointwise-envelope}
\end{equation}
For the last inequality, if $v_t\le1$, use
$c_t\le4\beta_t v_t$.
If $v_t>1$, use $c_t\le B\le2\bar\beta_t$.

The matrix determinant lemma gives
\[
    \det(V_{t+1})
    =
    \det(V_t)(1+v_t^2).
\]
Since $\min\{1,u\}\le2\log(1+u)$ for every $u\ge0$,
\begin{align}
    \sum_{t=1}^T\min\{1,v_t^2\}
    &\le
    2\sum_{t=1}^T\log(1+v_t^2)\notag\\
    &=
    2\log\frac{\det(V_{T+1})}{\det(V_1)}\notag\\
    &\le
    2d\log\left(
        1+\frac{4L^2T}{\lambda d}
    \right).
    \label{eq:baseline-bound-potential}
\end{align}
The final inequality follows from the
arithmetic--geometric mean inequality applied to the
eigenvalues of $V_{T+1}$, together with
\[
    \operatorname{tr}(V_{T+1})
    =
    \lambda d+\sum_{t=1}^T\|\z_t\|_2^2
    \le \lambda d+4L^2T.
\]

Because $\bar\beta_t$ is nondecreasing,
Cauchy--Schwarz and
$\min\{1,v\}^2=\min\{1,v^2\}$ yield
\begin{align}
    \sum_{t=1}^T c_t
    &\le
    4\bar\beta_T
    \sum_{t=1}^T\min\{1,v_t\}\notag\\
    &\le
    4\bar\beta_T
    \sqrt{
        T\sum_{t=1}^T\min\{1,v_t^2\}
    }\notag\\
    &\le
    4\bar\beta_T
    \sqrt{
        2dT\log\left(
            1+\frac{4L^2T}{\lambda d}
        \right)
    }
    =
    G(T).
    \label{eq:baseline-bound-cumulative-envelope}
\end{align}
This bound includes $c_t$ on fallback rounds, even though
it is the certificate of the rejected candidate.
The width-transfer inequality controls that certificate
through the comparison vector actually displayed.

\paragraph{Safety.}
Define the true cumulative surplus by
\begin{equation}
    \Sigma_t:=
    \sum_{i=1}^t
    \left[
        \max\{s_i(\x_i),s_i(\y_i)\}
        -(1-\alpha)s_i(\x_{b_i})
    \right],
    \qquad
    \Sigma_0=0.
    \label{eq:baseline-bound-true-surplus}
\end{equation}
We show that
\begin{equation}
    \Sigma_t\ge\Lambda_t\ge0
    \qquad
    \text{for every }t\ge0.
    \label{eq:baseline-bound-invariant}
\end{equation}

On a candidate round, Lemmas~\ref{lem:certificate}
and~\ref{lem:score-certificate} imply
\[
    \max\{s_t(\x_t),s_t(\y_t)\}
    \ge\frac12-\frac{c_t}{4}.
\]
Since $s_t(\x_{b_t})\le U_t$ on $\cE_b$,
\begin{align*}
    \Sigma_t-\Sigma_{t-1}
    &\ge
    \frac12-\frac{c_t}{4}
    -(1-\alpha)s_t(\x_{b_t})\\
    &\ge
    \frac12-\frac{c_t}{4}
    -(1-\alpha)U_t\\
    &=q_t
    =\Lambda_t-\Lambda_{t-1}.
\end{align*}
The acceptance test ensures $\Lambda_t\ge0$.

On a fallback round, the displayed pair contains
$\x_{b_t}$, so
\begin{align*}
    \Sigma_t-\Sigma_{t-1}
    &\ge
    s_t(\x_{b_t})-(1-\alpha)s_t(\x_{b_t})\\
    &=\alpha s_t(\x_{b_t})\\
    &\ge\alpha L_t
    =\Lambda_t-\Lambda_{t-1}.
\end{align*}
Moreover, $L_t\ge\underline{s}>0$, so a fallback
preserves nonnegativity of the budget.
Starting from $\Sigma_0=\Lambda_0=0$, induction proves
\eqref{eq:baseline-bound-invariant}.
Expanding $\Sigma_T\ge0$ gives
\eqref{eq:baseline-bound-safety}.

\paragraph{Fallback count.}
Expanding the modified budget gives
\begin{equation}
    \Lambda_t
    =
    \sum_{i\in\cA_t}
    \left[
        \frac12-\frac{c_i}{4}-(1-\alpha)U_i
    \right]
    +
    \sum_{i\in\cB_t}\alpha L_i.
    \label{eq:baseline-bound-budget-expansion}
\end{equation}
Let $\tau$ be a fallback round.
By the rejection condition,
\[
    \Lambda_{\tau-1}
    +\frac12-\frac{c_\tau}{4}
    -(1-\alpha)U_\tau
    <0.
\]
Substituting
\eqref{eq:baseline-bound-budget-expansion}
and rearranging yields
\begin{align}
    &\sum_{i\in\cA_{\tau-1}}
    \left[
        \frac12-(1-\alpha)U_i
    \right]
    +
    \sum_{i\in\cB_{\tau-1}}\alpha L_i
    +
    \frac12-(1-\alpha)U_\tau
    \notag\\
    &\hspace{2em}<
    \frac14
    \left(
        \sum_{i\in\cA_{\tau-1}}c_i+c_\tau
    \right).
    \label{eq:baseline-bound-rejection}
\end{align}
The endpoint restrictions
\eqref{eq:baseline-bound-range} imply
\[
    \frac12-(1-\alpha)U_i
    \ge\frac{\alpha}{2},
    \qquad
    \alpha L_i\ge\alpha\underline{s}.
\]
Since all certificates are nonnegative,
\eqref{eq:baseline-bound-cumulative-envelope} gives
\[
    \sum_{i\in\cA_{\tau-1}}c_i+c_\tau
    \le
    \sum_{i=1}^{\tau}c_i
    \le G(\tau).
\]
Therefore,
\begin{equation}
    \frac{\alpha}{2}M_{\tau-1}
    +\alpha\underline{s}N_{\tau-1}
    +\frac{\alpha}{2}
    <
    \frac14G(\tau).
    \label{eq:baseline-bound-count-intermediate}
\end{equation}
Using $\underline{s}\le\tfrac12$ and
$M_{\tau-1}+N_{\tau-1}=\tau-1$, the left-hand side
is at least $\alpha\underline{s}\tau$.
Hence every fallback time satisfies
\[
    4\alpha\underline{s}\tau<G(\tau).
\]
By the definition of $\Ncons$ in~\eqref{eq:envelope},
every fallback time is at most $\Ncons$.
Because fallback times are distinct positive integers,
\[
    N_T\le\min\{T,\Ncons\}.
\]

For completeness, $\Ncons$ is finite.
For fixed problem parameters and $\delta$,
\eqref{eq:beta} gives
$\bar\beta_t=O(\sqrt{\log(et)})$, and consequently
\[
    G(t)=O\big(\sqrt{t}\log(et)\big)=o(t).
\]
Since $\alpha\underline{s}>0$, the inequality
$4\alpha\underline{s}t<G(t)$ can hold only for
bounded $t$.

\paragraph{Regret.}
On a candidate round,
Lemma~\ref{lem:certificate} gives $R_t\le c_t$.
On a fallback round,
$\y_t^{\mathrm{fb}}\in\cC_t$, so
\eqref{eq:baseline-bound-item-certificate} and
Assumption~\ref{asmp:gapBounds} give
\begin{align*}
    R_t
    &=
    \frac12
    \left[
        f(\x_t^\star)-f(\x_{b_t})
        +
        f(\x_t^\star)-f(\y_t^{\mathrm{fb}})
    \right]\\
    &\le\frac12(\Delta_h+c_t).
\end{align*}
It follows that
\begin{align*}
    \Reg(T)
    &\le
    \sum_{t\in\cA_T}c_t
    +
    \frac12\sum_{t\in\cB_T}c_t
    +
    \frac{\Delta_h}{2}N_T\\
    &\le
    \sum_{t=1}^T c_t
    +
    \frac{\Delta_h}{2}N_T\\
    &\le
    G(T)+\frac{\Delta_h}{2}\Ncons.
\end{align*}
All conclusions hold simultaneously for every $T$
on $\cE\cap\cE_b$.
Equation~\eqref{eq:baseline-bound-joint-event}
therefore proves the stated probability guarantee.
\end{proof}

\paragraph{A variant requiring no baseline-score estimates.}
Assumption~\ref{asmp:gapBounds} and the definition of
the preference score give the deterministic bounds
\[
    \underline{s}\le s_t(\x_{b_t})\le\tfrac12.
\]
Thus one may set
\begin{equation}
    L_t=\underline{s},
    \qquad
    U_t=\tfrac12
    \qquad
    \text{for every }t.
    \label{eq:baseline-bound-deterministic-choice}
\end{equation}
In this case, the candidate is accepted precisely when
\[
    \Lambda_{t-1}
    +\frac{\alpha}{2}-\frac{c_t}{4}
    \ge0,
\]
and the budget increments are
\begin{equation}
    \Lambda_t-\Lambda_{t-1}
    =
    \begin{cases}
        \displaystyle
        \frac{\alpha}{2}-\frac{c_t}{4},
        & t\in\cA_T,\\[5pt]
        \alpha\underline{s},
        & t\in\cB_T,
    \end{cases}
    \qquad t\le T.
    \label{eq:baseline-bound-no-estimation-budget}
\end{equation}
Here $\cE_b$ holds surely, so $\delta_b=0$ and
Theorem~\ref{thm:baseline-score-bounds} gives the same
exact bounds and confidence level as
Theorem~\ref{thm:linear}.
The learner needs the baseline item and the known
gap bound $\Delta_h$, but no exact or estimated
baseline score.

If the lower gap bound $\Delta_l$ is available,
the tighter deterministic upper bound
$U_t=\sigma(-\Delta_l)$ may instead be used.
It satisfies $U_t\le\tfrac12$ and therefore preserves
the same guarantees.

\paragraph{Statistical bounds and confidence accounting.}
Statistical intervals must cover the actual
context-dependent scores $s_t(\x_{b_t})$
simultaneously over time.
A confidence interval for a marginal average win rate
does not by itself establish
\eqref{eq:baseline-coverage}.

For example, suppose a procedure produces intervals
$[\widetilde L_t,\widetilde U_t]$ before the
round-$t$ decision and guarantees, under the adaptive
interaction protocol,
\[
    \P\left(
        s_t(\x_{b_t})
        \notin[\widetilde L_t,\widetilde U_t]
    \right)
    \le\delta_{b,t},
    \qquad
    \sum_{t=1}^{\infty}\delta_{b,t}\le\delta_b.
\]
The union bound then gives simultaneous coverage
with probability at least $1-\delta_b$.
One possible allocation is
\[
    \delta_{b,t}
    =
    \frac{6\delta_b}{\pi^2t^2}.
\]
Alternatively, a procedure may provide a simultaneous
confidence sequence directly.

To enforce~\eqref{eq:baseline-bound-range}, intersect
each supplied interval with
$[\underline{s},\tfrac12]$.
If the intersection is empty, use
$[\underline{s},\tfrac12]$ instead.
Whenever the supplied interval contains the true score,
its intersection with this known range is nonempty
and still contains that score.
Thus this operation preserves the simultaneous
coverage guarantee while ensuring that the algorithm
is well-defined on every realization.

The model-confidence and baseline-coverage failure
probabilities add:
\[
    \P(\cE\cap\cE_b)
    \ge1-\delta-\delta_b.
\]
For a desired total failure probability
$\delta_{\mathrm{tot}}\in(0,1)$, one may use
$\delta=\delta_b=\delta_{\mathrm{tot}}/2$.
In that case, $\beta_t$, $G$, and $\Ncons$ are evaluated
using the model-confidence parameter
$\delta_{\mathrm{tot}}/2$.
For the deterministic choice
\eqref{eq:baseline-bound-deterministic-choice},
no split is necessary: take
$\delta=\delta_{\mathrm{tot}}$ and $\delta_b=0$.

\section{Analysis for General Utility Functions}
\label{app:general-proofs}

This appendix proves Theorem~\ref{thm:general}, constructs a finite-class oracle, and treats baseline-score bounds. Throughout, $B>0$, $K\ge2$, and $\kappa=\sigma(B)(1-\sigma(B))$. All contexts and baseline information are available before pair sampling, and the BTL law is conditioned on all information available before the comparison outcome. The analysis allows arbitrary predictable context sets, and therefore includes the i.i.d. model of the main paper. No absolute baseline utility is used.

\subsection{Adapting C-SquareCB to preference feedback}
\label{app:gf-csquarecb}

This subsection explains the changes to certification and fallback sampling relative to C-SquareCB \citep{deb2025conservative}. The concentration bounds and proof of Theorem~\ref{thm:general} are given in the subsequent subsections.

\textbf{The quantity controlled by the oracle changes.} C-SquareCB observes noisy cardinal costs and uses a square-loss oracle to calibrate cumulative predictions of those costs; its safety test adds a correction for their cumulative estimation error. In our setting, the conditional mean of the label is $\sigma(f(\x_t)-f(\y_t))$. Accordingly, the logistic oracle controls the margin error
\begin{align}
\varepsilon_t(\x_t,\y_t)
=\big[\hat f_t(\x_t)-\hat f_t(\y_t)\big]-\big[f(\x_t)-f(\y_t)\big],
\label{eq:gf-comparison-error}
\end{align}
as established in Appendix~\ref{app:gf-concentration}. This does not control individual utility errors: adding the same constant to every prediction leaves both the logistic loss and \eqref{eq:gf-comparison-error} unchanged. Thus C-SquareCB's cardinal calibration cannot be invoked with these predictions. Scores themselves are shift invariant, however, so this observation is not an impossibility of score certification.

\textbf{Pairwise fit alone does not calibrate plug-in scores.} A separate example shows why simply substituting estimated scores into a cardinal test also requires an additional argument. For any utility function $g$, define its plug-in score on a context set $\cS$ by
\[
s^g(\x):=\sigma\!\left(g(\x)-\max_{\y\in\cS}g(\y)\right).
\]
Consider a realized context set containing four distinct items $\cS=\{\x^{(1)},\x^{(2)},\x_b,\x^\star\}$ and two utility functions with values
\[
\begin{array}{c|cccc}
 & \x^{(1)} & \x^{(2)} & \x_b & \x^\star\\ \hline
f & 0 & 0 & \log 2 & \log 3\\
g & \log 2 & \log 2 & \log 2 & \log 3
\end{array}
\]
Both have utility range at most $B=\log 3$, the same best item, and the same baseline score $s^f(\x_b)=s^g(\x_b)=2/5$. On the displayed pair $(\x^{(1)},\x^{(2)})$, both margins are zero. Hence the predicted and true comparison probabilities coincide, the margin error vanishes, and the two logistic losses agree for either outcome. Nevertheless,
\begin{align}
\max_{j\in\{1,2\}}s^f(\x^{(j)})=\frac14,
\qquad
\max_{j\in\{1,2\}}s^g(\x^{(j)})=\frac25.
\label{eq:gf-score-calibration-example}
\end{align}
For $\alpha=1/4$, the required baseline fraction is $3/10$, so the plug-in score exceeds this threshold while the true pair score falls below it. Repeating this pair $n$ times leaves the observed margin errors zero but gives a cumulative score overestimate of $3n/20$. This is an algebraic failure of calibration from observed margin errors alone, conditional on the displayed pairs; it is not a regret lower bound for an IGW algorithm or a claim about every possible comparison. Indeed, comparing $\x^{(1)}$ against the baseline distinguishes the models: its win probabilities are $1/3$ under $f$ and $1/2$ under $g$. The example shows why a score guarantee must also account for which comparisons the learner collects.

\textbf{The replacement couples certification to the sampling rule.} Algorithm~\ref{alg:gf-igw} uses $p_t\otimes p_t$ on candidate rounds and a baseline paired with one draw from $p_t$ on fallback rounds. The branch is selected before sampling, so these are the conditional sampling laws. Let $\mathbb E_t$ condition on all information before pair sampling, including the branch decision, and put $v_t:=\mathbb E_t[\varepsilon_t(\x_t,\y_t)^2]$. For $r_t^\circ$ defined in \eqref{eq:gf-auxiliary}, Lemma~\ref{lem:gf-anchor} gives on both branches
\begin{align}
\mathbb E_t[r_t^\circ]\le\frac{2K}{\gamma_t}+\frac{\gamma_t}{4}v_t.
\label{eq:gf-comparison-transfer}
\end{align}
On candidates, the fixed-partner inequality is averaged over an independent IGW partner; on fallbacks, the partner is fixed to the baseline. In both cases its error term is that of the comparison actually observed. The centering argument underlying this step appears in the proof of Lemma~12 of \citet{sekhari2023aurora}; Appendix~\ref{app:gf-anchor} verifies the constants used here. Our construction applies it to both conservative branches and updates the oracle with both types of comparison. A fallback may still be a self-duel; informative feedback is not guaranteed on every individual round.

\textbf{Cumulative regret replaces cardinal calibration.} Lemma~\ref{lem:gf-epoch} and Appendix~\ref{app:gf-anytime} combine \eqref{eq:gf-comparison-transfer} with the oracle guarantee to establish the envelope \eqref{eq:gf-global-auxiliary}. The score inequality \eqref{eq:gf-score-lower} then yields
\begin{align}
\sum_{i=1}^t\left[\max\{s_i(\x_i),s_i(\y_i)\}-(1-\alpha)s_i^b\right]
\ge Q_t-\frac14\sum_{i\in\mathcal A_t}R_i,
\label{eq:gf-comparison-budget}
\end{align}
where $\mathcal A_t$ contains the candidate rounds and $Q_t$ is the credit total in \eqref{eq:gf-credit-identity}. This explains the charge $G_{\mathrm{IGW}}(t)/4$ when considering a candidate. Including the baseline makes each fallback safe directly, while \eqref{eq:gf-comparison-transfer} also controls its sampled partner's gap. Appendix~\ref{app:gf-safety} proves prefix safety, bounds the fallback count, and derives the regret guarantee. These are changes to the certificate and its connection to observed feedback; they do not assert that a regret certificate is the only possible way to enforce the score constraint.

\subsection{The fixed-partner IGW inequality}
\label{app:gf-anchor}

We first establish an elementary IGW inequality, including its constants.

\begin{lemm}[\textbf{Single-action IGW inequality}]
\label{lem:gf-single}
Let $v,\hat v\in\mathbb R^K$, $a^\star\in\argmax_a v_a$, $\hat a\in\argmax_a\hat v_a$, and $\gamma>0$. Define $p_a=(K+\gamma(\hat v_{\hat a}-\hat v_a))^{-1}$ for $a\ne\hat a$ and put the remaining mass on $\hat a$. Then $p$ is a probability distribution, $p_{\hat a}\ge1/K$, and
\begin{align}
\sum_a p_a(v_{a^\star}-v_a)
\le\frac{2K}{\gamma}+\frac{\gamma}{4}\sum_a p_a(\hat v_a-v_a)^2.
\label{eq:gf-single}
\end{align}
\end{lemm}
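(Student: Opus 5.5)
We follow the standard inverse-gap-weighting argument of Foster and Rakhlin, tracking constants.

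\emph{Validity of $p$.} For $a\ne\hat a$ we have $\hat v_{\hat a}-\hat v_a\ge0$, so $p_a\le1/K$. Summing over the $K-1$ such actions gives total mass at most $(K-1)/K$, hence $p_{\hat a}\ge1/K$ and $p$ is a distribution.

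\emph{Decomposition.} Write $\hat\Delta_a:=\hat v_{\hat a}-\hat v_a\ge0$ and $e_a:=\hat v_a-v_a$. We split
\[
v_{a^\star}-v_a=(v_{a^\star}-\hat v_{a^\star})+(\hat v_{a^\star}-\hat v_{\hat a})+\hat\Delta_a+(\hat v_a-v_a).
\]
Averaging over $p$ and using $\hat v_{a^\star}-\hat v_{\hat a}=-\hat\Delta_{a^\star}$, we get
\[
\sum_a p_a(v_{a^\star}-v_a)=\sum_a p_a\hat\Delta_a-\hat\Delta_{a^\star}-e_{a^\star}+\sum_a p_a e_a .
\]

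\emph{Bounding the four terms.}
\begin{enumerate}
\item For $a\ne\hat a$, $p_a\hat\Delta_a=\hat\Delta_a/(K+\gamma\hat\Delta_a)\le1/\gamma$, and $\hat\Delta_{\hat a}=0$. Hence $\sum_a p_a\hat\Delta_a\le (K-1)/\gamma$.
\item By AM--GM, $\sum_a p_ae_a\le \frac{\gamma}{4}\sum_ap_ae_a^2+\frac1\gamma$.
\item For the remaining terms, AM--GM with weight $p_{a^\star}$ gives $-e_{a^\star}\le \frac{\gamma p_{a^\star}}{4}e_{a^\star}^2+\frac{1}{\gamma p_{a^\star}}$.
\end{enumerate}
The first piece of item 3 is absorbed into $\frac\gamma4\sum_a p_ae_a^2$ only after adjusting the constant. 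To avoid doubling the quadratic coefficient, we instead balance it against the negative term $-\hat\Delta_{a^\star}$. We use
\[
\frac{1}{\gamma p_{a^\star}}=\frac{K+\gamma\hat\Delta_{a^\star}}{\gamma}=\frac K\gamma+\hat\Delta_{a^\star}
\]
when $a^\star\ne\hat a$, and $1/(\gamma p_{\hat a})\le K/\gamma$ otherwise. This gives $-\hat\Delta_{a^\star}-e_{a^\star}\le K/\gamma+\frac{\gamma p_{a^\star}}{4}e_{a^\star}^2$.

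\emph{Main obstacle.} With item 2 as written, the total quadratic coefficient would be $\gamma/2$, while the statement asks for $\gamma/4$. The fix is to not split $\sum_a p_ae_a$ and $-e_{a^\star}$ separately. Instead we bound $\sum_a p_a e_a-e_{a^\star}=\sum_{a\ne a^\star}p_a e_a-(1-p_{a^\star})e_{a^\star}$ jointly, using AM--GM with weights $p_a$ for $a\ne a^\star$ and the identity above for $a^\star$. Alternatively, we apply AM--GM as $xy\le \frac{\gamma}{4}x^2+\frac{1}{\gamma}y^2$ termwise so that each $p_ae_a^2$ appears once with coefficient $\gamma/4$. The constant then collects to $(K-1)/\gamma+K/\gamma+O(1/\gamma)$. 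Verifying that the leftover constants fit under $2K/\gamma$, using $K\ge2$ and the slack $-1/\gamma$ from item 1, is the delicate bookkeeping step. If that bookkeeping fails, I would fall back to the cleaner Foster--Rakhlin per-action inequality
\[
p_a\bigl(\hat\Delta_a-\tfrac\gamma4 e_a^2\bigr)\lesssim \tfrac1\gamma,
\]
combined with the $a^\star$ identity, which is exactly what produces the stated $2K/\gamma$.
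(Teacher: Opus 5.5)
Your decomposition, the bound $\sum_a p_a\hat\Delta_a\le (K-1)/\gamma$, the identity $1/(\gamma p_{a^\star})=K/\gamma+\hat\Delta_{a^\star}$ (with $p_{\hat a}\ge 1/K$ when $a^\star=\hat a$), and the fix you name under ``Main obstacle'' are exactly the ingredients of the paper's proof. That fix is to treat $\sum_a p_ae_a-e_{a^\star}$ jointly. The gap is that you stop just before the one computation that carries the lemma. The statement is precisely about the constants $2K/\gamma$ and $\gamma/4$, so leaving that step as ``delicate bookkeeping'' with a contingency plan means the inequality is not yet proved.

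The step is in fact immediate. Put $p_\star:=p_{a^\star}>0$. Use $e-\frac{\gamma}{4}e^2\le\frac1\gamma$ for every $a\ne a^\star$, and complete the square in the $a^\star$ coordinate. There the linear coefficient is $-(1-p_\star)$ and the available quadratic budget is $\frac{\gamma}{4}p_\star$:
\begin{align*}
\sum_a p_ae_a-e_{a^\star}-\frac{\gamma}{4}\sum_ap_ae_a^2
&=\sum_{a\ne a^\star}p_a\Big(e_a-\frac{\gamma}{4}e_a^2\Big)-(1-p_\star)e_{a^\star}-\frac{\gamma p_\star}{4}e_{a^\star}^2\\
&\le\frac{1-p_\star}{\gamma}+\frac{(1-p_\star)^2}{\gamma p_\star}=\frac{1-p_\star}{\gamma p_\star}\le\frac{1}{\gamma p_\star}.
\end{align*}
Add $\sum_a p_a\hat\Delta_a-\hat\Delta_{a^\star}\le\frac{K-1}{\gamma}-\hat\Delta_{a^\star}$. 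Your identity gives $-\hat\Delta_{a^\star}+\frac{1}{\gamma p_\star}\le\frac{K}{\gamma}$, so the total is $\frac{2K-1}{\gamma}\le\frac{2K}{\gamma}$. Neither $K\ge2$ nor any extra slack is needed. Even the crude bounds $\frac{1-p_\star}{\gamma}\le\frac1\gamma$ and $\frac{(1-p_\star)^2}{\gamma p_\star}\le\frac1{\gamma p_\star}$ land exactly on $2K/\gamma$.

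Drop the fallback. Your displayed per-action inequality $p_a(\hat\Delta_a-\frac{\gamma}{4}e_a^2)=O(1/\gamma)$ follows trivially from $p_a\hat\Delta_a\le 1/\gamma$ and says nothing about the linear error terms. In particular it does not touch $-e_{a^\star}$. That term enters with coefficient $1$ rather than $p_{a^\star}$, against a quadratic budget of only $\frac{\gamma}{4}p_{a^\star}$, and this is the entire difficulty. An order-of-magnitude bound would also lose the explicit constants the lemma asserts. With the joint completion of the square written out, your argument coincides with the paper's.
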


\begin{proof}
Every action other than $\hat a$ has probability in $(0,1/K]$, so the residual probability is at least $1/K$. Let $e_a=\hat v_a-v_a$, $u_a=\hat v_{\hat a}-\hat v_a\ge0$, $\eta=\gamma/4$, and $p_\star=p_{a^\star}>0$. Subtracting the quadratic term in \eqref{eq:gf-single}, and using $e-\eta e^2\le1/(4\eta)$ for actions other than $a^\star$, gives
\begin{align}
D&:=\sum_a p_a(v_{a^\star}-v_a)-\eta\sum_a p_a e_a^2
\nonumber\\
&\le\sum_{a\ne a^\star}p_a(\hat v_{a^\star}-\hat v_a)
-(1-p_\star)e_{a^\star}-\eta p_\star e_{a^\star}^2
+\frac{1-p_\star}{4\eta}.
\label{eq:gf-single-square}
\end{align}
Completing the square yields $-(1-p_\star)e-\eta p_\star e^2\le(1-p_\star)^2/(4\eta p_\star)$. Hence
\begin{align*}
D
&\le\sum_a p_a u_a-u_{a^\star}+\frac{1-p_\star}{\gamma p_\star}
\le\sum_a p_a u_a-u_{a^\star}+\frac{1}{\gamma p_\star}.
\end{align*}
Also,
\[
\sum_a p_a u_a=\sum_{a\ne\hat a}\frac{u_a}{K+\gamma u_a}\le\frac{K-1}{\gamma}.
\]
If $a^\star=\hat a$, then $u_{a^\star}=0$ and $p_\star\ge1/K$. Otherwise, $p_\star^{-1}=K+\gamma u_{a^\star}$. In either case, $-u_{a^\star}+1/(\gamma p_\star)\le K/\gamma$. Thus $D\le(2K-1)/\gamma\le2K/\gamma$, proving the claim. The argument requires no restriction on the absolute levels of $v$ or $\hat v$.
\end{proof}

\begin{proof}[Proof of Lemma~\ref{lem:gf-anchor}]
Fix $\mathbf u\in\cS_t$. Apply Lemma~\ref{lem:gf-single} with $v_a=f(\x_{t,a})-f(\mathbf u)$ and $\hat v_a=\hat f_t(\x_{t,a})-\hat f_t(\mathbf u)$. Subtracting the fixed partner's value leaves both the estimated maximizer and every estimated gap unchanged, so the resulting distribution is exactly $p_t$. The true regret is $v_{a^\star}-v_a=\Delta_t(\x_{t,a})$, and $\hat v_a-v_a=\varepsilon_t(\x_{t,a},\mathbf u)$. Equation~\eqref{eq:gf-single} gives the bound in the lemma.
\end{proof}

\subsection{From logistic regret to predictable margin error}
\label{app:gf-concentration}

Fix an epoch of length $n$ and failure budget $\delta'\in(0,1)$, and reindex its rounds by $m=1,\ldots,n$. The oracle is fresh at the start of this epoch. All probability bounds below also hold conditional on the history before the epoch. Let $\mathbb E_m$ denote conditional expectation given all information immediately before sampling the current pair, including the current context, baseline information, oracle prediction, and candidate/fallback decision. The label law, after conditioning additionally on the sampled pair, is BTL. Define
\begin{align}
\Lambda&=\log\frac{3n}{\delta'},
& E&=\frac{8\mathcal R_{\log}(n)}{\kappa}+\frac{5\Lambda}{\kappa^2},
\label{eq:gf-local-parameters}\\
d_m&=f(\x_m)-f(\y_m),
&\hat d_m&=\hat f_m(\x_m)-\hat f_m(\y_m),
\nonumber\\
e_m&=\hat d_m-d_m,
&X_m&=e_m^2,
\qquad v_m=\mathbb E_m[X_m].
\label{eq:gf-local-errors}
\end{align}
The true and predicted margins lie in $[-B,B]$, so $0\le X_m\le4B^2$. A useful elementary bound is
\begin{align}
\kappa^{-1}=2+2\cosh B\ge4+B^2\ge4B,
\qquad B\kappa\le\tfrac14.
\label{eq:gf-Bkappa}
\end{align}

\begin{lemm}[\textbf{Uniform predictable error bound}]
\label{lem:gf-errors}
For any predictable comparison-sampling rule, with probability at least $1-2\delta'/3$, simultaneously for every $j\le n$,
\begin{align}
\sum_{m=1}^j v_m\le E.
\label{eq:gf-error-budget}
\end{align}
\end{lemm}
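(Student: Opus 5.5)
The proof converts the oracle's logistic regret into a bound on conditional squared margin errors; two martingale concentration steps supply the high probability. Write $\ell_m(g)$ for the logistic loss of a utility function $g$ on the sampled pair, and let $\Phi_m(\hat d_m)=\mathbb E[\ell_m(\hat f_m)-\ell_m(f)\mid \text{pair}]$ be the conditional excess loss given the pair. It equals the Bernoulli KL divergence $\mathrm{KL}(\sigma(d_m)\,\|\,\sigma(\hat d_m))$. Since both margins lie in $[-B,B]$, the logistic loss is $\kappa$-strongly convex in the margin there, which gives
\[
\Phi_m\ge \tfrac{\kappa}{2}X_m .
\]
Define the excess-loss increments $Z_m:=\ell_m(\hat f_m)-\ell_m(f)$. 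By realizability $f\in\mathcal F$, so \eqref{eq:gf-oracle} yields $\sum_{m\le j}Z_m\le\mathcal R_{\log}(j)\le\mathcal R_{\log}(n)$ for every $j$, deterministically.

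\textbf{Step 1 (labels).} The centred differences $\xi_m:=\Phi_m-Z_m=(o_m-\sigma(d_m))(\hat d_m-d_m)$ form a martingale difference sequence. The first factor has conditional variance at most $\sigma'(d_m)\le 1/4$, and the second factor is $e_m$. I would apply Freedman's inequality, with a union bound over $j\le n$ (or a Ville-type stopping argument), at confidence $\delta'/3$. This gives
\[
\sum_{m\le j}\xi_m\le \sqrt{2\textstyle\sum_{m\le j} X_m\,\Lambda/4}+\tfrac{2B}{3}\Lambda .
\]
Combining this with Step 0 gives $\tfrac{\kappa}{2}\sum_{m\le j} X_m\le \mathcal R_{\log}(n)+\sqrt{\Lambda\sum_{m\le j} X_m/2}+B\Lambda$. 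Solving the resulting quadratic inequality in $\sqrt{\sum X}$ gives
\[
\sum_{m\le j}X_m\le \frac{4\mathcal R_{\log}(n)}{\kappa}+O\!\Big(\frac{\Lambda}{\kappa^2}+\frac{B\Lambda}{\kappa}\Big).
\]
By \eqref{eq:gf-Bkappa}, $B/\kappa\le 1/(4\kappa^2)$, so the last term is absorbed into the $\Lambda/\kappa^2$ term.

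\textbf{Step 2 (pair sampling).} I next pass from realized to predictable errors. The terms $X_m\in[0,4B^2]$ have conditional mean $v_m$ given $\mathbb E_m$. A standard multiplicative Freedman bound for nonnegative bounded variables, $\sum v_m\le 2\sum X_m+c\,B^2\Lambda$, applied at confidence $\delta'/3$ uniformly in $j$, gives the transfer. The same bound \eqref{eq:gf-Bkappa} yields $B^2\le 1/(16\kappa^2)$, so the extra term is again of order $\Lambda/\kappa^2$. A union bound over the two steps gives failure probability $2\delta'/3$. This leaves $\delta'/3$ unused, presumably spent later on the regret martingale.

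\textbf{Main obstacle.} The substance is routine; the difficulty is bookkeeping the constants so that they land exactly on $E=8\mathcal R_{\log}(n)/\kappa+5\Lambda/\kappa^2$. Each step may cost a factor of $2$ on the $\mathcal R_{\log}(n)/\kappa$ term, which is how $4$ becomes $8$. The additive terms must sum to at most $5\Lambda/\kappa^2$. To secure this, I would use the Young-type splitting $\sqrt{ab}\le a/4+b$ with constants tuned to these targets, and use \eqref{eq:gf-Bkappa} aggressively to absorb every $B$ and $B^2$ factor.
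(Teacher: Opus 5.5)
Your decomposition and two-step structure match the paper's proof. The conditional excess loss is the Bregman (KL) term, bounded below by $\frac{\kappa}{2}X_m$. Realizability then gives $\frac{\kappa}{2}\sum_{m\le j}X_m\le\mathcal R_{\log}(n)+\sum_{m\le j}\xi_m$. The label noise is absorbed into the left side, and a multiplicative inequality passes from $\sum X_m$ to $\sum v_m$.

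The gap is the step you defer, and with the tools you name it does not close. Keeping the coefficient $8/\kappa$ under a factor-$2$ transfer forces you to absorb at most $t\sum X_m$ with $t\le\kappa/4$. Your Freedman/Bernstein bound, in its valid fixed-parameter form, gives $\sum_{m\le j}\xi_m\le t\sum_{m\le j}X_m+\frac{\Lambda}{8t}+\frac{2B\Lambda}{3}$. Your route therefore yields an additive term $2\,\frac{1/(8t)+2B/3}{\kappa/2-t}\,\Lambda$ in $\sum v_m$, before the Step~2 term is even added. Using $B\le1/(4\kappa)$ as you propose, its minimum over $t$ is about $5.25\,\Lambda/\kappa^2$. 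Even at the true maximum $B\kappa\approx0.224$ it exceeds $5.1\,\Lambda/\kappa^2$. So no retuning of the Young splitting reaches $E$.

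Separately, Freedman's inequality with the random variance $\sum_{m\le j}\sigma'(d_m)X_m$ does not give the $\sqrt{2V_j\Lambda}$ form at level $e^{-\Lambda}$ without a peeling argument, which would inflate $\Lambda$. Since you only use it after linearizing, you should invoke the fixed-parameter exponential supermartingale directly.

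The missing observation is that the range term is unnecessary. Conditionally on the pair, $\xi_m$ is $e_m$ times a centred Bernoulli variable. Hoeffding's lemma therefore makes $\exp\bigl(a\sum_{m\le j}\xi_m-\frac{a^2}{8}\sum_{m\le j}X_m\bigr)$ a nonnegative supermartingale. Ville's inequality with the fixed choice $a=2\kappa$ gives $\sum_{m\le j}\xi_m\le\frac{\kappa}{4}\sum_{m\le j}X_m+\frac{\Lambda}{2\kappa}$ for all $j$ simultaneously. Hence $\sum_{m\le j}X_m\le\frac{4\mathcal R_{\log}(n)}{\kappa}+\frac{2\Lambda}{\kappa^2}$.

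Step~2 must then supply $\sum v_m\le2\sum X_m+cB^2\Lambda$ with $cB^2\le1/\kappa^2$, which holds whenever $c\le16$. You must check this for whatever multiplicative inequality you cite. The paper obtains it from the chord bound $e^{-x/(4B^2)}\le1-\frac{(1-e^{-1})x}{4B^2}$ on $[0,4B^2]$. This makes $\exp\bigl(\frac{1-e^{-1}}{4B^2}\sum_{m\le j}v_m-\frac{1}{4B^2}\sum_{m\le j}X_m\bigr)$ a supermartingale. Ville's inequality then gives $c=4/(1-e^{-1})<7$, and so $\frac{4\Lambda}{\kappa^2}+7B^2\Lambda\le\frac{5\Lambda}{\kappa^2}$.

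Your Bernstein version could also be rescued by keeping the sharp transfer factor $1/(1-e^{-1})$ instead of $2$ and optimizing $t$. It cannot be rescued with the factor $2$ you state.
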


\begin{proof}
Let $\Phi(d)=\log(1+e^d)$, $\pi_m=\sigma(d_m)$, and $\xi_m=e_m(o_m-\pi_m)$. The logistic loss difference decomposes as
\begin{align}
\ell_m(\hat f_m)-\ell_m(f)
&=D_m-\xi_m,
& D_m&:=\Phi(\hat d_m)-\Phi(d_m)-\Phi'(d_m)e_m.
\label{eq:gf-loss-decomposition}
\end{align}
Since the segment between $d_m$ and $\hat d_m$ lies in $[-B,B]$ and $\Phi''=\sigma'\ge\kappa$ there,
\[
D_m=e_m^2\int_0^1(1-u)\Phi''(d_m+ue_m)\,du
\ge\frac{\kappa}{2}X_m.
\]
Realizability and the prefix oracle guarantee therefore imply, for every $j\le n$,
\begin{align}
\frac{\kappa}{2}\sum_{m=1}^jX_m
\le\mathcal R_{\log}(n)+\sum_{m=1}^j\xi_m.
\label{eq:gf-curvature-oracle}
\end{align}

\emph{Label noise.} Conditional on the information before the label, $o_m$ is Bernoulli with mean $\pi_m$, and the two possible values of $\xi_m$ lie in an interval of length $|e_m|$. Conditional Hoeffding's lemma gives
\[
\mathbb E\!\left[\left.
\exp\!\left(a\xi_m-\frac{a^2X_m}{8}\right)
\right|\text{information before the label}\right]\le1
\qquad(a\in\mathbb R).
\]
Taking a further conditional expectation before pair sampling shows that $\exp(a\sum_{m\le j}\xi_m-a^2\sum_{m\le j}X_m/8)$ is a nonnegative supermartingale starting at one. Ville's inequality with $a=2\kappa$ implies, outside an event of probability at most $e^{-\Lambda}$, simultaneously for every $j\le n$,
\[
\sum_{m=1}^j\xi_m
\le\frac{\kappa}{4}\sum_{m=1}^jX_m+\frac{\Lambda}{2\kappa}.
\]
Combining this with \eqref{eq:gf-curvature-oracle} gives
\begin{align}
\sum_{m=1}^jX_m
\le\frac{4\mathcal R_{\log}(n)}{\kappa}
+\frac{2\Lambda}{\kappa^2}.
\label{eq:gf-sampled-error}
\end{align}

\emph{Pair sampling.} Put $c=(1-e^{-1})/(4B^2)$. Convexity of $x\mapsto e^{-x/(4B^2)}$ implies the chord bound $e^{-x/(4B^2)}\le1-cx$ on $[0,4B^2]$. Thus
\[
\mathbb E_m[e^{-X_m/(4B^2)}]\le1-cv_m\le e^{-cv_m}.
\]
It follows that $\exp(c\sum_{m\le j}v_m-\sum_{m\le j}X_m/(4B^2))$ is another nonnegative supermartingale starting at one. A second application of Ville's inequality gives, with additional failure probability at most $e^{-\Lambda}$,
\begin{align}
\sum_{m=1}^jv_m
&\le\frac{1}{1-e^{-1}}\sum_{m=1}^jX_m
+\frac{4B^2\Lambda}{1-e^{-1}}
\nonumber\\
&\le2\sum_{m=1}^jX_m+7B^2\Lambda.
\label{eq:gf-error-transfer}
\end{align}
On the intersection of the two events, \eqref{eq:gf-sampled-error}, \eqref{eq:gf-error-transfer}, and \eqref{eq:gf-Bkappa} yield
\[
\sum_{m=1}^jv_m
\le\frac{8\mathcal R_{\log}(n)}{\kappa}
+\frac{4\Lambda}{\kappa^2}+7B^2\Lambda
\le\frac{8\mathcal R_{\log}(n)}{\kappa}
+\frac{5\Lambda}{\kappa^2}=E.
\]
The total failure probability is at most $2e^{-\Lambda}=2\delta'/(3n)\le2\delta'/3$. No independence between the chosen pairs across rounds is required.
\end{proof}

\subsection{An epoch envelope including fallback comparisons}
\label{app:gf-epoch}

At each round, the branch is chosen before sampling. A candidate displays two independent draws from $p_m$; a fallback displays the baseline and one draw from $p_m$. Define
\begin{align}
r_m^\circ=
\begin{cases}
R_m,&\text{candidate},\\
f(\x_m^\star)-f(\y_m),&\text{fallback}.
\end{cases}
\label{eq:gf-auxiliary}
\end{align}
Thus $0\le r_m^\circ\le B$. This quantity is used only in the analysis.

\begin{lemm}[\textbf{Uniform epoch envelope}]
\label{lem:gf-epoch}
With $E,\Lambda$ from \eqref{eq:gf-local-parameters}, use $\gamma_m=\sqrt{8Km/E}$ and define
\begin{align}
C(n,\delta'):=\frac{3}{\sqrt2}\sqrt{KE}+B\sqrt{2\Lambda}.
\label{eq:gf-local-C}
\end{align}
For any predictable sequence of candidate/fallback decisions, with probability at least $1-\delta'$, simultaneously for every $j\le n$,
\begin{align}
\sum_{m=1}^j r_m^\circ\le C(n,\delta')\sqrt j.
\label{eq:gf-epoch-bound}
\end{align}
\end{lemm}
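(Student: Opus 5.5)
The plan is to bound the conditional expected auxiliary regret on each round by the fixed-partner IGW inequality. These bounds are then summed using the step sizes $\gamma_m$ and Lemma~\ref{lem:gf-errors}, and the sampling noise in $r_m^\circ$ is handled by a martingale bound uniform over $j\le n$.

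\textbf{Step 1: one-round bound.} The goal is
\[
\mathbb E_m[r_m^\circ]\le \frac{2K}{\gamma_m}+\frac{\gamma_m}{4}v_m,
\qquad v_m=\mathbb E_m[e_m^2].
\]
The two branches are handled separately.
\begin{itemize}
\item \emph{Fallback round.} Here $\y_m\sim p_m$, and Lemma~\ref{lem:gf-anchor} is applied with $\mathbf u=\x_{b_m}$. Since $\varepsilon(\y,\x_b)^2=\varepsilon(\x_b,\y)^2$, the error term is exactly $v_m$ for the displayed pair.
\item \emph{Candidate round.} Here $R_m=\tfrac12[\Delta(\x_{I})+\Delta(\x_{J})]$ with $I,J\sim p_m$ independent. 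For the first gap, condition on $J$ and apply Lemma~\ref{lem:gf-anchor} with $\mathbf u=\x_{J}$. For the second, condition on $I$ and use $\mathbf u=\x_{I}$. Independence makes $p_m$ the conditional law in each case. Averaging the two bounds gives error terms $\mathbb E\,\varepsilon(\x_I,\x_J)^2=v_m$, by the symmetry of $\varepsilon^2$.
\end{itemize}
This is the step that needs care. The conditioning must make $p_m$ the exact sampling law, which holds because the branch is fixed before sampling. The error term must also be the one for the comparison actually fed to the oracle.

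\textbf{Step 2: summing.} Since $\gamma_m=\sqrt{8Km/E}$, we have $2K/\gamma_m=\sqrt{KE/(2m)}$. Using $\sum_{m\le j}m^{-1/2}\le 2\sqrt j$ gives
\[
\sum_{m\le j}\frac{2K}{\gamma_m}\le \sqrt{2KE}\,\sqrt j .
\]
Because $\gamma_m$ is increasing in $m$, we have $\gamma_m\le\gamma_j$ for $m\le j$. On the event of Lemma~\ref{lem:gf-errors}, which has probability at least $1-2\delta'/3$ and holds uniformly in $j$,
\[
\sum_{m\le j}\frac{\gamma_m}{4}v_m\le \frac{\gamma_j}{4}E=\sqrt{\tfrac{KE}{2}}\,\sqrt j .
\]
Adding the two pieces gives
\[
\sum_{m\le j}\mathbb E_m[r_m^\circ]\le\Big(\sqrt2+\tfrac1{\sqrt2}\Big)\sqrt{KEj}=\tfrac{3}{\sqrt2}\sqrt{KE}\,\sqrt j .
\]

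\textbf{Step 3: concentration.} The increments $D_m:=r_m^\circ-\mathbb E_m[r_m^\circ]$ form a martingale difference sequence. Since $r_m^\circ\in[0,B]$, each $D_m$ lies in an interval of length $B$. For each fixed $j\le n$, Azuma--Hoeffding gives
\[
\sum_{m\le j}D_m\le B\sqrt{\tfrac j2\log\tfrac{3n}{\delta'}}\le B\sqrt{2\Lambda}\,\sqrt j
\]
except with probability $\delta'/(3n)$. A union bound over $j\le n$ costs at most $\delta'/3$ in total.

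Intersecting this event with that of Lemma~\ref{lem:gf-errors} gives failure probability at most $\delta'$. On the intersection, $\sum_{m\le j}r_m^\circ\le C(n,\delta')\sqrt j$ holds for all $j\le n$ simultaneously, with $C(n,\delta')$ as in~\eqref{eq:gf-local-C}.
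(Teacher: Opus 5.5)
Your proposal is correct and follows essentially the same route as the paper. Both apply the fixed-partner IGW inequality on each branch, with the baseline as partner on fallbacks and the independent other draw as partner on candidates; both sum using monotonicity of $\gamma_m$ and Lemma~\ref{lem:gf-errors}; and both finish with an Azuma bound and a union over $j\le n$ that costs $\delta'/3$. The only cosmetic difference is that you use the range-based Hoeffding--Azuma inequality, which gives the sharper deviation $B\sqrt{j\Lambda/2}$ before you relax it to $B\sqrt{2j\Lambda}$, whereas the paper uses $|r_m^\circ-\mathbb E_m[r_m^\circ]|\le B$ directly.
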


\begin{proof}
On a candidate round, independence and identical distribution of the two draws give $\mathbb E_m[R_m]=\mathbb E_{a\sim p_m}\Delta_m(\x_{m,a})$. Average Lemma~\ref{lem:gf-anchor} over an independent partner from $p_m$ to obtain
\[
\mathbb E_m[r_m^\circ]
\le\frac{2K}{\gamma_m}+\frac{\gamma_m}{4}v_m.
\]
On a fallback round, apply the same lemma with the baseline as the fixed partner. The squared error of the observed comparison equals the squared error with the pair reversed, so this inequality again holds, now with $r_m^\circ=\Delta_m(\y_m)$. The error $v_m$ is always that of the comparison actually displayed.

On the event of Lemma~\ref{lem:gf-errors}, monotonicity of $\gamma_m$ and $\sum_{m=1}^j m^{-1/2}\le2\sqrt j$ imply
\begin{align}
\sum_{m=1}^j\mathbb E_m[r_m^\circ]
&\le2K\sum_{m=1}^j\frac1{\gamma_m}
+\frac{\gamma_j}{4}\sum_{m=1}^jv_m
\nonumber\\
&\le\sqrt{2KjE}+\frac1{\sqrt2}\sqrt{KjE}
=\frac3{\sqrt2}\sqrt{KjE}.
\label{eq:gf-conditional-regret}
\end{align}
The differences $r_m^\circ-\mathbb E_m[r_m^\circ]$ have conditional mean zero and absolute value at most $B$. For each $j$, Azuma's inequality bounds the probability that their sum exceeds $B\sqrt{2j\Lambda}$ by $e^{-\Lambda}$. A union bound over $j\le n$ gives additional failure probability at most $ne^{-\Lambda}=\delta'/3$. Combining with Lemma~\ref{lem:gf-errors} proves \eqref{eq:gf-epoch-bound}.
\end{proof}

\begin{lemm}[\textbf{Envelope constants}]
\label{lem:gf-constants}
The constant in \eqref{eq:gf-local-C} satisfies $C(n,\delta')\le\tfrac94\sqrt{KE}$.
\end{lemm}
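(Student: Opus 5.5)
We need to show $C(n,\delta')=\tfrac{3}{\sqrt2}\sqrt{KE}+B\sqrt{2\Lambda}\le\tfrac94\sqrt{KE}$. Since $\tfrac{3}{\sqrt2}\approx 2.1213$ and $\tfrac94=2.25$, it suffices to show
\[
B\sqrt{2\Lambda}\le\Big(\tfrac94-\tfrac{3}{\sqrt2}\Big)\sqrt{KE},
\]
where the gap constant is $c_0:=\tfrac94-\tfrac3{\sqrt2}\ge 0.128$. The plan is to discard the nonnegative oracle term in $E$ and use only $E\ge 5\Lambda/\kappa^2$ together with $K\ge2$. This gives $\sqrt{KE}\ge\sqrt{10\Lambda}/\kappa$, so it is enough to check $B\sqrt{2\Lambda}\le c_0\sqrt{10\Lambda}/\kappa$, i.e.\ $B\kappa\le c_0\sqrt5$. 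The factor $\sqrt\Lambda$ cancels, and $\Lambda>0$ because $3n/\delta'>1$.

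The remaining step is the elementary bound \eqref{eq:gf-Bkappa}, $B\kappa\le\tfrac14$. Since $c_0\sqrt5\ge0.128\times2.236\approx0.286>\tfrac14$, the inequality holds. Chaining the steps gives $B\sqrt{2\Lambda}\le\tfrac14\sqrt{2\Lambda}\,\kappa^{-1}\le c_0\sqrt{KE}$, which yields the claim after adding $\tfrac{3}{\sqrt2}\sqrt{KE}$ to both sides.

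The only delicate point is the numerical margin: $c_0\sqrt{5}$ exceeds $\tfrac14$ only modestly. I would verify it with rational bounds, using $\sqrt2\ge1.414$ to get $\tfrac3{\sqrt2}\le2.1217$, hence $c_0\ge0.1283$, and $\sqrt5\ge2.236$, so the product is at least $0.286$. If $K\ge2$ were not available, the slack would fail: with $K=1$ one would need $c_0\sqrt{5/2}\approx0.203\ge\tfrac14$, which is false. The argument therefore relies on the standing assumption $K\ge2$ of this appendix, and I would state explicitly where it is used.
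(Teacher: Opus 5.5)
Your proposal is correct and follows essentially the same route as the paper: drop the nonnegative oracle term, then combine $E\ge5\Lambda/\kappa^2$, $K\ge2$ and $B\kappa\le\tfrac14$ to bound $B\sqrt{2\Lambda}$ by a small multiple of $\sqrt{KE}$. The only difference is bookkeeping. The paper shows $B\sqrt{2\Lambda}/\sqrt{KE}\le 1/(4\sqrt5)<1/8$ and then uses $3/\sqrt2+1/8<9/4$, whereas you compare directly against the slack $9/4-3/\sqrt2$; the two are equivalent.
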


\begin{proof}
Since $E\ge5\Lambda/\kappa^2$, $K\ge2$, and $B\kappa\le1/4$,
\[
\frac{B\sqrt{2\Lambda}}{\sqrt{KE}}
\le B\kappa\sqrt{\frac{2}{5K}}
\le\frac{1}{4\sqrt5}<\frac18.
\]
Use $3/\sqrt2+1/8<9/4$ in \eqref{eq:gf-local-C}.
\end{proof}

\subsection{Combining epochs and controlling growth}
\label{app:gf-anytime}

Epoch $e$ comprises rounds $2^e,\ldots,2^{e+1}-1$. Let $\mathcal E_e$ be its simultaneous event from Lemma~\ref{lem:gf-epoch}, with length $n_e=2^e$ and failure budget $\delta_e=\delta/[(e+1)(e+2)]$. The oracle is restarted, while the credit total is preserved. The conditional guarantees of that lemma apply despite dependence on all previous epochs. Since $\sum_{e\ge0}\delta_e=\delta$,
\begin{align}
\mathbb P(\mathcal E_{\mathrm{IGW}})\ge1-\delta,
\qquad
\mathcal E_{\mathrm{IGW}}:=\bigcap_{e\ge0}\mathcal E_e.
\label{eq:gf-good-event}
\end{align}
Summing the completed-epoch bounds and the current partial-epoch bound gives, on this event,
\begin{align}
\sum_{i=1}^t r_i^\circ\le G_{\mathrm{IGW}}(t)
\qquad\text{for every }t\ge1.
\label{eq:gf-global-auxiliary}
\end{align}
This explains why fallback labels enter the oracle: their errors control their partners' gaps through the same inequality as on candidate rounds.

To make the growth rate explicit, define for $t\ge1$
\begin{align}
\mathcal L(t)&:=\log\frac{3t(1+\log_2t)(2+\log_2t)}{\delta},
&\overline E(t)&:=\frac{8\mathcal R_{\log}(t)}{\kappa}
+\frac{5\mathcal L(t)}{\kappa^2},
\label{eq:gf-growth-functions}\\
c_G&:=\frac{9(2+\sqrt2)}4.
\nonumber
\end{align}
Both $E_e$ and $C_e$ are nondecreasing in $e$. For $e=e(t)$, we have $2^e\le t$, $E_e\le\overline E(t)$, and
\begin{align}
G_{\mathrm{IGW}}(t)
&\le C_e\left(\sum_{j<e}\sqrt{2^j}+\sqrt{m(t)}\right)
\nonumber\\
&\le(2+\sqrt2)C_e\sqrt t
\le c_G\sqrt{Kt\overline E(t)}.
\label{eq:gf-growth}
\end{align}
Here $\sum_{j<e}2^{j/2}\le(1+\sqrt2)2^{e/2}$, and the last step uses Lemma~\ref{lem:gf-constants}. The envelope is nondecreasing, including at epoch boundaries. Since $\mathcal R_{\log}(t)=o(t)$ and $\mathcal L(t)=O(\log(t/\delta)+\log\log(t+2))$, \eqref{eq:gf-growth} implies $G_{\mathrm{IGW}}(t)=o(t)$.

\subsection{Safety, fallback count, and regret}
\label{app:gf-safety}

Let $\mathcal A_t$ and $\mathcal B_t$ denote the candidate and fallback rounds through $t$, and let $N_t=|\mathcal B_t|$. For the uniform bound on the total number of fallbacks, define
\begin{align}
N_{\mathrm{IGW}}
:=\sup\bigl\{n\in\mathbb N_{\ge1}:4\alpha\underline{s}\,n<G_{\mathrm{IGW}}(n)\bigr\},
\label{eq:gf-total-fallback}
\end{align}
with $\sup\emptyset:=0$. This quantity is finite because $G_{\mathrm{IGW}}(n)=o(n)$. It depends on the problem parameters and the oracle-regret function, but not on the stopping horizon, and is not needed by the algorithm. The credit total satisfies
\begin{align}
Q_t=\sum_{i\in\mathcal A_t}\left[\tfrac12-(1-\alpha)s_i^b\right]
+\sum_{i\in\mathcal B_t}\alpha s_i^b.
\label{eq:gf-credit-identity}
\end{align}

\begin{proof}[Proof of Theorem~\ref{thm:general}]
Work on $\mathcal E_{\mathrm{IGW}}$ from \eqref{eq:gf-good-event}.

\emph{Safety.} Put $g_{t,\x}=f(\x_t^\star)-f(\x_t)$ and $g_{t,\y}=f(\x_t^\star)-f(\y_t)$. Because $R_t=(g_{t,\x}+g_{t,\y})/2$,
\begin{align}
\max\{s_t(\x_t),s_t(\y_t)\}
&=\sigma(-\min\{g_{t,\x},g_{t,\y}\})
\ge\sigma(-R_t)\ge\tfrac12-\tfrac14R_t,
\label{eq:gf-score-lower}
\end{align}
where the final inequality follows from $\sigma'\le1/4$. A fallback contains the baseline, so its true surplus is at least $\alpha s_t^b$. Therefore the cumulative true surplus satisfies
\begin{align}
S_t&:=\sum_{i=1}^t\left[\max\{s_i(\x_i),s_i(\y_i)\}-(1-\alpha)s_i^b\right]
\nonumber\\
&\ge Q_t-\tfrac14\sum_{i\in\mathcal A_t}R_i.
\label{eq:gf-surplus-lower}
\end{align}
Let $\tau_t=\max(\mathcal A_t\cup\{0\})$ be the last candidate time, using zero if there has been no candidate. Nonnegativity of $r_i^\circ$ and \eqref{eq:gf-global-auxiliary} give
\[
\sum_{i\in\mathcal A_t}R_i
\le\sum_{i=1}^{\tau_t}r_i^\circ
\le G_{\mathrm{IGW}}(\tau_t),
\]
where empty sums are zero and $G_{\mathrm{IGW}}(0)=0$. Thus $\Lambda_t:=Q_t-G_{\mathrm{IGW}}(\tau_t)/4$ is a lower certificate for $S_t$. Initially $\Lambda_0=0$. On a candidate round, $\tau_t=t$ and the acceptance test ensures $\Lambda_t\ge0$. On a fallback round, $\tau_t=\tau_{t-1}$ and $\Lambda_t=\Lambda_{t-1}+\alpha s_t^b\ge0$. This remains true across oracle restarts because the credits are preserved. Hence $S_t\ge0$ at every prefix.

\emph{Fallbacks.} Since $s_t^b\le1/2$, the candidate credit satisfies $h_t\ge\alpha/2\ge\alpha\underline{s}$. The fallback credit satisfies $\alpha s_t^b\ge\alpha\underline{s}$. Consequently, $Q_{t-1}\ge\alpha\underline{s}(t-1)$, and rejection at time $t$ implies
\begin{align}
\alpha\underline{s}\,t
\le Q_{t-1}+h_t<G_{\mathrm{IGW}}(t)/4.
\label{eq:gf-rejection}
\end{align}
Every fallback time therefore belongs to the finite set in \eqref{eq:gf-total-fallback}. It follows that $N_T\le\min\{T,N_{\mathrm{IGW}}\}$ for every $T$, and the total number of fallbacks is finite.

For an explicit bound in terms of the oracle regret at $T$, suppose $N_T>0$ and let $\tau=\max\mathcal B_T$ be the last fallback through time $T$. Combining \eqref{eq:gf-rejection} with \eqref{eq:gf-growth} yields
\[
4\alpha\underline{s}\,\tau<G_{\mathrm{IGW}}(\tau)
\le c_G\sqrt{K\tau\overline E(\tau)}.
\]
Squaring and dividing by $\tau>0$ gives
\[
\tau<
\frac{c_G^2K\overline E(\tau)}{16\alpha^2\underline{s}^{\,2}}
\le
\frac{c_G^2K\overline E(T)}{16\alpha^2\underline{s}^{\,2}},
\]
where the second inequality uses $\tau\le T$ and monotonicity of $\overline E$. Since $N_T\le\tau$, we obtain
\begin{align}
N_T\le\min\left\{
T,\,
N_{\mathrm{IGW}},\,
\frac{c_G^2K\overline E(T)}{16\alpha^2\underline{s}^{\,2}}
\right\}.
\label{eq:gf-explicit-fallbacks}
\end{align}
The same bound is immediate when $N_T=0$.

\emph{Regret.} On candidates, $R_t=r_t^\circ$. On fallbacks,
\[
R_t=\tfrac12\left[\Delta_{t,b_t}+\Delta_t(\y_t)\right]
\le\tfrac12(\Delta_h+r_t^\circ).
\]
Summing and using \eqref{eq:gf-global-auxiliary} gives
\begin{align}
\Reg(T)
&\le\sum_{t\in\mathcal A_T}r_t^\circ
+\tfrac12\sum_{t\in\mathcal B_T}r_t^\circ
+\tfrac12\Delta_hN_T
\nonumber\\
&\le G_{\mathrm{IGW}}(T)+\tfrac12\Delta_hN_T.
\label{eq:gf-regret-proof}
\end{align}
Equations~\eqref{eq:gf-growth} and \eqref{eq:gf-explicit-fallbacks} therefore imply the explicit bound
\begin{align}
\Reg(T)
\le c_G\sqrt{KT\overline E(T)}
+\frac{c_G^2\Delta_hK\overline E(T)}{32\alpha^2\underline{s}^{\,2}}.
\label{eq:gf-explicit-oracle-rate}
\end{align}
Substituting $\overline E(T)=8\mathcal R_{\log}(T)/\kappa+5\mathcal L(T)/\kappa^2$ from \eqref{eq:gf-growth-functions} and suppressing logarithmic factors in $T$ and $1/\delta$ gives
\[
\Reg(T)
=\tilde\cO\!\left(
\sqrt{KT\left[\frac{\mathcal R_{\log}(T)}{\kappa}+\frac{1}{\kappa^2}\right]}
+\frac{\Delta_hK}{\alpha^2\underline{s}^{\,2}}
\left[\frac{\mathcal R_{\log}(T)}{\kappa}+\frac{1}{\kappa^2}\right]
\right),
\]
which is \eqref{eq:gf-main}. In particular, $\overline E(T)=o(T)$ implies $\Reg(T)=o(T)$ directly from \eqref{eq:gf-explicit-oracle-rate}. We also retain the uniform bound
\begin{align}
\Reg(T)\le G_{\mathrm{IGW}}(T)+\tfrac12\Delta_hN_{\mathrm{IGW}},
\label{eq:gf-uniform-regret}
\end{align}
which is useful when a growth bound on $\mathcal R_{\log}$ allows an explicit horizon-independent estimate of $N_{\mathrm{IGW}}$. All statements hold simultaneously for every $T$ on the same event of probability at least $1-\delta$.
\end{proof}

\subsection{A finite-class oracle and explicit specialization}
\label{app:gf-finite}

Suppose $1\le|\mathcal F|<\infty$ and each function has range diameter at most $B$. Within each epoch initialize $w_1(g)=1/|\mathcal F|$, let $\eta=e^{-B}$, and predict and update according to
\begin{align}
\hat f_m(\x)&=\sum_{g\in\mathcal F}w_m(g)g(\x),
\label{eq:gf-ew-predict}\\
w_{m+1}(g)&=
\frac{w_m(g)e^{-\eta\ell_m(g)}}
{\sum_{h\in\mathcal F}w_m(h)e^{-\eta\ell_m(h)}}.
\label{eq:gf-ew-update}
\end{align}
The common weights produce one predicted utility function, and $|\hat f_m(\x)-\hat f_m(\y)|\le\sum_gw_m(g)B=B$.

\begin{prop}[\textbf{Finite-class logistic oracle}]
\label{prop:gf-finite-oracle}
The oracle in \eqref{eq:gf-ew-predict}--\eqref{eq:gf-ew-update} satisfies Assumption~\ref{asmp:gf-oracle} with $\mathcal R_{\log}(n)=e^B\log|\mathcal F|$ for every $n$.
\end{prop}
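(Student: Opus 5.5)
The argument is the exponential-weights (aggregating) analysis, using that the logistic loss is exp-concave in the margin on $[-B,B]$. Write each loss as a function of the margin, $\ell_m(g)=\phi_m(d)$ with $d=g(\x_m)-g(\y_m)$ and $\phi_m(d)=\log(1+e^{d})-o_m d$. The prediction $\hat f_m=\sum_g w_m(g)g$ is linear in $g$, so its margin is the mixture margin $\hat d_m=\sum_g w_m(g)d_m(g)$, and $\ell_m(\hat f_m)=\phi_m(\hat d_m)$.

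The key step is to check that $\phi_m$ is $\eta$-exp-concave on $[-B,B]$ with $\eta=e^{-B}$. It suffices to have $\phi_m''\ge\eta(\phi_m')^2$ there. We have $\phi_m''=\sigma(d)(1-\sigma(d))$ and $\phi_m'=\sigma(d)-o_m$. For $o_m=1$ the condition becomes $\sigma(1-\sigma)\ge\eta(1-\sigma)^2$, that is, $\sigma(d)\ge\eta(1-\sigma(d))$, or $e^{d}\ge\eta$. For $o_m=0$ it becomes $1-\sigma\ge\eta\sigma$, or $e^{-d}\ge\eta$. Both hold when $|d|\le B$ and $\eta=e^{-B}$. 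I expect this verification, in particular confirming that every margin involved, including the mixture margin, stays in $[-B,B]$, to be the only real obstacle. It is handled by the range assumption on $\mathcal F$ together with convexity of $[-B,B]$.

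Given exp-concavity, Jensen applied to the concave function $e^{-\eta\phi_m}$ gives
\[
e^{-\eta\ell_m(\hat f_m)}=e^{-\eta\phi_m(\sum_g w_m(g)d_m(g))}\ge\sum_g w_m(g)e^{-\eta\ell_m(g)}.
\]
Hence $\ell_m(\hat f_m)\le-\eta^{-1}\log\sum_g w_m(g)e^{-\eta\ell_m(g)}$. By the update \eqref{eq:gf-ew-update}, the right side equals $-\eta^{-1}\log(W_{m+1}/W_m)$, where $W_m=\sum_g|\mathcal F|^{-1}e^{-\eta\sum_{i<m}\ell_i(g)}$ is the unnormalized potential. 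Telescoping over $m=1,\dots,n$ yields
\[
\sum_m\ell_m(\hat f_m)\le-\eta^{-1}\log W_{n+1}\le\min_g\sum_m\ell_m(g)+\eta^{-1}\log|\mathcal F|,
\]
where the last step keeps only the single best term in $W_{n+1}$.

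This bound holds deterministically on every sequence of pairs and labels, so it also holds on every adaptive sequence. It holds for every $n$ and for each fresh instance, since the weights are reinitialized at each epoch. The resulting regret $e^{B}\log|\mathcal F|$ is constant in $n$, hence nonnegative, nondecreasing, and $o(n)$. The predicted range bound was already noted above. This verifies Assumption~\ref{asmp:gf-oracle}.
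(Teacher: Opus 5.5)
Your proposal is correct and follows the paper's proof essentially step for step. It uses the same exp-concavity check $\phi''\ge\eta(\phi')^2$; the paper phrases this as the ratio $\partial_d^2\ell/(\partial_d\ell)^2\in\{e^{d},e^{-d}\}$ being at least $e^{-B}$. It then applies Jensen's inequality to the mixture margin, telescopes the normalizers (your $W_{m+1}/W_m$ is the paper's $Z_m$), and keeps the single best term to obtain the deterministic, prefix-wise bound $\eta^{-1}\log|\mathcal F|=e^{B}\log|\mathcal F|$.
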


\begin{proof}
For $\ell(d,o)=\log(1+e^d)-od$,
\[
\frac{\partial_d^2\ell(d,o)}{(\partial_d\ell(d,o))^2}
=
\begin{cases}
e^d,&o=1,\\
e^{-d},&o=0,
\end{cases}
\qquad |d|\le B.
\]
This ratio is at least $e^{-B}=\eta$, so $d\mapsto e^{-\eta\ell(d,o)}$ is concave on $[-B,B]$. Because the predicted margin is the weighted average of the functions' margins, Jensen's inequality yields
\[
e^{-\eta\ell_m(\hat f_m)}\ge Z_m,
\qquad
Z_m:=\sum_gw_m(g)e^{-\eta\ell_m(g)}.
\]
The weight update gives $\prod_{m=1}^nZ_m=|\mathcal F|^{-1}\sum_g\exp(-\eta\sum_{m=1}^n\ell_m(g))$. Taking logarithms and comparing the sum with any one of its terms therefore gives
\[
\sum_{m=1}^n\ell_m(\hat f_m)
\le\min_{g\in\mathcal F}\sum_{m=1}^n\ell_m(g)
+\frac{\log|\mathcal F|}{\eta}.
\]
The bound is deterministic for every sequence and every prefix, and so permits adaptive contexts and comparisons. The range and timing requirements also hold by construction.
\end{proof}

For an explicit fallback bound, let
\begin{align}
a_0&:=\frac{c_G^2K}{16\alpha^2\underline s^{\,2}}
\left[
\frac{8e^B\log|\mathcal F|}{\kappa}
+\frac{5}{\kappa^2}\log\frac{12}{\delta}
\right],
\label{eq:gf-finite-a0}\\
a_1&:=\frac{15c_G^2K}{16\alpha^2\underline s^{\,2}\kappa^2}.
\label{eq:gf-finite-a1}
\end{align}
For $t\ge1$, $1+\log_2t\le2t$ and $2+\log_2t\le2t$, so $\mathcal L(t)\le\log(12/\delta)+3\log t$. Combining \eqref{eq:gf-growth} with the defining inequality for $N_{\mathrm{IGW}}$ shows that each integer in that set satisfies $t<a_0+a_1\log t$. Concavity of the logarithm gives $\log t\le t/(2a_1)+\log(2a_1)-1$, whence
\begin{align}
N_{\mathrm{IGW}}
\le2a_0+2a_1\log\bigl(\max\{e,2a_1\}\bigr).
\label{eq:gf-finite-count}
\end{align}
Combining \eqref{eq:gf-finite-count} with \eqref{eq:gf-uniform-regret}, and writing $A_{\mathcal F}:=e^B\log|\mathcal F|/\kappa+1/\kappa^2$, gives
\begin{align}
\Reg(T)
=\tilde\cO\!\left(
\sqrt{KA_{\mathcal F}T}
+\frac{\Delta_hKA_{\mathcal F}}{\alpha^2\underline s^{\,2}}
\right).
\label{eq:gf-finite-rate}
\end{align}
Here the suppressed factors are logarithmic in $T$, $1/\delta$, $K$, $A_{\mathcal F}$, and $1/(\alpha\underline s)$. The additive contribution uses \eqref{eq:gf-finite-count} and can be bounded independently of the stopping horizon. For $B=1$, $\kappa$ is a numerical constant and $A_{\mathcal F}=\cO(\log|\mathcal F|+1)$. Computing $p_t$ takes $\cO(K)$ arithmetic after the $K$ score evaluations; the oracle's own computational cost is separate.

\subsection{Relaxing baseline-score knowledge}
\label{app:gf-baseline-bounds}

Suppose $L_t,U_t$ are available before pair sampling and always satisfy $\underline s\le L_t\le U_t\le1/2$. Assume that, with probability at least $1-\delta_b$, they simultaneously cover the baseline scores at every round: $L_t\le s_t^b\le U_t$. Replace the candidate credit and fallback credit, respectively, by
\begin{align}
h_t=\tfrac12-(1-\alpha)U_t,
\qquad
\text{fallback credit}=\alpha L_t.
\label{eq:gf-bound-credits}
\end{align}
Use these credits in the same acceptance test and updates. On the coverage event, a candidate's true surplus is at least $h_t-R_t/4$, and a fallback's true surplus is at least $\alpha L_t$. Thus the same safety argument applies with these credits. Moreover, $h_t\ge\alpha/2$ and $\alpha L_t\ge\alpha\underline s$, so the fallback-count argument and all regret bounds are unchanged. The overall probability becomes at least $1-\delta-\delta_b$, without an independence requirement. Deterministic bounds have $\delta_b=0$; in particular, $(L_t,U_t)=(\underline s,1/2)$ removes exact baseline-score knowledge entirely while retaining the exact guarantees of Theorem~\ref{thm:general}.

\subsection{A known-horizon version without oracle restarts}
\label{app:gf-fixed-horizon}

If a horizon $T$ is given, use one oracle instance, the parameters in \eqref{eq:gf-local-parameters} with $(n,\delta')=(T,\delta)$, $\gamma_t=\sqrt{8Kt/E}$, and the envelope $G_T(t)=C(T,\delta)\sqrt t$. The same test and sampling rules then satisfy, with probability at least $1-\delta$, simultaneously for every $t\le T$, the max-score constraint and
\begin{align}
N_t&\le\min\left\{
t,\frac{C(T,\delta)^2}{16\alpha^2\underline s^{\,2}}
\right\},
\label{eq:gf-fixed-count}\\
\Reg(t)&\le C(T,\delta)\sqrt t+\tfrac12\Delta_hN_t.
\label{eq:gf-fixed-regret}
\end{align}
Indeed, Lemma~\ref{lem:gf-epoch} gives the required envelope on the entire run, and \eqref{eq:gf-rejection} forces every fallback time to be smaller than $C(T,\delta)^2/(16\alpha^2\underline s^{\,2})$. The safety and regret arguments are otherwise identical. Lemma~\ref{lem:gf-constants} yields the explicit terminal bound
\begin{align}
\Reg(T)\le\frac94\sqrt{KTE}
+\frac{81\Delta_hKE}{512\alpha^2\underline s^{\,2}}.
\label{eq:gf-fixed-explicit}
\end{align}
This version uses the known horizon in its parameters and avoids oracle restarts. The epoch construction additionally admits the uniform total-fallback bound \eqref{eq:gf-total-fallback}.

\end{document}